\newtheorem{theorem}{Theorem}[section]
\newtheorem{lemma}[theorem]{Lemma}
\tikzset{
    rect/.style={rectangle, rounded corners, minimum width=1cm, minimum height=1cm,text centered, draw=black, fill=orange!30},
    every label/.style={draw=none},
}
\tikzstyle{arrow} = [thick,<->,>=stealth]
\newcommand{\ours}{{TransRate }}
\newcommand{\trR}{{\text{TrR}}}
\DeclareMathOperator*{\argmax}{arg\,max}
\DeclareMathOperator*{\logdet}{log\,det}
\newtheorem{defn}{Definition}
\newtheorem{prop}{Proposition}
\newcommand{\kai}[1]{{\color{black}#1}}
\newcommand{\yingicml}[1]{{\color{black}#1}}
\icmltitlerunning{Frustratingly Easy Transferability Estimation}
\begin{document}

% \onecolumn
\twocolumn[
\icmltitle{Frustratingly Easy Transferability Estimation}

% It is OKAY to include author information, even for blind
% submissions: the style file will automatically remove it for you
% unless you've provided the [accepted] option to the icml2022
% package.

% List of affiliations: The first argument should be a (short)
% identifier you will use later to specify author affiliations
% Academic affiliations should list Department, University, City, Region, Country
% Industry affiliations should list Company, City, Region, Country

% You can specify symbols, otherwise they are numbered in order.
% Ideally, you should not use this facility. Affiliations will be numbered
% in order of appearance and this is the preferred way.
\icmlsetsymbol{equal}{*}

\begin{icmlauthorlist}
\icmlauthor{Long-Kai Huang}{tencent}
\icmlauthor{Junzhou Huang}{tencent}
\icmlauthor{Yu Rong}{tencent}
\icmlauthor{Qiang Yang}{hkust}
\icmlauthor{Ying Wei}{cityu}
\end{icmlauthorlist}

\icmlaffiliation{tencent}{Tencent AI Lab}
\icmlaffiliation{hkust}{Hong Kong University of Science and Technology}
\icmlaffiliation{cityu}{City University of Hong Kong}

\icmlcorrespondingauthor{Ying Wei}{yingwei@cityu.edu.hk}

% You may provide any keywords that you
% find helpful for describing your paper; these are used to populate
% the "keywords" metadata in the PDF but will not be shown in the document
\icmlkeywords{Machine Learning, ICML}

\vskip 0.3in
]

\printAffiliationsAndNotice{}

\begin{abstract}
Transferability estimation has been an essential tool in selecting a pre-trained model and the layers %of
\kai{in} it \yingicml{for transfer learning,}
%to transfer, 
so as
to maximize the performance on a target task and prevent negative transfer.
Existing estimation algorithms either require intensive training on target tasks or have difficulties in evaluating the transferability between layers.
\yingicml{To this end,} we propose a simple, efficient, and effective transferability measure named TransRate. %With single pass through %the %target data, 
\kai{Through a single pass over} \yingicml{examples of a target task,}
TransRate measures the transferability as the mutual information between %the 
features of target examples extracted by a pre-trained model and %labels of them.
\kai{their labels.}
We overcome the challenge of efficient mutual information estimation by resorting to coding rate 
that serves as an effective alternative to entropy.
\yingicml{From the perspective of feature representation, the resulting TransRate evaluates both completeness (whether features contain sufficient information of a target task) and compactness (whether features of each class are compact enough for good generalization) of pre-trained features. Theoretically, we have analyzed the close connection of} TransRate %is theoretically analyzed to be closely related 
to the performance after transfer learning. Despite its extraordinary simplicity in 10 lines of codes, 
TransRate performs remarkably well in extensive evaluations on %22
\kai{32} 
pre-trained models and 16 downstream tasks.

\end{abstract}

%%!TEX root = transrate.tex

\section{Introduction}\label{sec:intro}

Transfer learning from standard large datasets (e.g., ImageNet) and corresponding pre-trained models (e.g., ResNet-50) has become a de-facto method for real-world deep learning applications where limited annotated data is %ready to use.
\yingicml{accessible}.
Unfortunately, the performance gain by transfer learning could vary a lot, 
even with the possibility of negative transfer~\cite{pan2009survey, wang2019characterizing,zhang2020overcoming}. 
First, the \emph{relatedness of the source task} where a pre-trained model is trained on to the target task largely dictates the performance gain.
Second, using pre-trained~models in different \emph{architectures} also leads to uneven performance gain\yingicml{,} even for the same pair of source and target tasks.
Figure~\ref{fig:intro_model} tells that ResNet-50 pre-trained on ImageNet contribute\yingicml{s} the most to the target task CIFAR-100, compared to the other %model
architectures.
Finally, the optimal \emph{layers} to transfer %, i.e., ``what to transfer'', 
vary from pair %of tasks 
to pair. 
While higher layers encode more semantic patterns that are specific to source tasks, lower-layer features are more generic~\cite{yosinski2014transferable}.
Especially if a pair of tasks are not sufficiently similar, determining the optimal layers is expected to strike the balance between transferring only lower-layer features (as higher layers specific to a source task may hurt the performance of a target task) and transferring more higher-layer features (as training more higher-layers from scratch requires extensive labeled data).
As shown in Figure~\ref{fig:intro_layer}, not transferring the three highest layers is preferred for training with full target data, though transferring all but
the two highest layers achieves the highest test accuracy with scarce target data. This suggests the following

\textbf{Question:} \emph{Which pre-trained model (possibly trained on different source tasks in a supervised or unsupervised manner) and which layers of it should be transferred to benefit the target task the most?}

This research question drives the design of transferability estimation methods, including computation-intensive~\cite{achille2019task2vec,dwivedi2019representation,song2020depara,zamir2018taskonomy} and computation-efficient ones~\cite{bao2019information,cui2018large,nguyen2020leep,tran2019transferability,you2021logme}.
The pioneering works~\cite{achille2019task2vec,zamir2018taskonomy}
directly follow the definition of transfer learning to measure the transferability, and 
thereby require %running 
fine-tuning on a target task with expensive parameter optimization.
Though their follow-ups~\cite{dwivedi2019representation,song2020depara} alleviate the need of fine-tuning, their prerequisites still include an encoder pre-trained on %the 
target task\yingicml{s}. 
Keeping in mind that the primary goal of a transferability measure is to select a pre-trained model prior to training on a target task, researchers recently turned towards computation-efficient ways.
The transferability is~estimated as the negative conditional entropy between the labels of the two tasks in~\cite{tran2019transferability,nguyen2020leep}. %and~\cite{nguyen2020leep}.
\citet{bao2019information} and~\citet{you2021logme} solved two surrogate optimization problems to estimate the likelihood and marginalized likelihood of labeled target examples, under the assumption that a linear classifier is added on top of the pre-trained model. However, the %computation 
efficiency comes at the price of failing to discriminate %the 
transferability between layers, for %both~
\cite{nguyen2020leep,tran2019transferability} that estimate with labels only and~\cite{bao2019information,you2021logme} that consider transferring 
%\yingicml{transfer}
the penultimate layer only.  

\begin{figure}[t]
\centering
    \subfigure[{\small Test accuracy for pre-training ImageNet with different model architectures.} ]{\label{fig:intro_model}\includegraphics[width=0.22\textwidth]{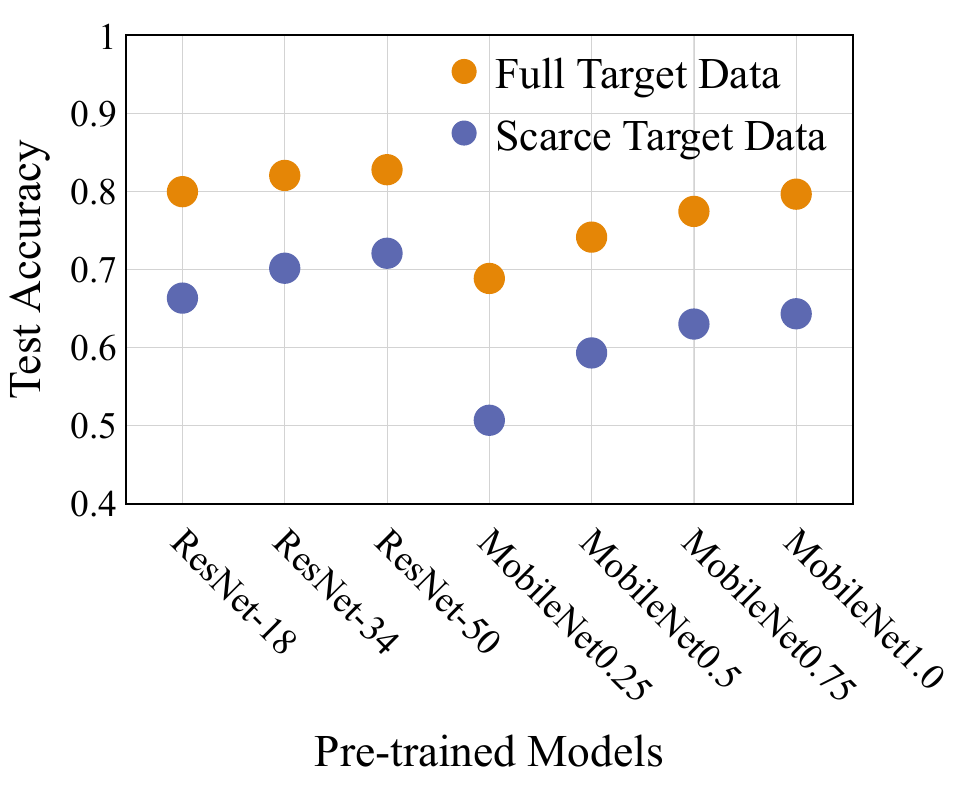} }
    \subfigure[\small Test accuracy for transferring different layers of the pre-trained ResNet34 model.
    ]{\label{fig:intro_layer}\includegraphics[width=0.22\textwidth]{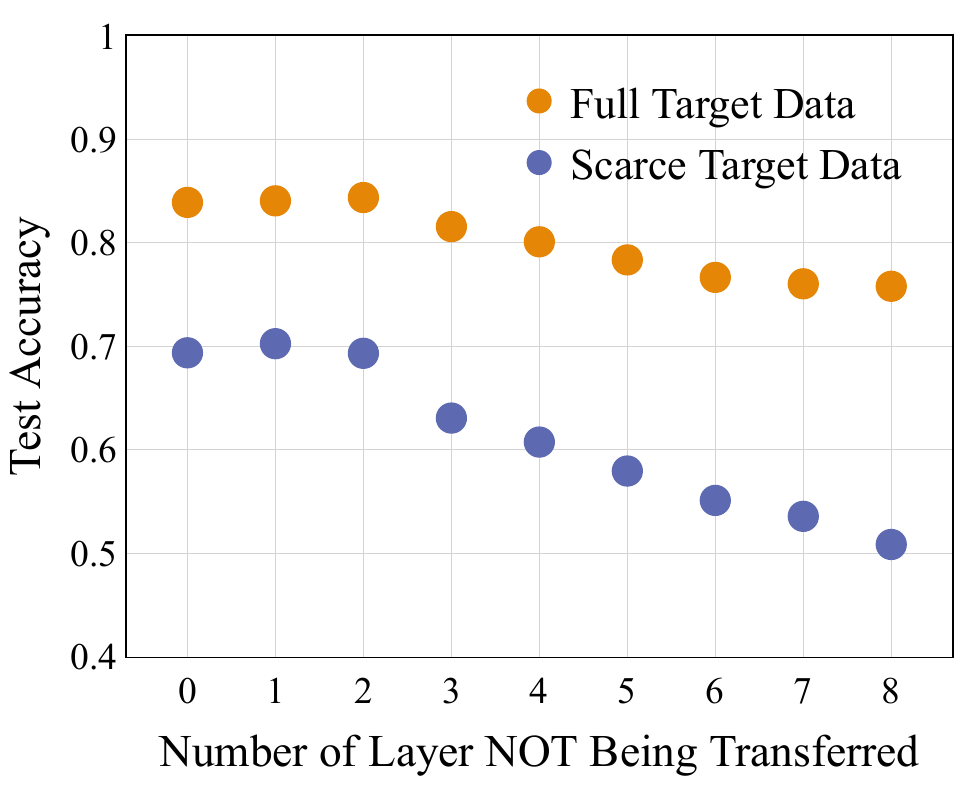} }
	\vspace{-3mm}\caption{\small  Transferring from ImageNet to CIFAR-100. For ``Full Target Data'', all target data 
	are used, while for ``Scarce Target Data'', only 50 target samples per class are used in training.}\vspace{-4mm}
\label{fig:intro}
\end{figure}

We are motivated to pursue a 
% ``frustratingly easy''
\yingicml{computation-efficient}
transferability measure %named TransRate that
%to %combines
\yingicml{without sacrificing the merit of}
%the merits of both 
computation-intensive %and %computation-efficient methods.
\yingicml{methods in comprehensive transferability evaluation, especially between layers. 
Mutual information between features and labels has a strong predictive role in the effectiveness of feature representation, dated back to the decision tree algorithm~\cite{quinlan1986induction} and also evidenced in recent studies~\cite{tishby2015deep}.
Markedly, mutual information varies from layer to layer of features given labels, which makes itself an attractive measure for transferability.
In this paper, we propose to estimate the transferability with the mutual information between labels and features of target examples extracted by a pre-trained model at a specific layer.}
% ``frustratingly easy''
% TransRate estimates 
Though mutual information itself is notoriously challenging to estimate~\cite{hjelm2018learning}, we overcome this obstacle by resorting to the coding rate proposed in~\cite{ma2007segmentation} inspired from the close connection between rate distortion and entropy in information theory~\cite{cover1999elements}.
%Such an estimation 
\yingicml{The resulting estimation named TransRate}
offers the following advantages: 1) it perfectly matches our need in computation efficiency, free of either prohibitively exhaustive discretization~\cite{tishby2015deep} or neural network training~\cite{belghazi2018mutual}; 2) it is well defined for finite examples from a subspace-like distribution, even if the examples are represented in a high dimensional feature space of a pre-trained model.
In a nutshell, TransRate is simple and efficient, as the only computations it requires are (1) making a single forward pass of the model pre-trained on a source task through the target examples to obtain their features at a set of selected layers and (2) calculating the %coding rate.
\yingicml{TransRate}.

Despite being ``frustratingly easy'', TransRate enjoys the following benefits that we would highlight.
\begin{itemize}[leftmargin=0.15in,noitemsep]
\vspace{-10pt}
	\item 
	%Besides selecting a model pre-trained on potentially different tasks in an either supervised or unsupervised way with various model architectures, 
	Tra\yingicml{n}sRate allows to select %the 
	\yingicml{between}
	layers of %the
	\yingicml{a pre-trained}
	model %to transfer.
	\yingicml{for better transfer learning.}
	\item 
    \yingicml{We have theoretically analyzed that TransRate closely aligns} with the performance after transfer learning.
    \item
    \yingicml{A larger value of TransRate is associated with more complete and compact features that are strongly suggestive of better generalization.}
%\vspace{-1pt}
	%\item 
	%TransRate is proved to closely align with the performance after transfer learning, and interpretable in its preference for a pre-trained model that produces complete and compact features.
%\vspace{-1pt}
	\item 
	TransRate offers surprisingly good performance on %evaluation of the transferability 
	\yingicml{transferability comparison}
	between source tasks, \yingicml{between} %model
	architectures, and \yingicml{between} layers. We investigate a total~of %22
	\kai{32} pre-trained models (including supervised, unsupervised, \kai{self-supervised,} convolutional and graph neural networks), 16 downstream tasks (including classification and regression), and %its 
	\yingicml{the}
	insensitivity \yingicml{of TransRate} against the number of %training
	\yingicml{labeled}
	examples % and classes 
	in a target task.
%	.
\end{itemize}

\begin{table*}[t]\caption{Summary of the existing transferability measures and ours.}
\vspace{-0.1in}
\small
\center
\resizebox{0.99\textwidth}{!}{
\begin{tabular}{p{0.24\linewidth} | p{0.11\linewidth}| p{0.11\linewidth}| p{0.11\linewidth}| p{0.18\linewidth} | p{0.13\linewidth} }
\toprule
\makecell[t]{\;\\\;\\Measures} 
& \makecell[t]{Free of\\ Training on \\ Target}
& \makecell[t]{Free of\\ Assessing \\ Source}
& 
\makecell[t]{Free of\\ Optimization}
& 
\makecell[t]{Applicable to \\Unsupervised\\ Pre-trained Models} & 
\makecell[t]{Applicable to\\ Layer Selection}
\\
\midrule
Taskonomy~\cite{zamir2018taskonomy} & \makecell{$\times$} & \makecell{$\times$} & \makecell{$\surd$} & \makecell{$\surd$}  & \makecell{$\surd$}
\\
Task2Vec~\cite{achille2019task2vec} & \makecell{$\times$} & \makecell{$\times$} & \makecell{$\times$} & \makecell{$\surd$} &  \makecell{$\times$}
\\
RSA~\cite{dwivedi2019representation}  & \makecell{$\times$} & \makecell{$\surd$} & \makecell{$\surd$} & \makecell{$\surd$} & \makecell{$\surd$}
\\
DEPARA~\cite{song2020depara} & \makecell{$\times$} & \makecell{$\surd$} & \makecell{$\surd$} & \makecell{$\surd$} & \makecell{$\surd$}
\\
$\mathcal{N}$LEEP~\cite{li2021ranking} & \makecell{$\times$} &  \makecell{$\surd$} & \makecell{$\surd$} & \makecell{$\surd$} & \makecell{$\surd$}
\\
\midrule
DS~\cite{cui2018large} & \makecell{$\surd$} & \makecell{$\times$} & \makecell{$\times$} & \makecell{$\surd$} & \makecell{$\times$}
\\
\cite{zhang2021quantifying} & \makecell{$\surd$} & \makecell{$\times$}  & \makecell{$\times$}  & \makecell{$\times$} & \makecell{$\times$}
\\
\cite{tong2021mathematical} & \makecell{$\surd$} & \makecell{$\times$}  & \makecell{$\times$}  & \makecell{$\times$} & \makecell{$\times$}
\\
NCE~\cite{tran2019transferability} & \makecell{$\surd$} & \makecell{$\times$}  & \makecell{$\surd$}  & \makecell{$\times$} & \makecell{$\times$}
\\
\midrule
H-Score~\cite{bao2019information} & \makecell{$\surd$} & \makecell{$\surd$} & \makecell{$\times$}  & \makecell{$\surd$} & \makecell{$\times$}
\\
LogME~\cite{you2021logme} & \makecell{$\surd$} & \makecell{$\surd$} & \makecell{$\times$} & \makecell{$\surd$}  & \makecell{$\times$}
\\
\midrule
% LFC~\cite{deshpande2021linearized} &\makecell{$\surd$}  & \makecell{$\surd$} & \makecell{$\surd$} & \makecell{$\surd$}  & \makecell{$\surd$}  \\
LEEP~\cite{nguyen2020leep} & \makecell{$\surd$} &  \makecell{$\surd$} & \makecell{$\surd$} & \makecell{$\times$} & \makecell{$\times$}
\\
\ours &\makecell{$\surd$}  & \makecell{$\surd$} & \makecell{$\surd$} & \makecell{$\surd$}  & \makecell{$\surd$}
\\
\bottomrule
\end{tabular}
}
\label{tab:0}
\vspace{-0.1in}
\end{table*}

\vspace{-10pt}

\section{Related Works}\label{sec:related}

Re-training or fine-tuning a pre-trained model is a simple yet effective strategy in transfer learning~\cite{pan2009survey}. To improve the performance on a target task and avoid negative transfer, there have been various works on transferability estimation between tasks~\cite{achille2019task2vec,bao2019information,cui2018large,dwivedi2019representation,nguyen2020leep,song2020depara,tran2019transferability,zamir2018taskonomy,li2021ranking}, which we summarize them in Table~\ref{tab:0}.
Taskomomy~\cite{zamir2018taskonomy} and Task2Vec~\cite{achille2019task2vec} evaluate the task relatedness by the loss and the Fisher Information Matrix after fully performing fine-tuning of the pre-trained model on the target task, respectively. In RSA~\cite{dwivedi2019representation} and DEPARA~\cite{song2020depara}, the authors proposed to build a similarity graph between examples for each task based on representations by a pre-trained model on this task, and took the graph 
similarity across tasks as the transferability.
Despite their general applicability in using unsupervised pre-trained models besides supervised ones and selecting the layer to transfer, their computational costs that are as high as fine-tuning with target labeled data exclude their applications to meet the urgent need of transferabilty estimation prior to fine-tuning.

\kai{There also exist transferability measures proposed for domain generalization~\cite{zhang2021quantifying} and multi-source transfer~\cite{tong2021mathematical}, \yingicml{where} a special class of integral probability metric between domains \yingicml{and} the optimal combination coefficients of source models that minimizes the $\chi^2$ between the combined source distribution and the target distribution \yingicml{ were proposed, respectively. Both of them stand in need of source datasets; however, we focus on evaluating the transferability of various pre-trained models, where the source dataset that a pre-trained model is trained on is oftentimes too huge and private to access.}}

This work is more aligned with recent attempts towards computationally efficient transferability measures without training on target data~\cite{bao2019information,cui2018large,nguyen2020leep,tran2019transferability,you2021logme}.
The Earth Mover's Distance between features of the source and the target is used in~\cite{cui2018large}.
\citet{tran2019transferability} proposed the NCE score to estimate the transferability by the negative conditional entropy between labels of a target and a source task.
But alas, 
%the source dataset that a pre-trained model has been trained on is oftentimes inaccessible due to memory and privacy issues, 
\yingicml{the reliance on source datasets again disable}
%disabling 
these two methods \yingicml{towards assessing the transferability of a broad range of pre-trained models}. 
To bypass the limitations,
\citet{bao2019information} and~\citet{you2021logme} proposed to directly estimate the likelihood and the marginalized likelihood of labeled target examples, respectively, by assuming that a linear classifier is added on top of the pre-trained model. 
\citet{nguyen2020leep} 
proposed the LEEP score,  where source labels used in NCE~\cite{tran2019transferability} are replaced with soft labels generated by the pre-trained model. \kai{\yingicml{Its extension~}\cite{li2021ranking} %proposed a variation to LEEP that 
computes  %the posterior distribution
\yingicml{more accurate soft source labels via a fitted}
%by fitting 
a Gaussian mixture model (GMM), % on the target set, 
\yingicml{at the undesirable cost of training the GMM on the target set similar to computation-intensive methods.}}
Neither of the three, however, is designed for layer selection -- H-Score~\cite{bao2019information} and LogME~\cite{you2021logme} consider the penultimate layer to be transferred only and LEEP estimating the transferability with labels only fails to differentiate by layers. 
Our purpose of the proposed TransRate is a simple but effective transferability measure: 1) it is optimization-free with single forward pass, without solving optimization problems as in~\cite{bao2019information,you2021logme}; 2) besides selecting the source and the architecture of a pre-trained model, it supports layer selection to fill the gap in computationally-efficient measures.

%%!TEX root = transrate.tex
\section{TransRate}~\label{sec:method}
\vspace{-0.3in}

\subsection{Notations and Problem Settings}
We consider the knowledge transfer from a source task  $T_s$ to a target task $T_t$ of $C$-category classification. 
As widely accepted, only the model that is pre-trained on the source task, instead of source data, is accessible. The pre-trained model, denoted by 
$F \!=\! f_{L+1} \circ\! ...\! \circ (f_2 \!\circ\! f_1)$,
consists of an $L$-layer feature extractor and a $1$-layer classifier $f_{L+1}$. Here, $f_l$ is the mapping function at the $l$-th layer. The target task is represented by $n$ labeled data samples $\{(x_i, y_i)\}_{i=1}^n$.
Afterwards, we denote the number of layers to be transferred by $K$ ($K\!\le\! L$). These $K$ layers of the model are named as the pre-trained feature extractor $g\! =\! f_K \circ ... \circ (f_2 \!\circ\! f_1)$.
The feature of $x_i$ extracted by $g$ is denoted as $z_i \!=\! g(x_i)$.
Building on the feature extractor, we construct the target model denoted by \yingicml{$w$} to include 1) the same structure as the $(K+1)$-th to $(L)$-th layers of the source model and 2) a new classifier $f^t_{L+1}$ for the target task. We also refer \kai{to} \yingicml{$w$} as the head of the target model. Following the standard practice of fine-tuning, both the feature extractor $g$ and the head \yingicml{$w$} will be trained on the target task.

We consider the optimal model for the target task as 
\small
\begin{align}
    g^*,&\yingicml{w}^* \!=\!  \vspace{-0.5in}
    &\argmax_{\tilde{g}\in\mathcal{G},\yingicml{w} \in \mathcal{\yingicml{W}}}\mathcal{L}(\tilde{g},\yingicml{w}) \!=\!  \argmax_{\tilde{g}\in\mathcal{G},\yingicml{w} \in \mathcal{\yingicml{W}}}  \frac{1}{n} \sum_{i=1}^n \log p(y_i|z_i; \tilde{g},\yingicml{w}) \nonumber
\end{align}
\normalsize
subject to $\tilde{g}^{(0)}=g$, where $\mathcal{L}$ denotes the log-likelihood, and $\mathcal{G}$ and $\mathcal{W}$ are the spaces of all possible feature extractors and heads, respectively. 
We define the transferability as the expected log-likelihood of the optimal model $\yingicml{w}^*\circ g^*$ on test samples in the target task:

\begin{defn}[Transferability]\label{def:1}
    The transferability of a pre-trained feature extractor $g$ from a source task $T_s$ to a target task $T_t$, denoted by $\mathrm{Trf}_{T_s \rightarrow T_t}(g)$, is measured by the expected log-likelihood of the optimal model $\yingicml{w}^*\circ g^*$ on a random test sample $(x, y)$ of $T_t$: $\mathrm{Trf}_{T_s \rightarrow T_t}(g) := \mathbbm{E}[\log p(y|\yingicml{z^*}; g^*, \yingicml{w}^*)]$ where $\yingicml{z^*}=g^*(x)$.
\end{defn}

This definition of transferability can be used for 1) selection of a pre-trained feature extractor %that ranks the highest 
among a model zoo $\{g_m\}_{m=1}^{M}$ for a target task, where %the
\yingicml{$M$}
pre-trained models could be in different architectures and trained on different source tasks in a supervised or unsupervised manner, and 2) selection of a layer  to transfer among all configurations $\{g_m^l\}_{l=1}^{K}$ given a pre-trained model $g_m$ and a target task.

\subsection{Computation-Efficient Transferability Estimation}

Computing the transferability defined in Definition~\ref{def:1} is as prohibitively expensive as fine-tuning all $M$ pre-trained models or $K$ layer configurations of a pre-trained model \yingicml{on} %to
the target task, while the transferability offers benefits only when it can be calculated a priori.
To address this shortfall, we propose TransRate, a frustratingly easy measure, to estimate the defined transferability.
The transferability characterizes how well the optimal model, composed of the feature extractor $g^*$ initialized from $g$ and the head $\yingicml{w}^*$, performs on the target task, where the performance is evaluated by the log-likelihood.
However, the optimal model $\yingicml{w}^*\!\circ\! g^*$ is inaccessible without optimizing $\mathcal{L}(\tilde{g},\yingicml{w})$.
For tractability, we follow prior computation-efficient %al-efficient 
transferability measures~\cite{nguyen2020leep,you2021logme} to estimate the performance of $\yingicml{w}^*\circ g$ instead.
By reasonably assuming that \yingicml{$w^*$} can extract all the information related to the target task in the pre-trained feature extractor $g$, we argue that the mutual information between the pre-trained feature extractor $g$ and the target task serves as a strong indicator for the performance of the model
$\yingicml{w}^*\circ g$.
Therefore, the proposed TransRate measures this mutual information as,
\begin{equation}
    \small
    \mathrm{TrR}_{T_s \rightarrow T_t}(g) = \yingicml{h}(Z) - \yingicml{h}(Z|Y) \approx H(Z^\Delta) - H(Z^\Delta|Y),
\label{eqn:transrate}
\end{equation}
where $Y$ are labels of target examples, $Z\!=\!g(X)$ \yingicml{and $Z^\Delta$} are features \yingicml{and quantized features} of them extracted by the pre-trained feature extractor $g$ . %, 
\yingicml{Eqn.~(\ref{eqn:transrate}) follows $h(Z)\!\approx\! H(Z^\Delta)\!+\!log\Delta$($\Delta\!\to\! 0$)~\cite{cover1999elements}, where}
\kai{$H(\cdot)$ \yingicml{denotes}
%is the differential entropy of a continuous random variable (RV) or 
the Shannon entropy of a discrete random variable \yingicml{(e.g., $Z^\Delta$ with the quantization error $\Delta$)}, and %$H^\Delta(\cdot)$ 
\yingicml{$h(\cdot)$} is
the differential entropy of a continuous random variable \yingicml{(e.g., $Z$)}.}
%is the Shannon entropy of the quantized version of a continuous RV.}

\begin{figure}[b]
    \centering
    \begin{tikzpicture}{-{Stealth}}
    \node[rect, label={[label distance=-18mm]90:Shannon entropy}] (n1) at (0,0) {$\lim\limits_{\Delta\to 0} H(Z^{\Delta})$};
    \node[rect, right= 2cm of n1, label={[label distance=-17.5mm]90:Rate distortion}] (n2) {$R(Z,\epsilon)$};
    \node[rect, right=of n2, label={[label distance=-18.4mm]90:Coding rate}] (n3) {$R(\hat{Z},\epsilon)$};
    \draw [arrow] (n1) -- (n2) node[midway, above] {$\Delta=\sqrt{2\pi e\epsilon}$ };
    \draw [arrow] (n1) -- (n2) node[midway, below] {(a)};
    \draw [arrow] (n2) -- (n3) node[midway, above] {$\hat{Z}$};
    \draw [arrow] (n2) -- (n3) node[midway, below] {(b)};
\end{tikzpicture}
    \caption{\yingicml{Illustration of the relationship between the three information measures: 
    (a) the rate distortion of a continual random variable amounts to $ H(Z^{\sqrt{2\pi e\epsilon}}) + o(1)$ when $\epsilon\!\to\! 0$~\cite{binia1974epsilon}, where a larger $\epsilon$ introduces an approximation error; (b) the coding rate provides an empirical estimate of the rate distortion, where the approximation error is dictated by the degree to which finite samples $\hat{Z}$ represent the true random variable $Z$. 
    }}
    \label{fig:measure_relationship}
\end{figure}

Based on the theory in \cite{qin2019rethinking}, \kai{we show in Proposition~\ref{prop:1} that \ours provides an upper bound \kai{and lower bound} to the log-likelihood of the model $\yingicml{w}^*\circ g$. %Since the log-likelihood is closely correlated with the classification accuracy, the proposition implies that \ours is highly align with the transfer performance. 
}

\begin{prop}\label{prop:1}
Assume the target task has a uniform label distribution, i.e.\yingicml{,} $p(Y=y^c) = \frac{1}{C}$ holds for all $c=1,2, ...,C$. We then have:
\small 
\begin{equation*}
\begin{split}
    & \mathcal{L}(g, h^*) \lessapprox \mathrm{TrR}_{T_s \rightarrow T_t}(g) %, 0, \epsilon) 
    - H(Y), \\
    & \mathcal{L}(g, h^*) \gtrapprox \mathrm{TrR}_{T_s \rightarrow T_t}(g )%, 0, \epsilon)
    - H(Y) -\yingicml{H(Z^\Delta)}. %H^\Delta(Z),
\end{split}
\end{equation*}
\normalsize
\end{prop}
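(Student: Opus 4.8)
The plan is to reduce the optimal-head log-likelihood $\mathcal{L}(g,h^*)$ to a conditional entropy and then to $\mathrm{TrR}_{T_s \rightarrow T_t}(g)$ via quantization. First I would use the modeling assumption already invoked in the excerpt — that the trainable head extracts all target-relevant information in the fixed feature extractor $g$ — to treat $h^*$ as (nearly) the Bayes posterior $p(y\mid z)$ on the features. Then the empirical objective $\mathcal{L}(g,h^*)=\frac{1}{n}\sum_i \log p(y_i\mid z_i)$ concentrates to $\mathbb{E}[\log p(Y\mid Z)]=-H(Y\mid Z)$, and I would rewrite this as $-H(Y\mid Z)=I(Z;Y)-H(Y)=h(Z)-h(Z\mid Y)-H(Y)$, turning the quantity into a difference of differential entropies minus the label entropy.

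Next I would bring in Bayes' rule together with the uniform-label hypothesis $p(Y=y^c)=1/C$. The latter is exactly what makes the marginal term tractable: $\sum_{c'} p(z\mid y=c')=C\,p(z)$ and $H(Y)=\log C$, so the expected log-likelihood decomposes cleanly as $\mathbb{E}[\log p(Z\mid Y)]-\log C-\mathbb{E}[\log p(Z)]=-h(Z\mid Y)+h(Z)-H(Y)$. Passing to quantized features through $h(Z)\approx H(Z^\Delta)+\log\Delta$ and $h(Z\mid Y)\approx H(Z^\Delta\mid Y)+\log\Delta$, the divergent $\log\Delta$ terms cancel in the difference, leaving the central estimate $\mathcal{L}(g,h^*)\approx H(Z^\Delta)-H(Z^\Delta\mid Y)-H(Y)=\mathrm{TrR}_{T_s \rightarrow T_t}(g)-H(Y)$.

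The two inequalities then come from controlling this approximation in each direction. For the upper bound I would invoke that no classifier can exceed the Bayes posterior in expected log-likelihood, so the optimal head obeys $\mathbb{E}[\mathcal{L}(g,h^*)]\le -H(Y\mid Z)$; combined with the quantization step this yields $\mathcal{L}(g,h^*)\lessapprox \mathrm{TrR}_{T_s \rightarrow T_t}(g)-H(Y)$. For the lower bound I would exhibit an explicit, realizable head (e.g.\ a class-conditional Gaussian/prototype softmax built from the per-class feature statistics, as in the construction underlying \cite{qin2019rethinking}) whose average log-likelihood is at least $-H(Z^\Delta\mid Y)-H(Y)=\mathrm{TrR}_{T_s \rightarrow T_t}(g)-H(Y)-H(Z^\Delta)$; since $h^*$ is optimal it dominates this specific head, giving $\mathcal{L}(g,h^*)\gtrapprox \mathrm{TrR}_{T_s \rightarrow T_t}(g)-H(Y)-H(Z^\Delta)$. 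Here the marginal entropy $H(Z^\Delta)\ge 0$ is precisely the slack separating the two brackets.

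I expect the main obstacle to be twofold. First, the quantization relations $h(\cdot)\approx H(\cdot^\Delta)+\log\Delta$ each carry an $o(1)$ error, and I must track the sign of the residual so that the $\log\Delta$ cancellation is genuinely one-sided in each bound rather than merely heuristic — this is exactly what the approximate symbols $\lessapprox$ and $\gtrapprox$ are absorbing. Second, and more delicate, is justifying the reduction from $\mathcal{L}(g,h^*)$ to $-H(Y\mid Z)$: this rests on the head attaining (or nearly attaining) the Bayes posterior on $g$, and quantifying the gap $\mathbb{E}_Z[D_{\mathrm{KL}}(p(Y\mid Z)\,\|\,q_{h^*}(Y\mid Z))]$ so that it is controlled by the $H(Z^\Delta)$ slack of the lower bound. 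Establishing that a head in the admissible class $\mathcal{W}$ realizes the target lower bound, and that the optimal head's shortfall relative to Bayes does not exceed $H(Z^\Delta)$, is where the theory of \cite{qin2019rethinking} is essential and where the real work lies.
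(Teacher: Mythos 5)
Your upper bound is essentially the paper's argument in different packaging, and it is correct: the paper applies the variational (Barber--Agakov) lower bound $\mathrm{MI}(Y;Z)\ge \mathbb{E}_{Z,Y}\log\frac{Q(z,y)}{P(z)P(y)}$ with $Q$ built from the softmax output of $h^*$, the uniform-label assumption serving exactly to make the variational normalizer coincide with the softmax denominator; that bound is nothing but your Gibbs-inequality statement that no head can exceed the Bayes posterior, whose value is $-H(Y|Z)=\mathrm{MI}(Y;Z)-H(Y)$. Your identification of the target quantities, the $\log\Delta$ cancellation, and the identity $-H(Y|Z^\Delta)-H(Z^\Delta)=-H(Z^\Delta|Y)-H(Y)$ are all consistent with what the paper does.

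The genuine gap is in your lower bound. You propose to exhibit a realizable class-conditional Gaussian/prototype head whose expected log-likelihood is at least $-H(Z^\Delta|Y)-H(Y)$, but you never establish this, and as stated it fails for a \emph{fitted} head: writing such a head as $q(y|z^\Delta)\propto p(y)\,\hat p(z^\Delta|y)$ and bounding the normalizer by $1$ gives only $\mathbb{E}[\log q(Y|Z^\Delta)]\ge -H(Y)-\mathbb{E}[-\log \hat p(Z^\Delta|Y)]$, and the cross-entropy term exceeds $H(Z^\Delta|Y)$ by the misspecification penalty $\mathbb{E}_Y D_{\mathrm{KL}}\bigl(p(\cdot|Y)\,\|\,\hat p(\cdot|Y)\bigr)$, which pushes the bound \emph{below} your target; equality requires the head to realize the true class-conditionals, which is precisely the realizability question you defer to \cite{qin2019rethinking} and never resolve. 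The paper's proof shows none of this machinery is needed: it takes the feature-blind head $\bar h$ that outputs the label marginal $p(y)$ regardless of $z$ (realizable with constant logits under softmax), uses the keep-one-term inequality on the quantized mixture, $p(y_i)=\sum_{z^\Delta}p(y_i|z^\Delta)p(z^\Delta)\ge p(y_i|z_i^\Delta)\,p(z_i^\Delta)$, averages over the sample to get $\mathcal{L}(g,\bar h)\gtrapprox -H(Y|Z^\Delta)-H(Z^\Delta)=\mathrm{TrR}_{T_s\rightarrow T_t}(g)-H(Y)-H(Z^\Delta)$, and concludes by optimality $\mathcal{L}(g,h^*)\ge\mathcal{L}(g,\bar h)$. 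This also corrects your diagnosis of where the work lies: the lower bound is deliberately loose because it comes from a classifier that ignores the features entirely (the paper's remark makes this point), so no control of the optimal head's shortfall relative to the Bayes posterior, and no near-Bayes construction, is required anywhere in the proof.
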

%\vspace{-0.05in}

\kai{Note that NCE, LEEP and \ours all provide a %loose 
lower bound for the maximal log-likelihood,
\yingicml{whereas}
% . But 
only \ours has been shown to be a tight upper bound of the maximal log-likelihood. Since the
maximal log-likelihood is closely related to the 
%classification accuracy, 
\yingicml{transfer learning performance, this}
%the 
proposition implies that \ours %is
highly align\yingicml{s} with the transfer performance. 
A detailed proof and more analysis on the relationship between \ours and transfer %accuracy 
\yingicml{performance}
can be found in Appendix~\ref{appsec:d1} %D.1 
and Appendix~\ref{appsec:c}.} %, respectively. }

\begin{figure*}[t!]
\centering
    \subfigure[][\tiny \makecell{ $R(\hat{Z},\yingicml{0.01})\!\approx\!6.01$, \\ $-R(\hat{Z},\yingicml{0.01}|Y)\!\approx\!-5.35$, \\ $\mathrm{TrR}(\yingicml{g},\yingicml{0.01})\!\approx\!0.66$}]{\label{fig:classa}\includegraphics[width=0.235\textwidth]{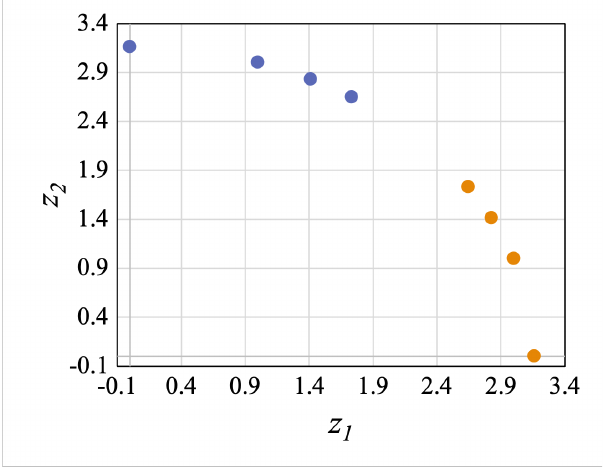} }
    \subfigure[][\tiny \makecell{ $R(\hat{Z},\yingicml{0.01})\!\approx\!6.01$, \\ $-R(\hat{Z},\yingicml{0.01}|Y)\!\approx\!-5.88$, \\ $\mathrm{TrR}(\yingicml{g},\yingicml{0.01})\!\approx\!0.13$}]{\label{fig:classb}\includegraphics[width=0.235\textwidth]{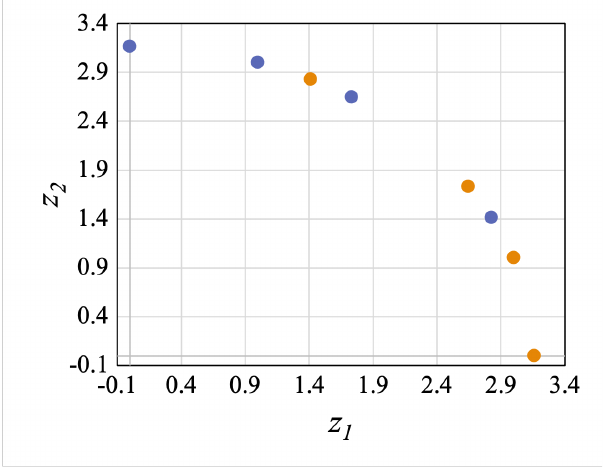} }
    \subfigure[][\tiny \makecell{ $R(\hat{Z},\yingicml{0.01})\!\approx\!5.90$, \\ $-R(\hat{Z},\yingicml{0.01}|Y)\!\approx\!-5.35$, \\ $\mathrm{TrR}(\yingicml{g},\yingicml{0.01})\!\approx\!0.54$}]{\label{fig:classc}\includegraphics[width=0.235\textwidth]{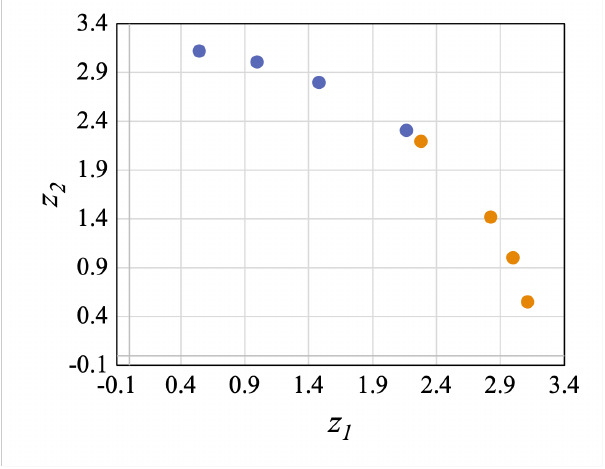} }
    \subfigure[][\tiny \makecell{ $R(\hat{Z},\yingicml{0.01})\!\approx\!5.90$, \\ $-R(\hat{Z},\yingicml{0.01}|Y)\!\approx\!-5.80$, \\ $\mathrm{TrR}(\yingicml{g},\yingicml{0.01})\!\approx\!0.10$}]{\label{fig:classd}\includegraphics[width=0.235\textwidth]{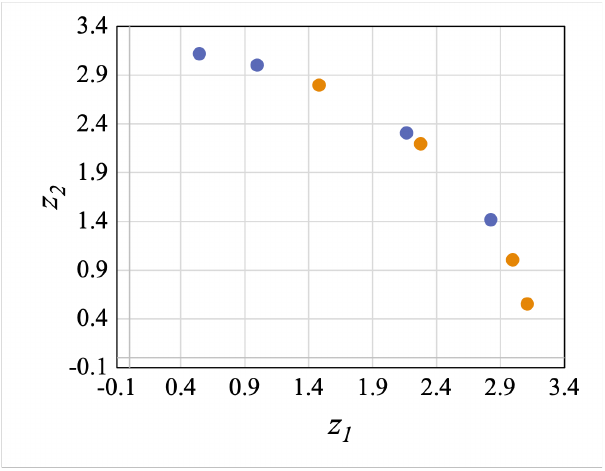} }
    \vspace{-3mm}\caption{\small Toy examples illustrating the effectiveness of the TransRate. The horizontal and vertical axes represent the two dimensions of features $\hat{Z}$. There are two classes in $Y$, pictorially illustrated with two colors. %\TODO{[Is $1$ in TrR(1. 0.1) OK?]}
    }
    %\vspace{-4mm}
\label{fig:class}
\end{figure*}

Computing the TransRate in \yingicml{Eqn.}~(\ref{eqn:transrate}), however, remains a daunting challenge, as the mutual information is notoriously difficult to compute especially for continuous variables in high-dimensional settings~\cite{hjelm2018learning}.
A popular solution for mutual information estimation is to 
\yingicml{have the quantization $Z^\Delta$ via}
%use 
the histogram method~\cite{tishby2015deep},
%for discretization~\cite{tishby2015deep},
though it requires an extremely large memory capacity. Even if we  divide each dimension of $Z$ into only $10$ bins,  there will be $10^d$ bins where $d$ is the dimension of $Z$ that is usually greater than 128. Other estimators include kernel density estimator~
\cite{moon1995estimation} and k-NN estimator~\cite{beirlant1997nonparametric,kraskov2004estimating}. 
KDE suffers from singular solutions when the number of examples is smaller than their dimension; the 
$k$-NN estimator requiring exhaustive computation of nearest neighbors of all examples may be too computationally expensive if more examples are available. 
Recent trends in deep neural networks have %lead 
\kai{led} to a proliferation studies of approximating the mutual information or entropy by a neural network~\cite{belghazi2018mutual,hjelm2018learning,shalev2020neural} and obtain\kai{ing} %an\
\kai{a} high-accuracy estimation by optimizing the neural network.
Unfortunately, training neural networks is contrary to our premise of an optimization-free transferability measure.

% \TODO{[Change "alternative" to "approximation /surrogate"?]}

Fortunately, \yingicml{as shown in Figure~\ref{fig:measure_relationship}}, the rate distortion $R(Z,\epsilon)$ \yingicml{defining}
%which defines 
the minimal number of binary bits to encode $Z$ with an expected decoding error less than $\epsilon$ %serves as an effective alternative to $H^\Delta(Z)$
\yingicml{has been proved to be closely related to the Shannon entropy, i.e., $R(Z,\epsilon)\!=\!H(Z^{\Delta}) \!+\! o(1)$ with $\Delta=\sqrt{2\pi e\epsilon}$ when $\epsilon\!\to\! 0$~\cite{binia1974epsilon,cover1999elements}.}
%\citet{lastras2018information} have also shown the connection of rate distortion in neural networks to the log-likelihood.
Most crucially, the work of~\cite{ma2007segmentation} offers \yingicml{the coding rate $R(\hat{Z},\epsilon)$ as} an efficient and accurate \yingicml{empirical} estimation %of $R(\hat{Z},\epsilon)$ %for
\yingicml{to $R(Z,\epsilon)$, provided with}
$n$ finite samples $\hat{Z} = [z_1, z_2, ..., z_n]\in \mathbbm{R}^{d \times n}$ from a subspace-like distribution \yingicml{where $d$ is the dimension of $z_i$}. %, even if the samples are in high-dimensional feature representations  or from a non-Gaussian distribution. 
Concretely, 
\begin{equation}
\small
    R(\hat{Z}, \epsilon) = \frac{1}{2} \logdet (I_d + \frac{1}{n \yingicml{\epsilon} %^2
    } \hat{Z} \hat{Z}^\top),
\label{eqn:R_z_1}
\end{equation}
where 
%and 
$\epsilon$ is the distortion rate.
\yingicml{Coding rate has been verified to be qualified even for samples that} are in high-dimensional feature representations  or from a non-Gaussian distribution~\cite{ma2007segmentation}, \yingicml{which is often the case for features by deep neural networks. 
Therefore, we resort to % the coding rate
$R(\hat{Z},\epsilon)$ as an approximation to $H(Z^\Delta)$ ($\Delta=\sqrt{2\pi e\epsilon}$) with a small value of $\epsilon$.
}
%\kai{The distortion rate $\epsilon$ is similar to the resolution $\Delta$ in $H^\Delta(Z)$.}
More properties of %this rate distortion estimate 
\yingicml{the coding rate}
will be discussed in \kai{Appendix~\ref{appsec:c1} % C.1 
and Appendix~\ref{appsec:d2}.} %D.2.

Next we investigate the rate distortion estimate $R(\hat{Z},\epsilon|Y)$ as an %alternative
\yingicml{approximation}
to the second component of TransRate, i.e., %$H^\Delta(Z|Y)$.
\yingicml{$H(Z^{\Delta}|Y)$}.
Define ${Z}^c = \{z | Y =y^c\}$ as the random variable for features of the target samples in the $c$-th class, whose labels are all $y^c$. 
When $\epsilon\to 0$, we then have

\vspace{-0.2in}
\small
\begin{align}
\label{eqn:h_z_y}
% \resizebox{0.48\textwidth}{!}{
    %H
    &\yingicml{H(Z^{\Delta}|Y) \approx h(Z|Y)-\log{\Delta}} \nonumber \\
%   \yingicml{h}(Z|Y) 
 = & -\! \int\displaylimits_{z\in Z} \sum_{y \in Y}  p(z, y) \log p(z | y) \mathrm{d} z  \yingicml{-\log{\Delta}}\nonumber \\
   % & = - \int_{z\in Z} \sum_{c=1}^C  p(z, Y=y^c) \log p(z | Y=y^c) \mathrm{d} z \nonumber \\
     =&-\!\! \int\displaylimits_{z\in Z} \!\sum_{c=1}^C  p(Y \!=\! y^c) p(z| Y \!=\! y^c) \log p(z | Y\!=\!y^c) \mathrm{d} z \yingicml{-\log{\Delta}} \nonumber \\
     = & \!\sum_{c=1}^C \frac{n_c}{n} \int\displaylimits_{z\in Z^c} - p(z) \log p(z) \mathrm{d} z \yingicml{-\log{\Delta}} \nonumber \\
     = & \!\sum_{c=1}^C \frac{n_c}{n} \yingicml{[h}(Z^c)\yingicml{-\log{\Delta}] \!=\! \sum_{c=1}^C \frac{n_c}{n} H((Z^c)^{\Delta})}, 
\end{align}
\normalsize 
where $n_c$ is the number of training samples in the $c$-th class. 
According to~(\ref{eqn:h_z_y}), it is direct to derive
\begin{equation}
\small
\begin{split}
    R(\hat{Z},\epsilon|Y) 
    & =  \sum_{c=1}^C \frac{n_c}{n} R(\hat{Z}^c, \epsilon) \\
    & = \sum_{c=1}^C \frac{n_c}{2n} \logdet (I_d + \frac{1}{n_c \yingicml{\epsilon}}%\epsilon^2}
    \hat{Z}^c {\hat{Z}^c}{}^\top),
\label{eqn:R_z_2}
\end{split}
\end{equation}
\normalsize 
where $\hat{Z}^c \!=\! [z_1^c, z_2^c, \!...,\!z_{n_c}^c]\in \mathbbm{R}^{d \times n_c}$ denotes $n_c$ samples in the \yingicml{$c$}-th class. Combining~(\ref{eqn:R_z_1}) and~(\ref{eqn:R_z_2}), we conclude with the TransRate %score
%that 
we use in practice for transferability~estimation:
\begin{equation}
\small
    \mathrm{TrR}_{T_s \rightarrow T_t}(g,\epsilon) = R(\hat{Z},\epsilon) - R(\hat{Z},\epsilon|Y). \nonumber
\label{eqn:transrate_2}
\end{equation}
\yingicml{Note that} we %adopt
\yingicml{use}
$\mathrm{TrR}_{T_s \rightarrow T_t}(g)$ and $\mathrm{TrR}_{T_s \rightarrow T_t}(g,\epsilon)$ to denote the ideal and the working TransRate, respectively.

\textbf{Completeness and Compactness}  
We argue that %TransRate favors 
those pre-trained models that 
produce both 
complete and compact~features \yingicml{tend to have high TransRate scores}. (1) \emph{Completeness:} \begin{small}$R(\hat{Z},\epsilon)$\end{small} as the first term of \begin{small}$\mathrm{TrR}_{T_s \rightarrow T_t}(g,\epsilon)$\end{small} evaluates whether the features $\hat{Z}$ by the pre-trained feature extractor $g$ include sufficient information for solving the target task -- features between different classes of examples %are expected to 
\yingicml{should}
be as diverse as possible.
$\hat{Z}$ in Figure~\ref{fig:classa} is more diverse than that in Figure~\ref{fig:classc}, evidenced by a larger value of 
\begin{small}$R(\hat{Z},\yingicml{0.01})$\end{small}. 
(2) \emph{Compactness}: The second term, i.e., \begin{small}$-R(\hat{Z},\epsilon| Y)$\end{small}, assesses whether the features $\hat{Z}^c$ for each $c$-th class are compact enough for good generalization. % capacity.
Each of the two classes spans a  wider range in Figure~\ref{fig:classb} than that in  Figure~\ref{fig:classa}, so that the value of \begin{small}$-R(\hat{Z},\yingicml{0.01}| Y)$\end{small} %($\approx -5.88$) 
is smaller.

Furthermore, there is theoretical evidence to strengthen the argument above. 
Consider a binary classification problem
with \begin{small}$\hat{Z}\! =\! [\hat{Z}^1, \hat{Z}^2]\!\in\!\mathbb{R}^{d\times n}$\end{small}, where both  $\hat{Z}^1$ and $\hat{Z}^2$ have $n$ $d$-dim examples. By defining %\begin{small}$\alpha \!=\!  1/(n\epsilon^2)$\end{small},
\begin{small}$\alpha \!=\!  1/n\epsilon$\end{small},~we have 
\begin{small}
$\mathrm{TrR}_{T_s \rightarrow T_t}(g,\epsilon)=\frac{1}{2}\log\det\{(I_{n/2} +\alpha (\hat{Z}^1)^\top \hat{Z}^1 +\alpha (\hat{Z}^2)^\top \hat{Z}^2) +\alpha^2 [(\hat{Z}^1)^\top \hat{Z}^1 (\hat{Z}^2)^\top \hat{Z}^2 - (\hat{Z}^1)^\top \hat{Z}^2 (\hat{Z}^2)^\top \hat{Z}^1] \} - B$
\end{small}
where \begin{small}$B = \frac{1}{2} (R(\hat{Z}_1, \epsilon) + R(\hat{Z}_2, \epsilon)) $\end{small}. 
We assume \begin{small}$(\hat{Z}^1)^\top \hat{Z}^1$\end{small} and \begin{small}$(\hat{Z}^2)^\top \hat{Z}^2$\end{small} to be fixed,  so that 
 TransRate maintaining the \emph{compactness} within each class
 \yingicml{(i.e., $B$ is a constant)}
 %{, where $B$ becomes a constant,} 
 and hinging on the \emph{completeness} only maximizes at \begin{small}$(\hat{Z}^1)^\top \hat{Z}^2 \!=\! 0$\end{small} and minimizes at \begin{small}$\hat{Z}_1 \!=\! \hat{Z}_2$\end{small}.
That is, TransRate favors the diversity between different classes, while penalizes if the overlap between classes is high.
\kai{Detailed proof and more theoretical analysis about the completeness and compactness can be found in Appendix~\ref{appsec:d3}.} %D.3}. 

%%!TEX root = transrate.tex

\begin{figure*}[t!]
\centering
    \subfigure[][\tiny \makecell{ NCE on CIFAR\\ $R_p\!=\!0.3803$, \\ $\tau_K\!=\!0.3091$, \\ $\tau_\omega\!=\!0.5680$}]{\label{fig:s11}\includegraphics[width=0.15\textwidth]{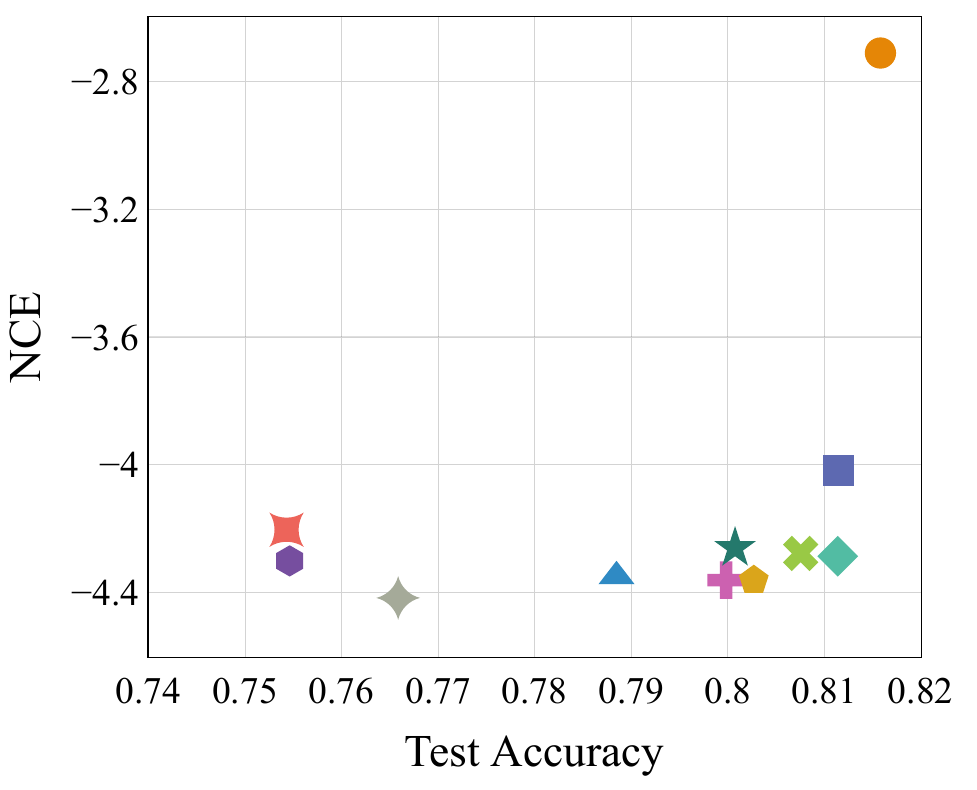} }
    \subfigure[][\tiny \makecell{ LEEP on CIFAR \\ $R_p\!=\!0.2883$, \\ $\tau_K\!=\!0.0909$, \\ $\tau_\omega\!=\!0.3692$}]{\label{fig:s12}\includegraphics[width=0.15\textwidth]{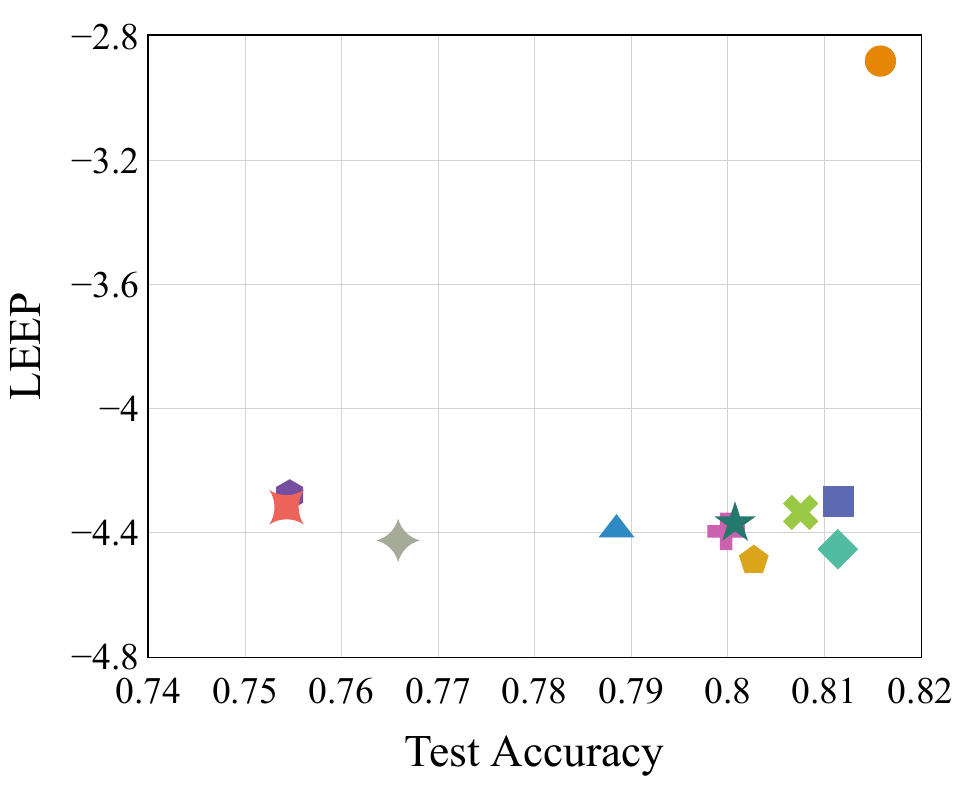} }
    \subfigure[][\tiny \makecell{ LFC on CIFAR\\ $R_p\!=\!0.5330$, \\ $\tau_K\!=\!0.6364$, \\ $\tau_\omega\!=\!0.8141$}]{\label{fig:s13}\includegraphics[width=0.15\textwidth]{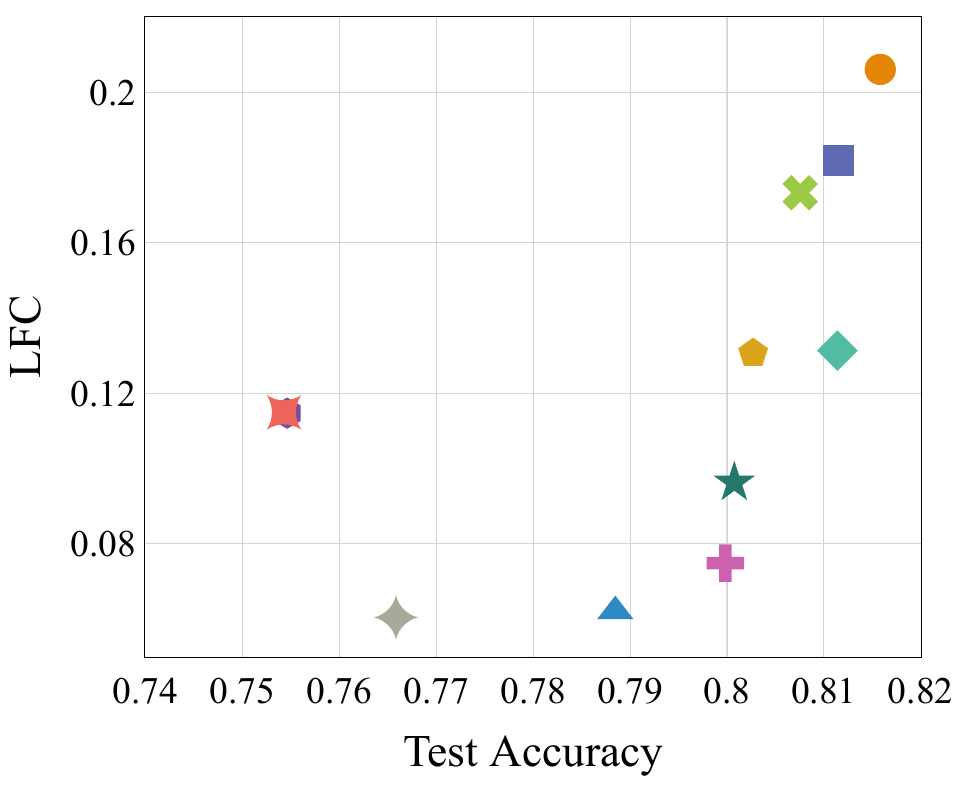} }
    \subfigure[][\tiny \makecell{ H-Score on CIFAR\\ $R_p\!=\!0.5078$, \\ $\tau_K\!=\!0.7091$, \\ $\tau_\omega\!=\!0.8134$}]{\label{fig:s14}\includegraphics[width=0.15\textwidth]{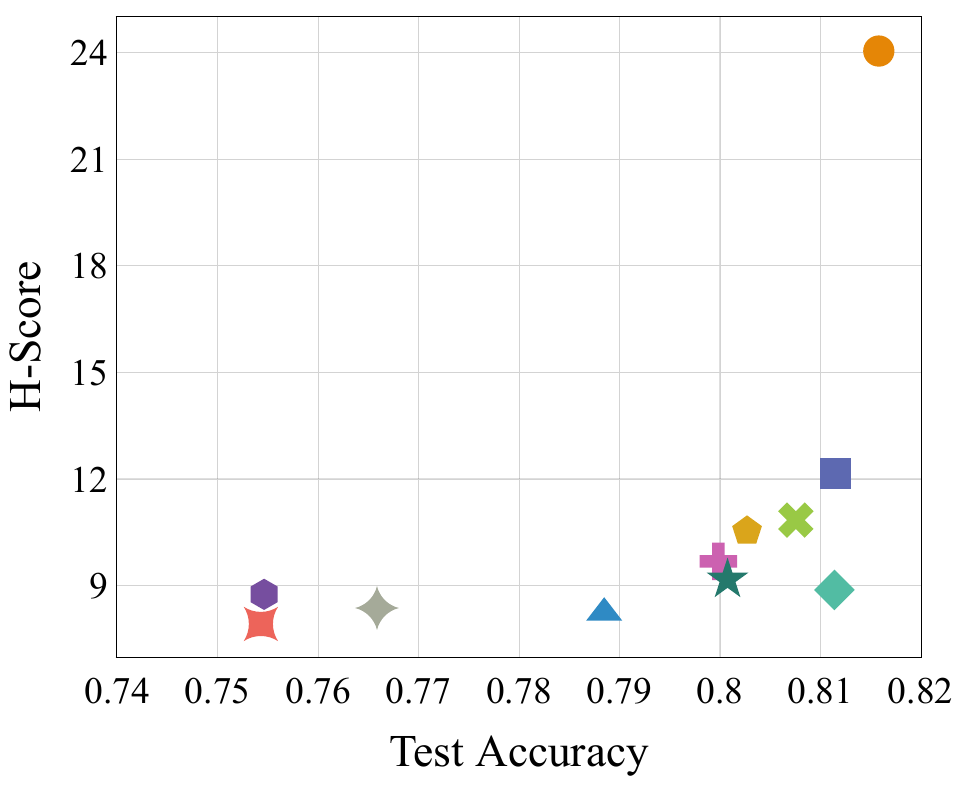} }
    \subfigure[][\tiny \makecell{ LogME on CIFAR \\ $R_p\!=\!0.4947$, \\ $\tau_K\!=\!0.7091$, \\ $\tau_\omega\!=\!0.8134$}]{\label{fig:s15}\includegraphics[width=0.15\textwidth]{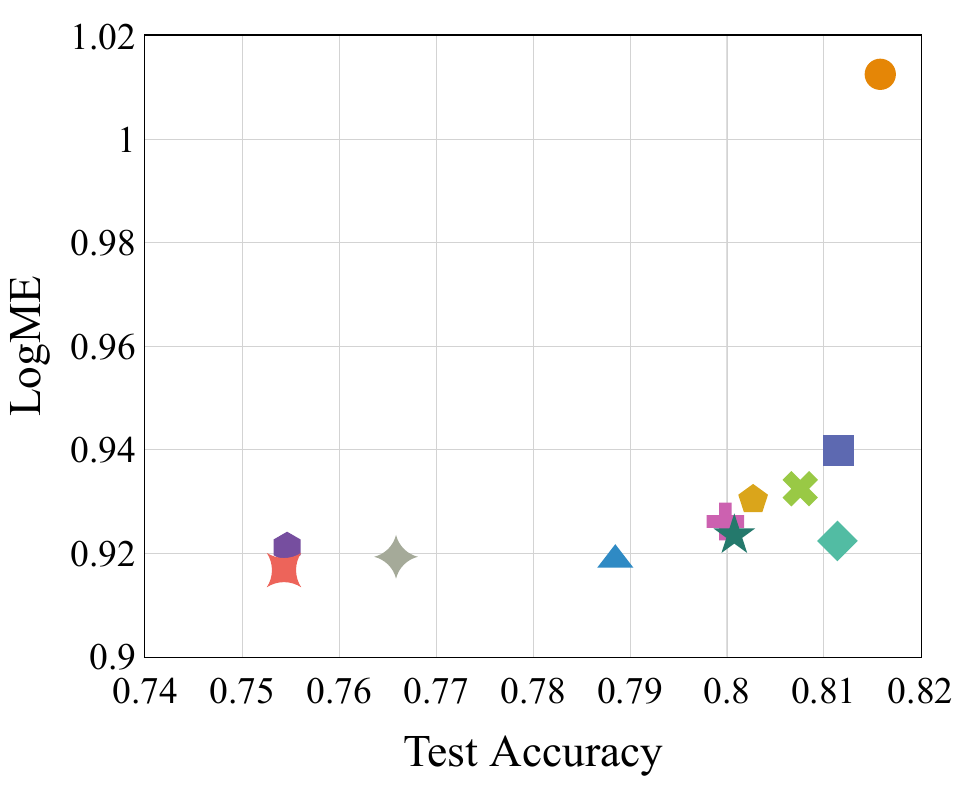} }
    \subfigure[][\tiny \makecell{ TrR on CIFAR\\ $R_p\!=\!{\bf 0.7262}$, \\ $\tau_K\!=\!{\bf0.8182}$, \\ $\tau_\omega\!=\!{\bf0.9055}$}]{\label{fig:s16}\includegraphics[width=0.15\textwidth]{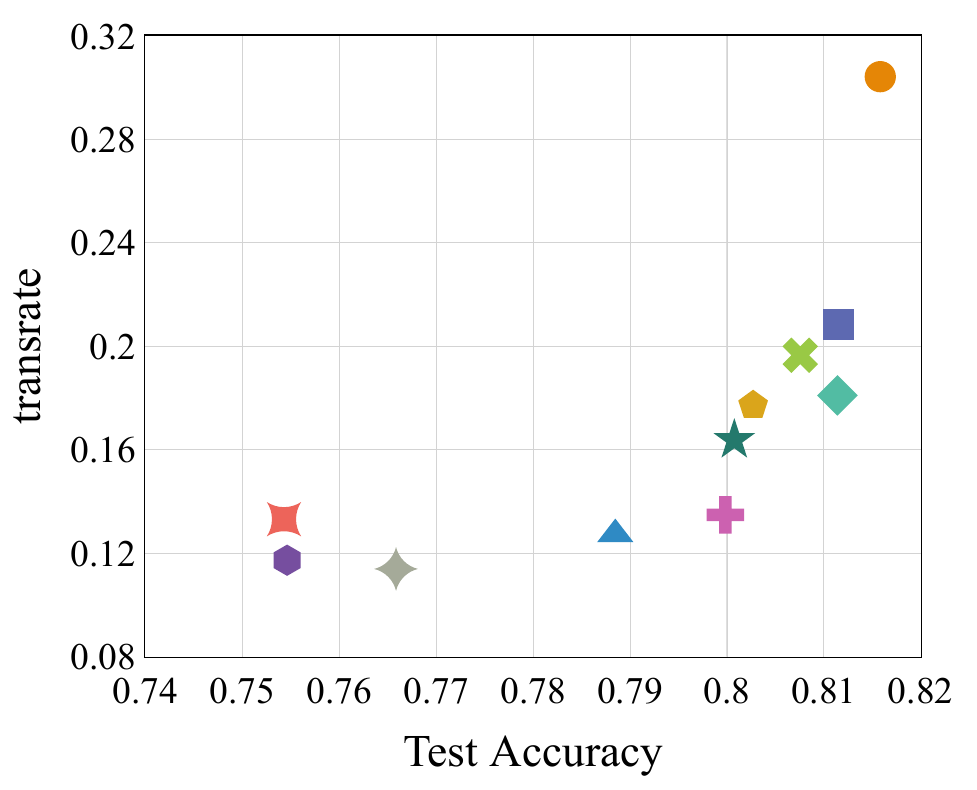} }
    \\
    \vspace{-2mm}
    \subfigure[][\tiny \makecell{ NCE on FMNIST\\ $R_p\!=\!0.6995$, \\ $\tau_K\!=\!0.4909$, \\ $\tau_\omega\!=\!0.6114$}]{\label{fig:s21}\includegraphics[width=0.15\textwidth]{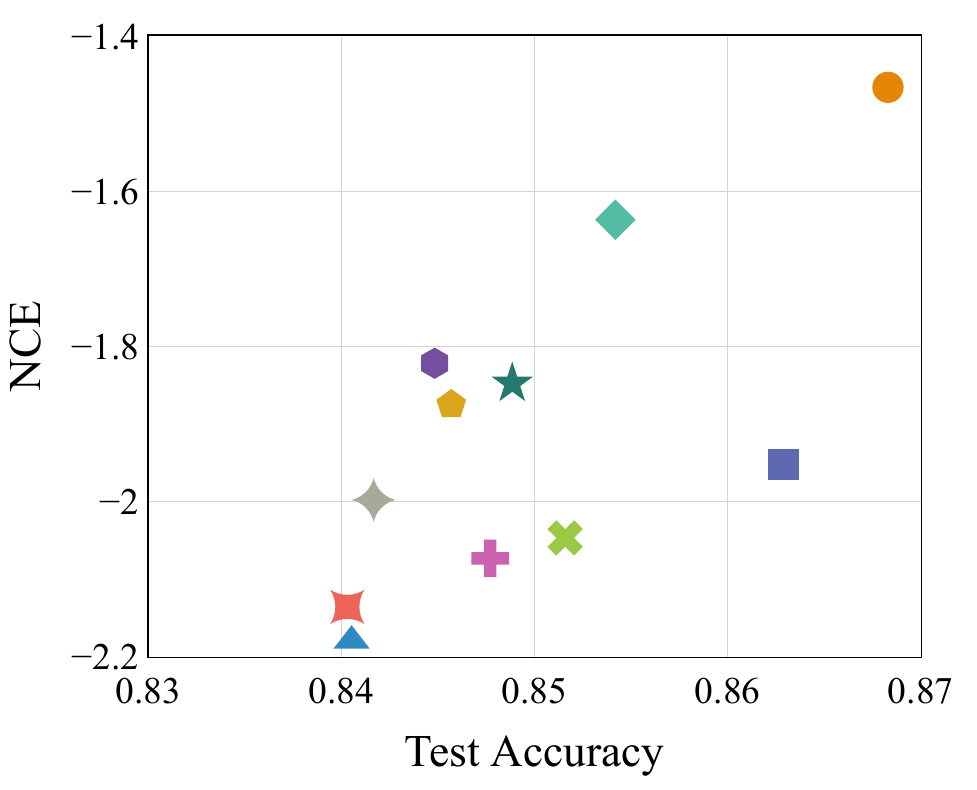} }
    \subfigure[][\tiny \makecell{ LEEP on FMNIST \\ $R_p\!=\!0.5200$, \\ $\tau_K\!=\!0.1273$, \\ $\tau_\omega\!=\!0.3383$}]{\label{fig:s22}\includegraphics[width=0.15\textwidth]{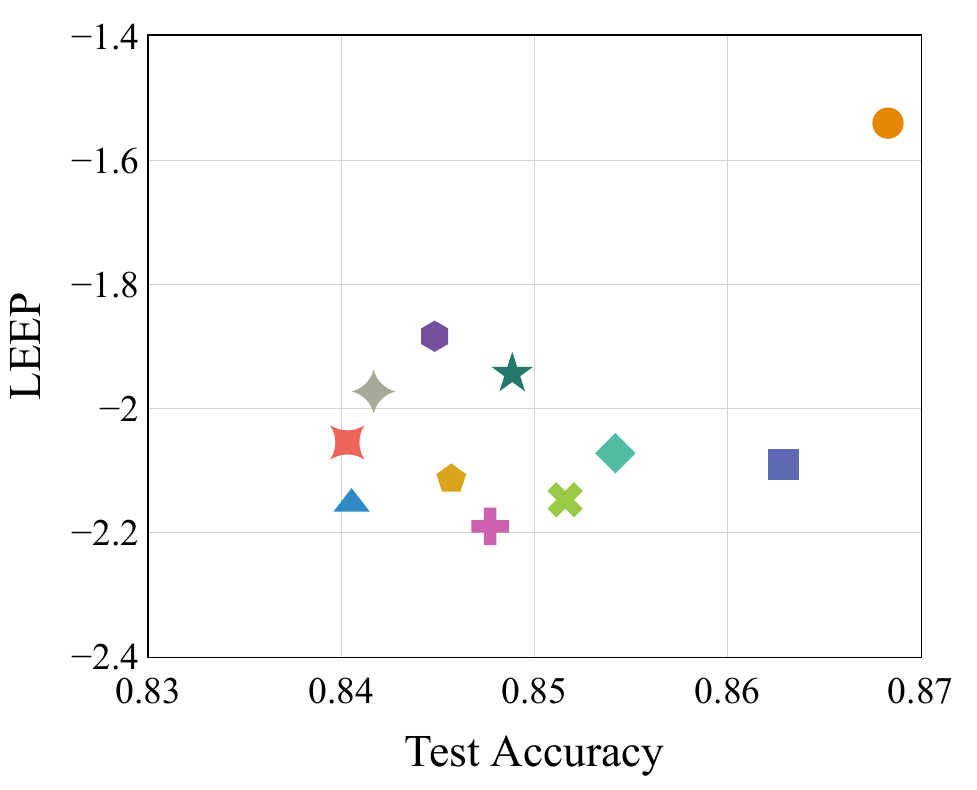} }
    \subfigure[][\tiny \makecell{ LFC on FMNIST\\ $R_p\!=\!0.7248$, \\ $\tau_K\!=\!0.4545$, \\ $\tau_\omega\!=\!0.6001$}]{\label{fig:s23}\includegraphics[width=0.15\textwidth]{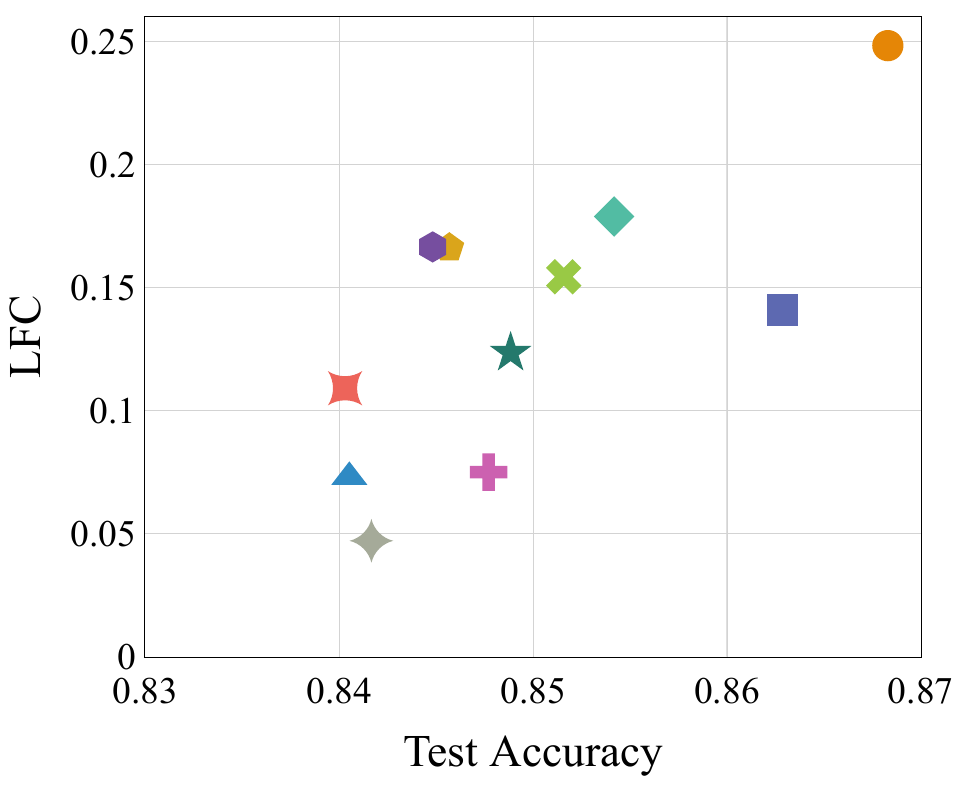} }
    \subfigure[][\tiny \makecell{ H-Score on FMNIST\\ $R_p\!=\!0.5945$, \\ $\tau_K\!=\!0.1273$, \\ $\tau_\omega\!=\!0.3468$}]{\label{fig:s24}\includegraphics[width=0.15\textwidth]{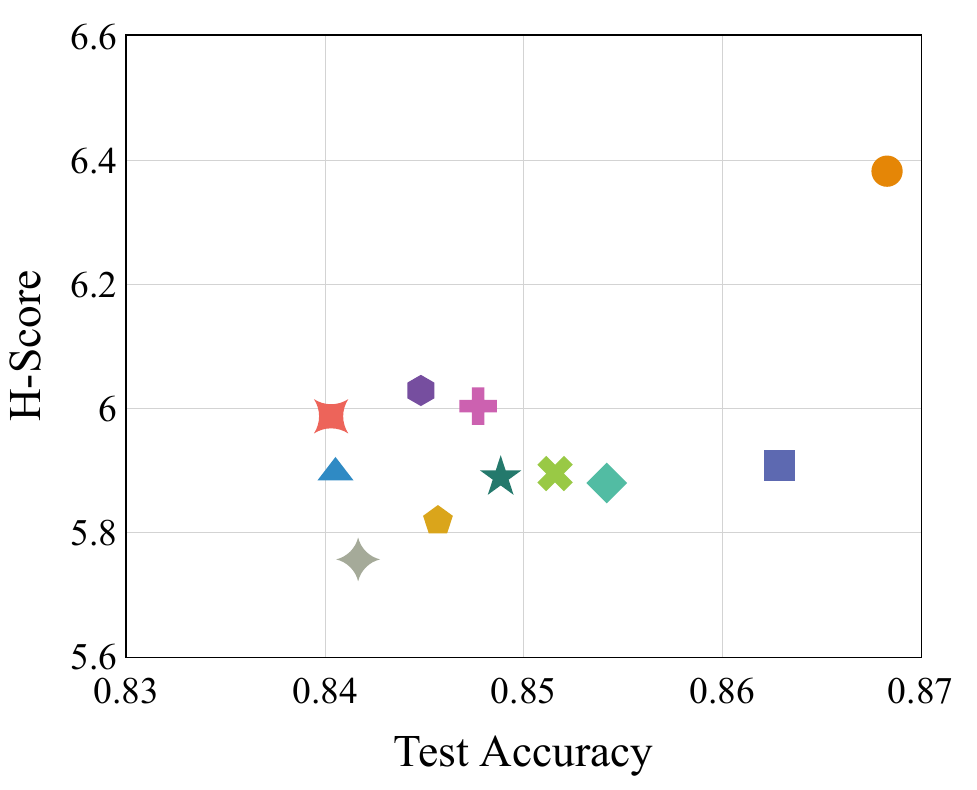} }
    \subfigure[][\tiny \makecell{ LogME on FMNIST \\ $R_p\!=\!0.5595$, \\ $\tau_K\!=\!0.0545$, \\ $\tau_\omega\!=\! 0.2781$}]{\label{fig:s25}\includegraphics[width=0.155\textwidth]{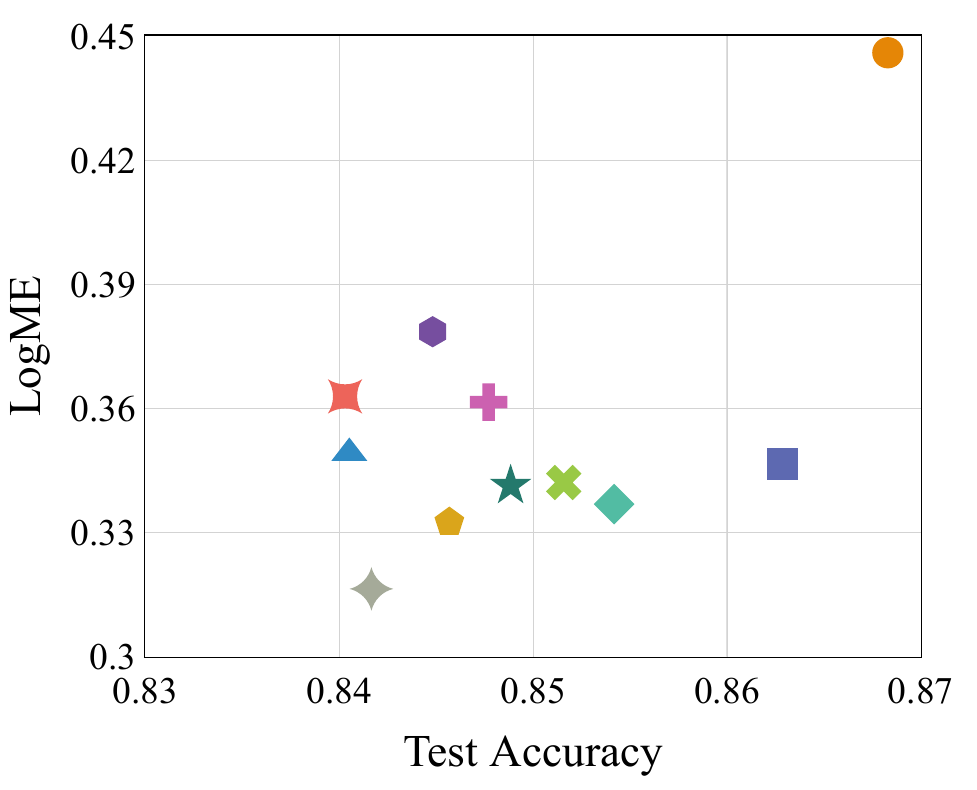} }
    \subfigure[][\tiny \makecell{ TrR on FMNIST\\ $R_p\!=\!{\bf0.8614}$, \\ $\tau_K\!=\!{\bf0.6727}$, \\ $\tau_\omega\!=\!{\bf0.8031}$}]{\label{fig:s26}\includegraphics[width=0.15\textwidth]{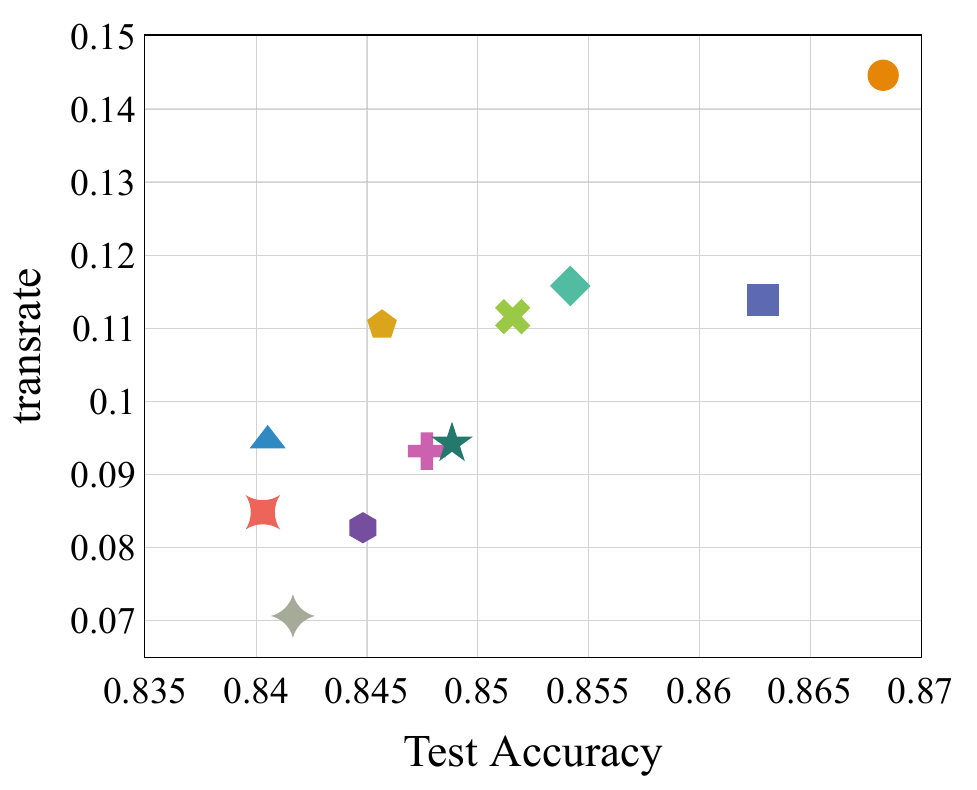} }
    \\
    % \vspace{-2mm}
    \subfigure{\includegraphics[width=0.6\textwidth]{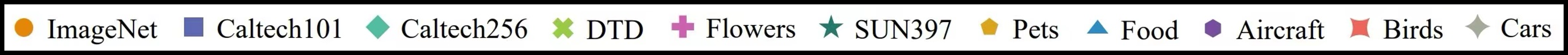}}
    \vspace{-3mm}\caption{\small Transferability estimation on transferring ResNet-18 pre-trained on $11$ different source datasets to CIFAR-100 and %Fashion-MNIST.
    \yingicml{FMNIST}.}\vspace{-4mm}
\label{fig:source_selection}
\end{figure*}

\section{Experiments}\label{sec:exp}

In this section, we evaluate the correlation between predicted transferability by \ours and the transfer learning performance in various settings and for different tasks. Due to page limit, experiments covering more settings and the wall-clock time comparison are available in Appendix~\ref{appsec:b}. %B.

\subsection{Implementation Details}

We consider fine-tuning a pre-trained model from a source dataset to the target task without access to any source data. For fine-tuning of the target task, the feature extractor is initialized by the pre-trained model. Then the feature extractor together with a randomly initialized head is optimized by running SGD on a cross-entropy loss for $100$ epoches. The \kai{batch size (16, 32, 64, 128)}, learning rate (from 0.0001 to 0.1) and weight decay %s
(from 1E-6 to 1E-4) are %obtained 
\yingicml{determined}
via grid search of the best average transfer performance over 10 runs %of experiments 
on a validation set. \kai{The reported %the 
transfer performance is %the
\yingicml{an}
average of %the
top 5 accuracies over 20 runs of experiments \yingicml{under the best hyperparameters above}.}

Before performing fine-tuning on the target task, we calculate \ours and the other \yingicml{baseline} transferability measures %by baselines 
on training examples of %the 
\yingicml{a}
target task.
To compute the proposed \ours score, we first run a single forward pass of the pre-trained model through all target examples to extract their features $\hat{Z}$, \kai{\yingicml{and} then centralize $\hat{Z}$ to have zero mean}. Second, we compute the TransRate score as $R(\hat{Z}, \epsilon) - R(\hat{Z}, \epsilon | Y)$.
In 
the experiments, we set \kai{$\yingicml{\epsilon}%\epsilon^2
=$1E-4} by default.  %B.
% As the scale of features extracted by different feature extractors may vary a lot, we normalize the features to have unit norm before calculating the coding rate. \kai{
% \yingicml{We show in Appendix~\ref{appsec:d5} that features in different dimensions, e.g., by pre-trained models in different architectures or layers, will scale the value of \ours into different ranges; }
% %The \yingicml{pre-trained} model candidates in model selection or layer selection \yingicml{usually}
% %would 
% %have different dimensions, 
% %while the value of \ours is affected by the dimension of features, as shown in Appendix~\ref{appsec:d5}. %D.5. 
% %To enable comparison between \ours computed on features with different dimensions, 
% \yingicml{we normalize} the \ours %is divided 
% by dividing the dimension $d$ for fair comparison.}
\kai{Since the scales of features extracted by different feature extractors may vary a lot, we scale the feature\yingicml{s} by {\small $1/\sqrt{\text{tr}(\hat{Z} \hat{Z}^\top)}$}, such that \yingicml{the} trace of the variance matrix of the normalized {\small $\hat{Z}$} is consistently equal to $1$ for all models. In the experiments of source selection and model selection, the features extracted by the pre-trained model trained on different source datasets or with different network architectures have signficantly different patterns, making it difficult to directly compare their TransRate. To tackle this problem and improve the performance of TransRate, we project the variance matrix {\small $\hat{Z} \hat{Z}^\top$} and {\small $\hat{Z}^c {\hat{Z}^c}{}^\top$} by a low-rank matrix {\small $(\hat{Z} \hat{Z}^\top)^{-1} \hat{U} \hat{U}^\top$ }, where {\small $\hat{U}$} is a matrix whose $c$-th row is the centroid feature of $c$-th class. 
}

We adopt LEEP~\cite{nguyen2020leep}, NCE~\cite{tran2019transferability}, Label-Feature Correlation (LFC)~\cite{deshpande2021linearized}, H-Score~\cite{bao2019information} and LogME~\cite{you2021logme} 
as the baseline methods. For a fair comparison, we assume no data from source tasks is available. 
In this scenario, the NCE score, defined by $-H(Y|Y_S)$ where $Y_S$ is the labels from the source task, cannot be computed following the procedure described in its original paper. 
Instead, we follow \cite{nguyen2020leep} to replace $Y_S$ with the softmax label generated by the classifer of %the
\yingicml{a}
pre-trained model.
Another setting for a fair comparison is that only one single forward pass through %the 
target examples is allowed for computational efficiency. In this case,
\yingicml{we calculate H-score by pre-trained features and skip the computation of H-score based on the optimal target features as suggested in}
%the optimal target features for transferability estimation in
\cite{bao2019information}.
%are not available. As suggested by \cite{bao2019information}, we calculate its estimate using the H-Score by pre-trained features and skip the computation of H-Score based on the optimal target features. 

To measure the performance of \ours and five baseline methods in estimating the transfer learning performance, we follow \cite{nguyen2020leep,tran2019transferability} to compute the Pearson correlation coefficient between the score and the average accuracy of the fine-tuned model on testing samples of the target set. The Kendall's $\tau$~\cite{kendall1938new} and its variant, weighted $\tau$, are also adopted as performance metrics. For brief, we denote the Pearson correlation coefficient, Kendall's $\tau$ and weighted $\tau$ by $R_p$, $\tau_K$, and $\tau_\omega$, respectively. 

\begin{figure*}[t!]
\centering
    \subfigure[][\tiny \makecell{ LFC from SVHN\\ $R_p\!=\!-0.1895$, \\ $\tau_K\!=\!-0.4667$, \\ $\tau_\omega\!=\!-0.5497$}]{\label{fig:l13}\includegraphics[width=0.18\textwidth]{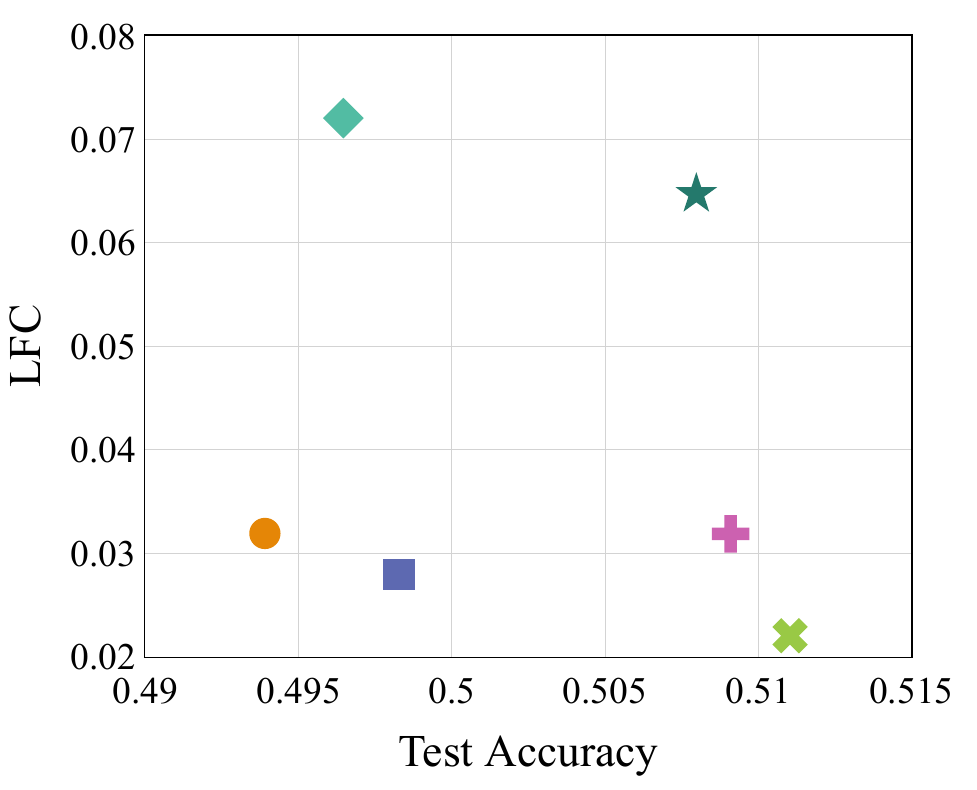} }
    \subfigure[][\tiny \makecell{ H-Score from SVHN\\ $R_p\!=\!-0.5320$, \\ $\tau_K\!=\!{-0.2000}$, \\ $\tau_\omega\!=\!-0.2993$}]{\label{fig:l14}\includegraphics[width=0.18\textwidth]{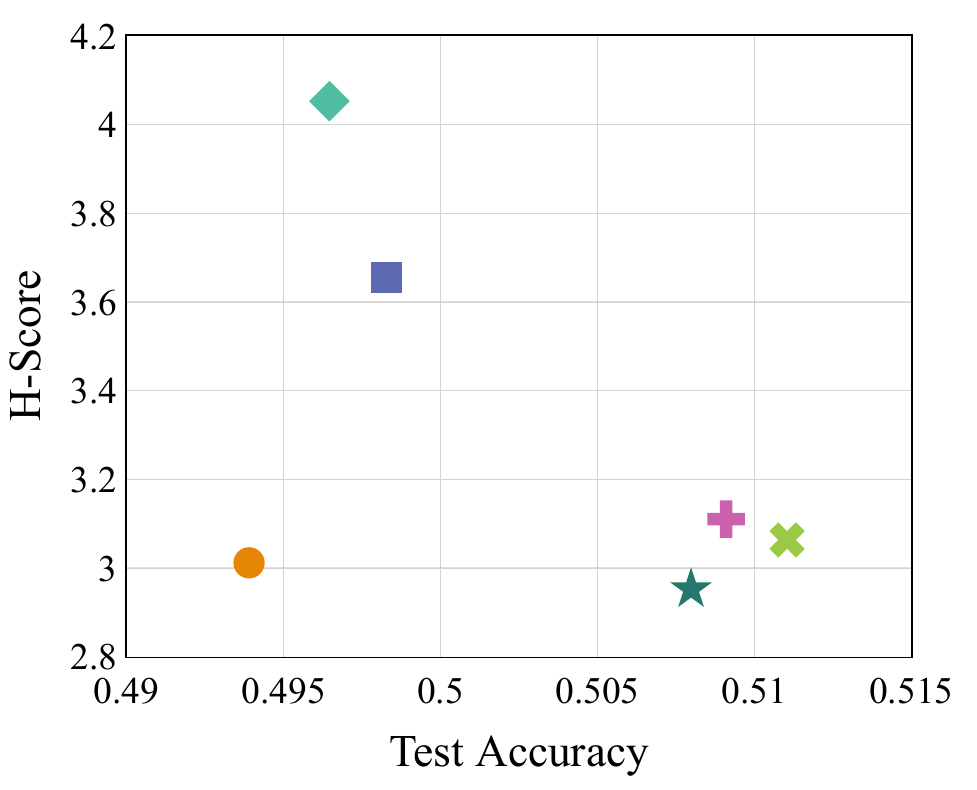} }
    \subfigure[][\tiny \makecell{ LogME from SVHN \\ $R_p\!=\!-0.3352$, \\ $\tau_K\!=\!{-0.0667}$, \\ $\tau_\omega\!=\! -0.2340$}]{\label{fig:l15}\includegraphics[width=0.18\textwidth]{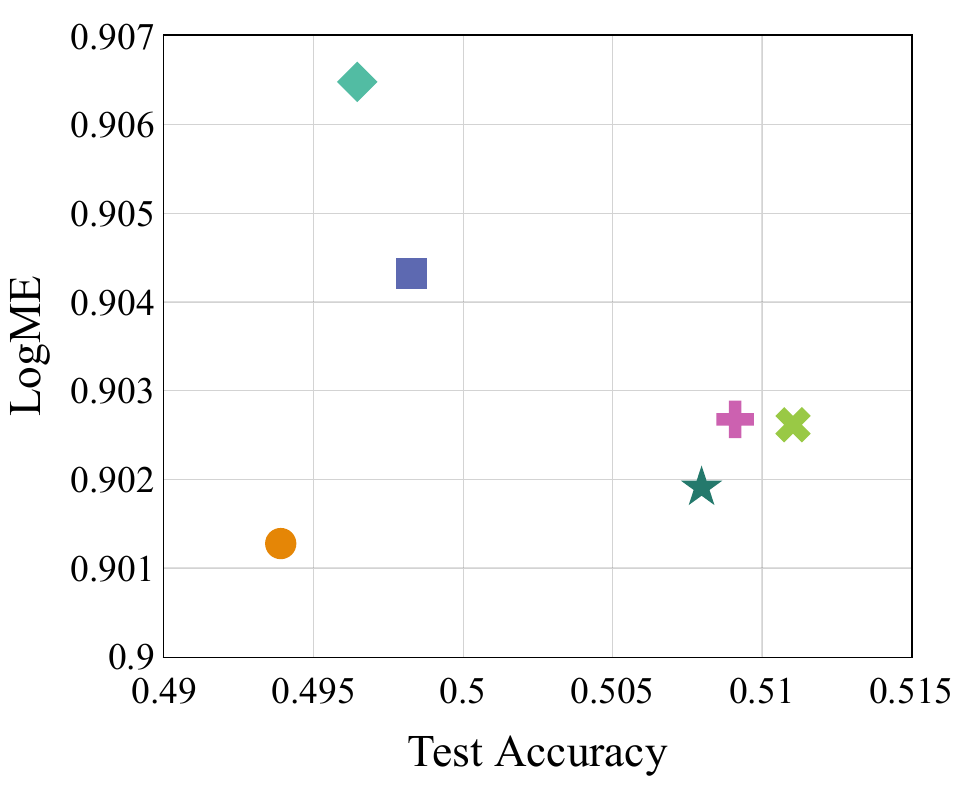}}
    \subfigure[][\tiny \makecell{ TrR from SVHN\\ $R_p\!=\!{\bf0.9769}$, \\ $\tau_K\!=\!{\bf0.8667}$, \\ $\tau_\omega\!=\!{\bf0.9265}$}]{\label{fig:l16}\includegraphics[width=0.18\textwidth]{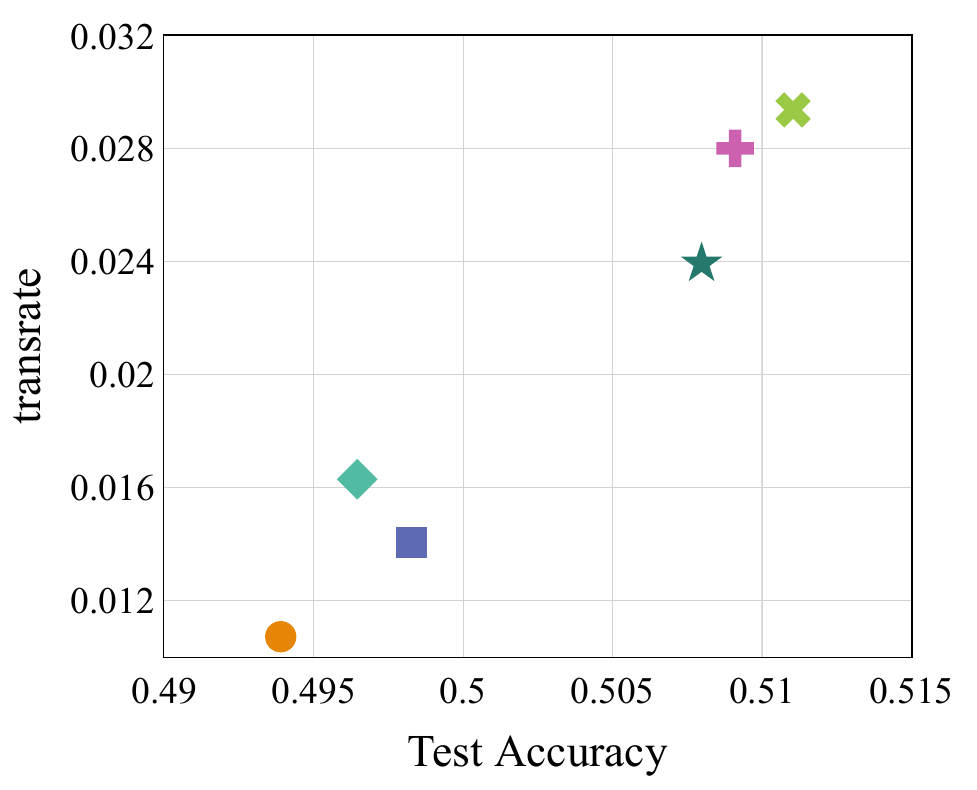}}
    \\
    \vspace{-3mm}
    \subfigure[][\tiny \makecell{ LFC from Birdsnap\\ $R_p\!=\!0.7003$, \\ $\tau_K\!=\!0.6667$, \\ $\tau_\omega\!=\!0.5200$}]{\label{fig:l23}\includegraphics[width=0.18\textwidth]{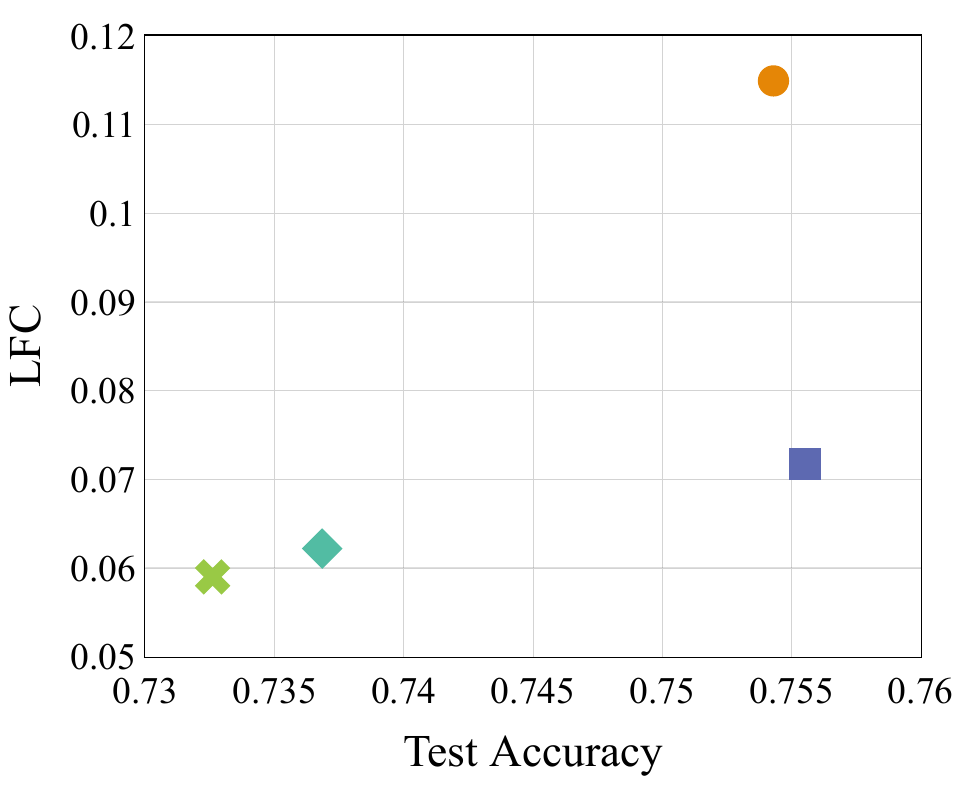} }
    \subfigure[][\tiny \makecell{ H-Score from Birdsnap\\ $R_p\!=\!0.3166$, \\ $\tau_K\!=\!{0.0000}$, \\ $\tau_\omega\!=\!0.3067$}]{\label{fig:l24}\includegraphics[width=0.18\textwidth]{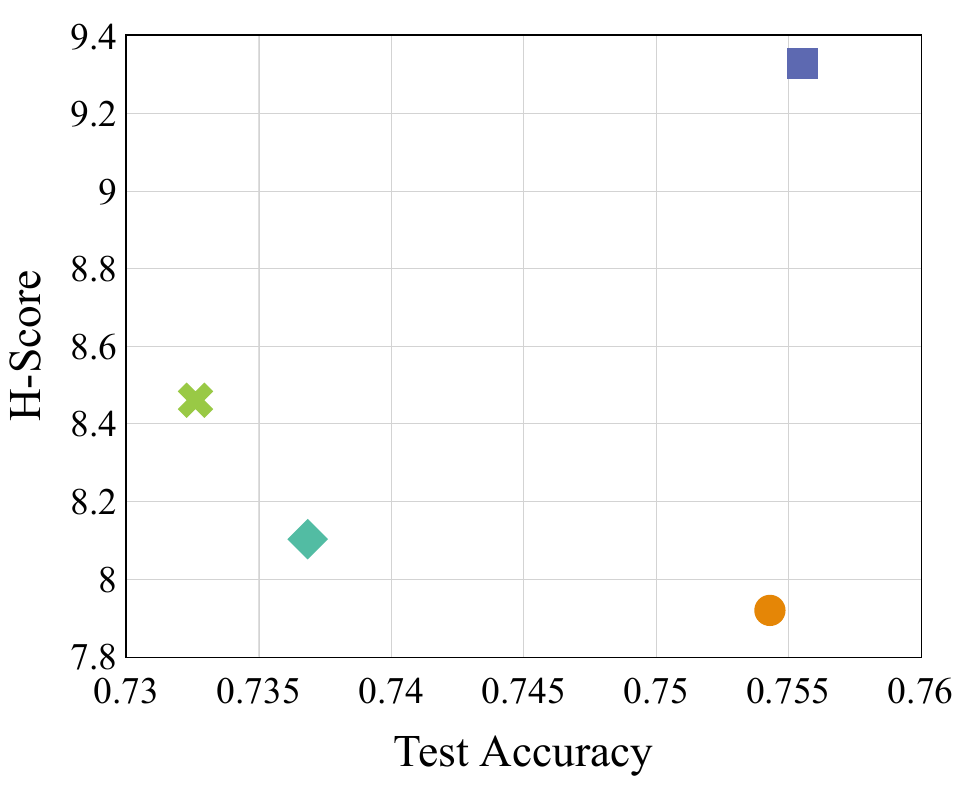} }
    \subfigure[][\tiny \makecell{ LogME from Birdsnap \\ $R_p\!=\!-0.5207$, \\ $\tau_K\!=\!-0.3333$, \\ $\tau_\omega\!=\!-0.2933$}]{\label{fig:l25}\includegraphics[width=0.18\textwidth]{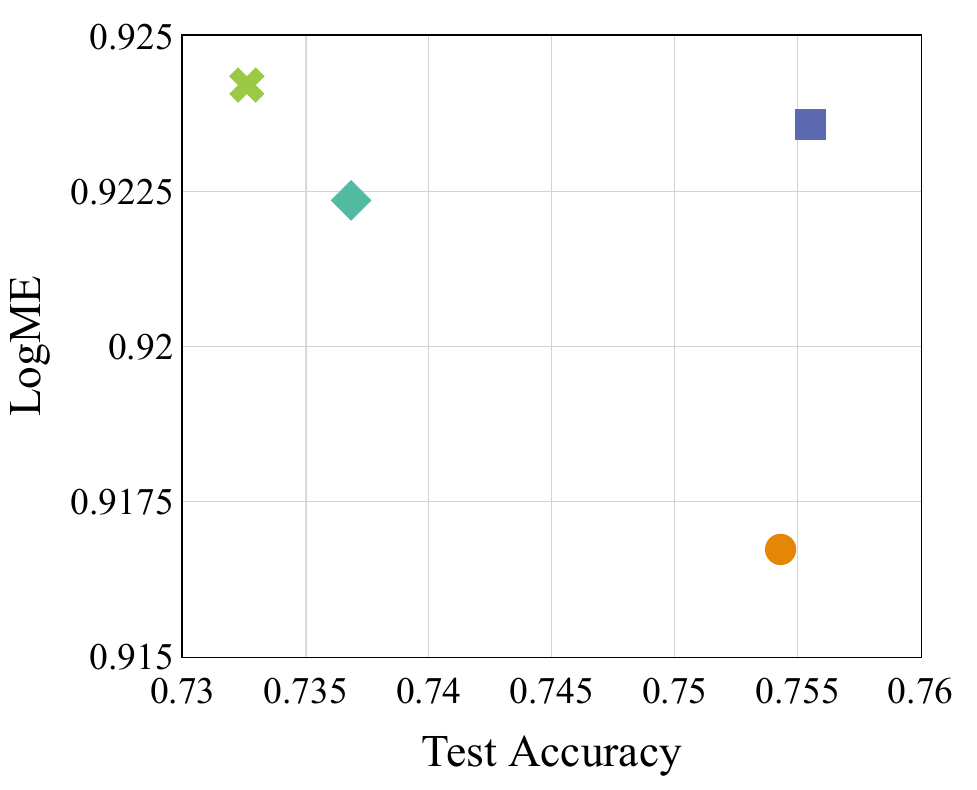} }
    \subfigure[][\tiny \makecell{ TrR from Birdsnap\\ $R_p\!=\!{\bf 0.9871}$, \\ $\tau_K\!=\!{\bf 0.6667}$, \\ $\tau_\omega\!=\!{\bf 0.8133}$}]{\label{fig:l26}\includegraphics[width=0.18\textwidth]{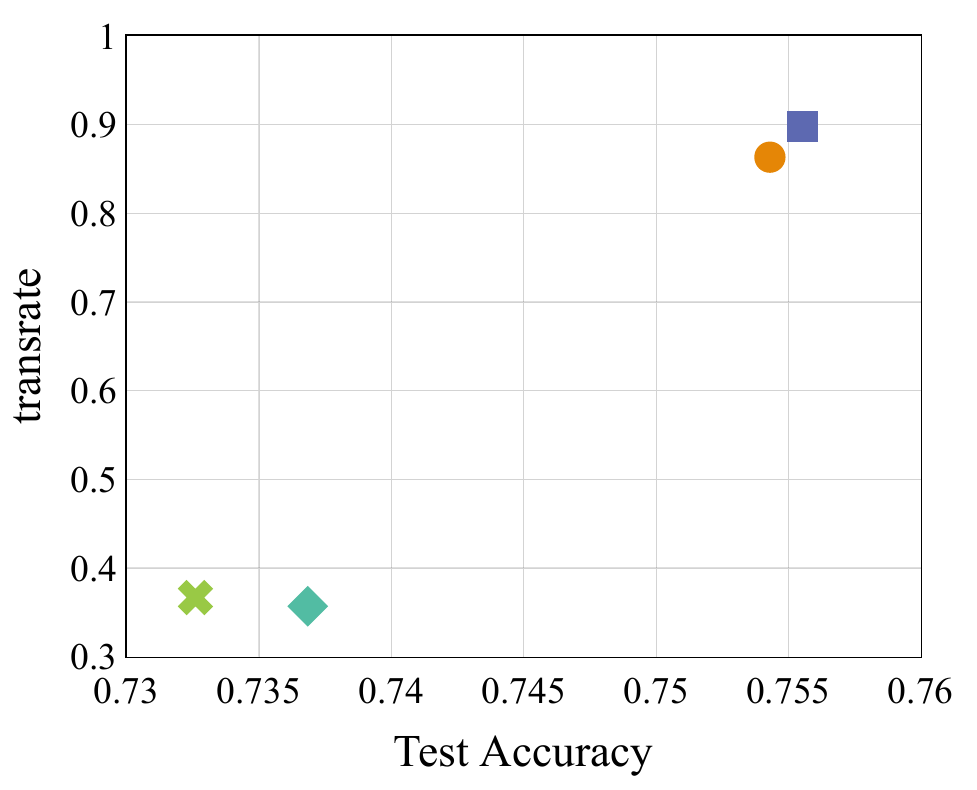} }
    \\
    \subfigure{\includegraphics[width=0.5\textwidth]{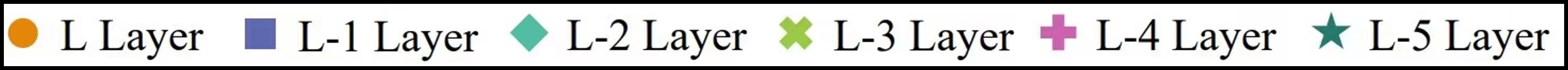}}
    \vspace{-0mm}\caption{\small Transferability estimation on transferring different layers of ResNet-20 pre-trained on SVHN and of ResNet-18 pre-trained on Birdsnap to CIFAR-100.}\vspace{-4mm}
\label{fig:layer_selection}
\end{figure*}
\subsection{Results} \label{sec:exp_transfer_results}
\textbf{TransRate as a Criterion for Source Selection.}
%\label{sec:source_selection}
One of the most important applications of \ours is source model selection for a target task. %In the experiments, 
\yingicml{Here}
we evaluate the performance of \ours and other baseline measures for selection of a pre-trained model from $11$ source datasets to a specific target task. The source datasets are ImageNet~\cite{russakovsky2015imagenet} and $10$ image datasets from \cite{salman2020adversarially}, including Caltech-101, Caltech-256, DTD, Flowers, SUN397, Pets, Food, Aircraft, Birds and Cars. For each source dataset, we pre-train a ResNet-18~\cite{he2016deep}, freeze it and discard the source data during fine-tuning. %the target task.
CIFAR-100~\cite{krizhevsky2009learning} and %Fashion-MNIST (FMNIST)
\yingicml{FMNIST}~\cite{xiao2017fashion} are adopted as the target tasks. %For CIFAR-100, all the training data are available for fine-tuning. For FMNIST, we limit the number of examples per class to be $300$, so that fine-tuning outperforms training from scratch. 
\kai{For all target datasets, we use the whole training set for fine-tuning and for transferability estimation.}
The details of these datasets and their pre-trained models are available in Appendix~\ref{appsec:a}; experiments on more target tasks are available in Appendix~\ref{appsec:b1}.

Figure~\ref{fig:source_selection} \yingicml{show that}
\kai{LFC, H-Score, LogME and \ours all correctly predict the ranking of top-5 source models, except the one pre-trained on Caltech-256, for CIFAR-100. \ours achieves the best $R_p$, $\tau_K$ and $\tau_\omega$, which means that the ranking predicted by \ours is more accurate than \yingicml{the others}.}
%the ranking predicted by the baseline methods.}
As for FMNIST, \ours correctly predicts the top-4 source models, \kai{though slightly underestimates the transferability of the Caltech-101 model}, while all the other baselines fail to accurately predict the rank of the Caltech-101 model %,
which comes the second among all. % the models. 
\ours outperforms the baselines by a large margin in all correlation coefficients. 
These results demonstrate that TransRate can serve as a practical criterion for source selection in transfer learning.

\begin{figure*}[t!]
\centering
    \subfigure[][\tiny \makecell{ NCE on CIFAR\\ $R_p\!=\!0.9654$, \\ $\tau_K\!=\!0.8095$, \\ $\tau_\omega\!=\! 0.7322$}]{\label{fig:m11}\includegraphics[width=0.15\textwidth]{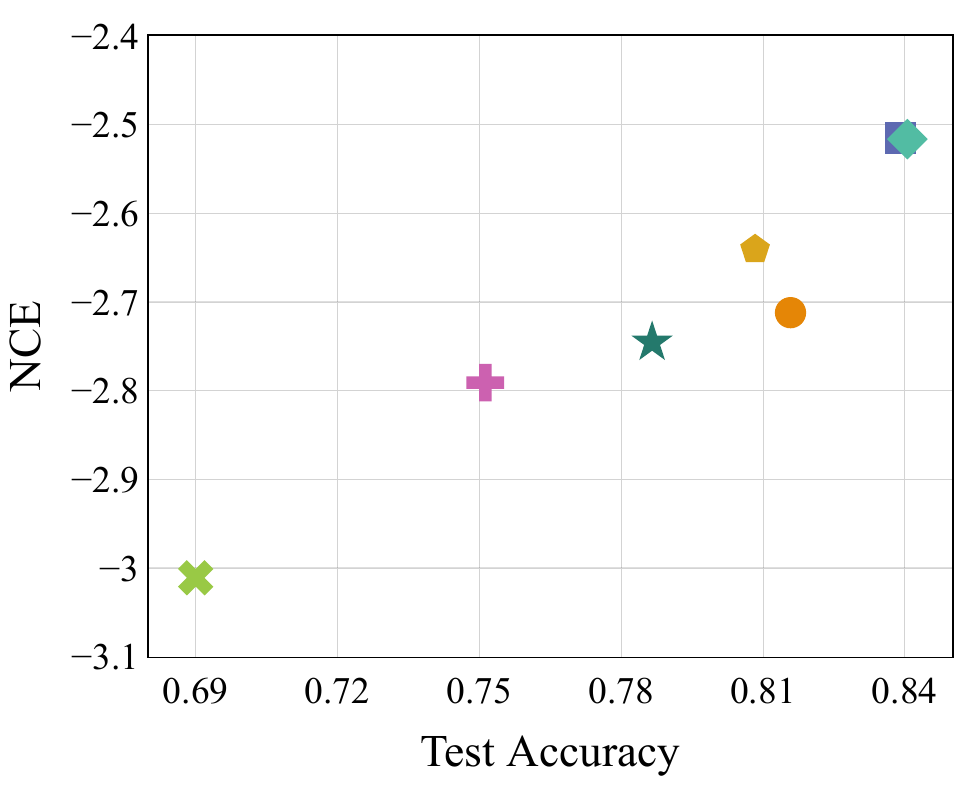} }
    \subfigure[][\tiny \makecell{ LEEP on CIFAR \\ $R_p\!=\!{\bf 0.9696}$, \\ $\tau_K\!=\!0.8095$, \\ $\tau_\omega\!=\!0.8650$}]{\label{fig:m12}\includegraphics[width=0.15\textwidth]{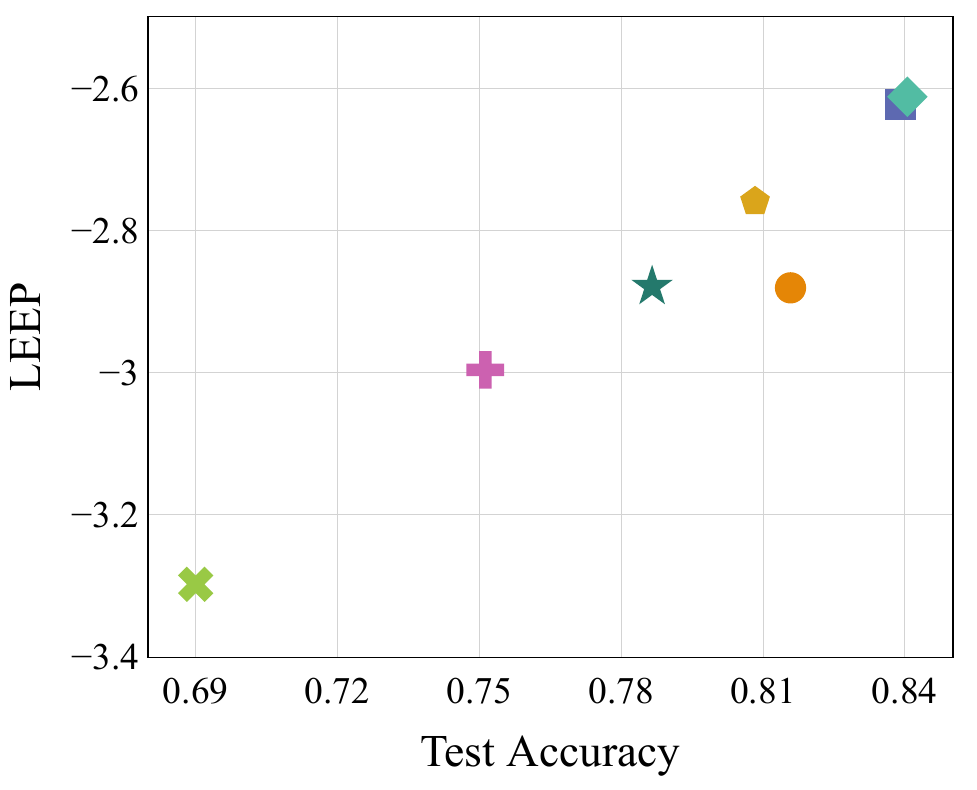} }
    \subfigure[][\tiny \makecell{ LFC on CIFAR\\ $R_p\!=\!0.0664$, \\ $\tau_K\!=\!-0.0476$, \\ $\tau_\omega\!=\!-0.0680$}]{\label{fig:m13}\includegraphics[width=0.15\textwidth]{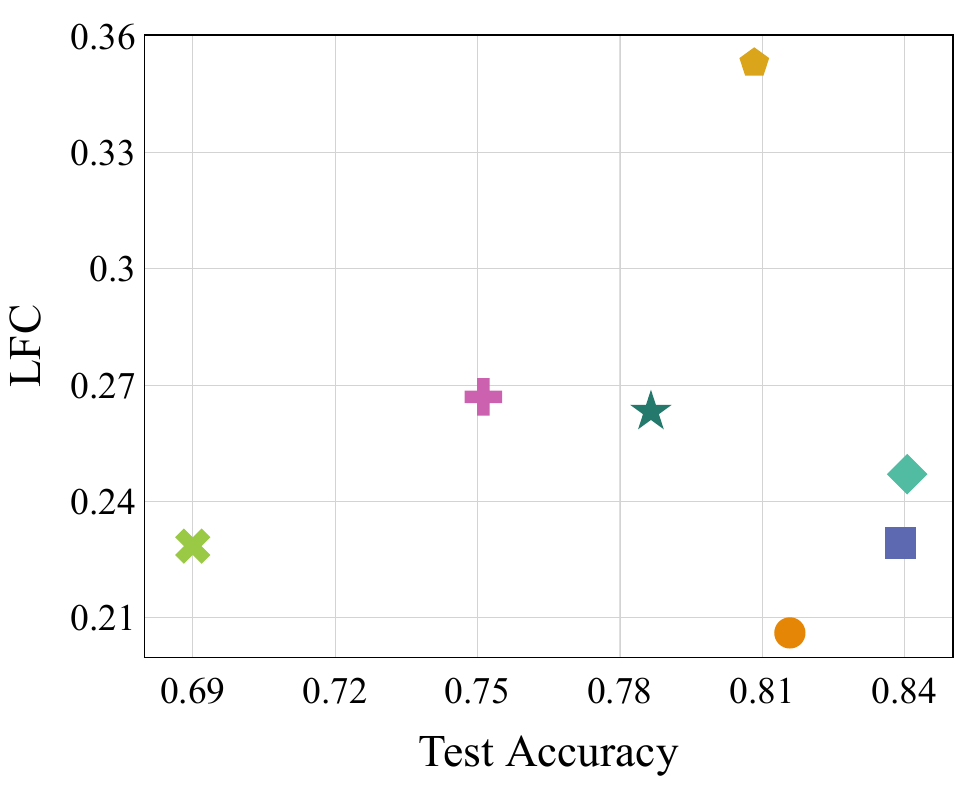} }
    \subfigure[][\tiny \makecell{ H-Score on CIFAR\\ $R_p\!=\!0.3802$, \\ $\tau_K\!=\!{ 0.3333}$, \\ $\tau_\omega\!=\!0.5041$}]{\label{fig:m14}\includegraphics[width=0.15\textwidth]{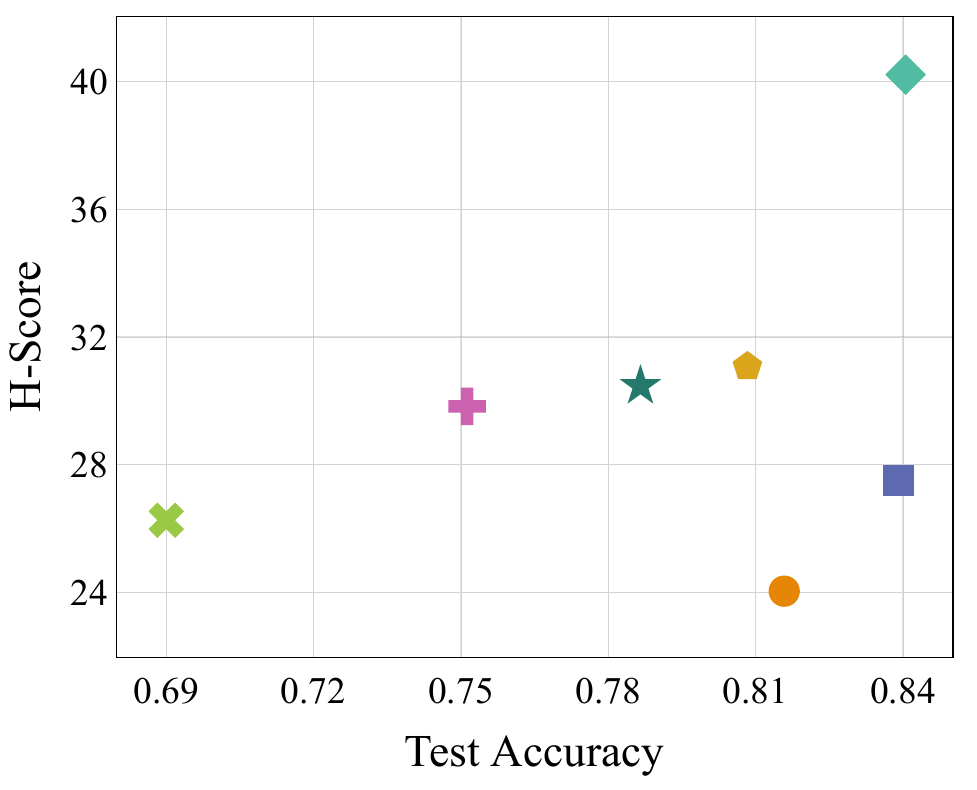} }
    \subfigure[][\tiny \makecell{ LogME on CIFAR \\ $R_p\!=\!0.5672$, \\ $\tau_K\!=\!{ 0.5238}$, \\ $\tau_\omega\!=\! 0.6186$}]{\label{fig:m15}\includegraphics[width=0.15\textwidth]{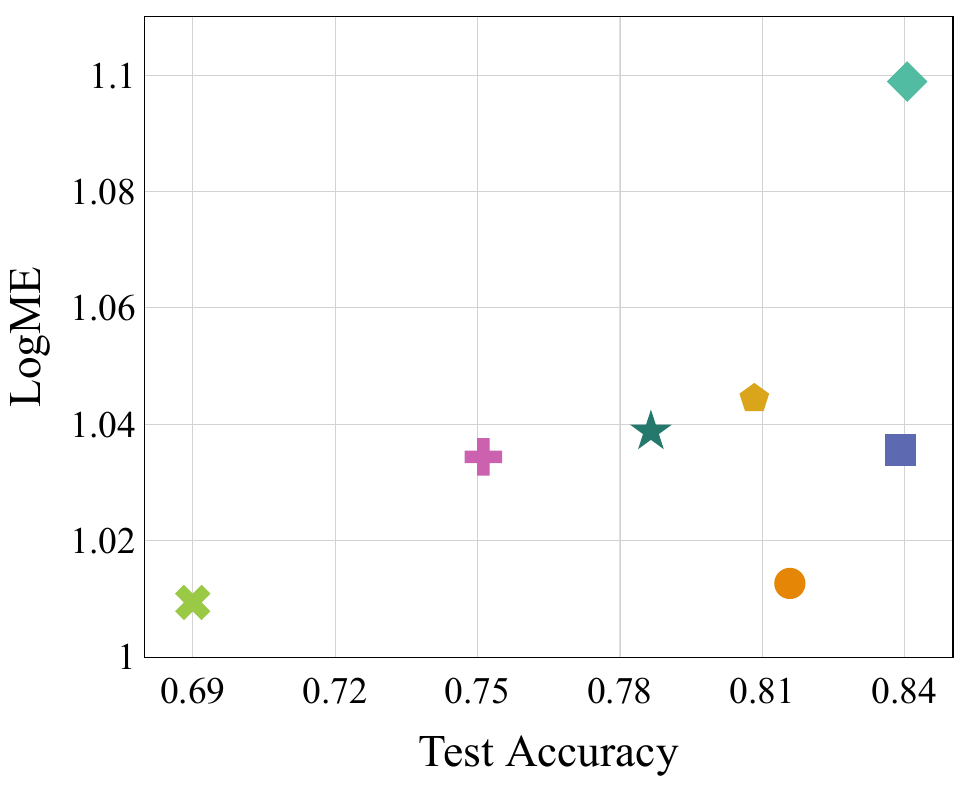} }
    \subfigure[][\tiny \makecell{ TrR on CIFAR\\ $R_p\!=\!{0.8055}$, \\ $\tau_K\!=\!{\bf 0.9048}$, \\ $\tau_\omega\!=\!{\bf0.9421}$}]{\label{fig:m16}\includegraphics[width=0.15\textwidth]{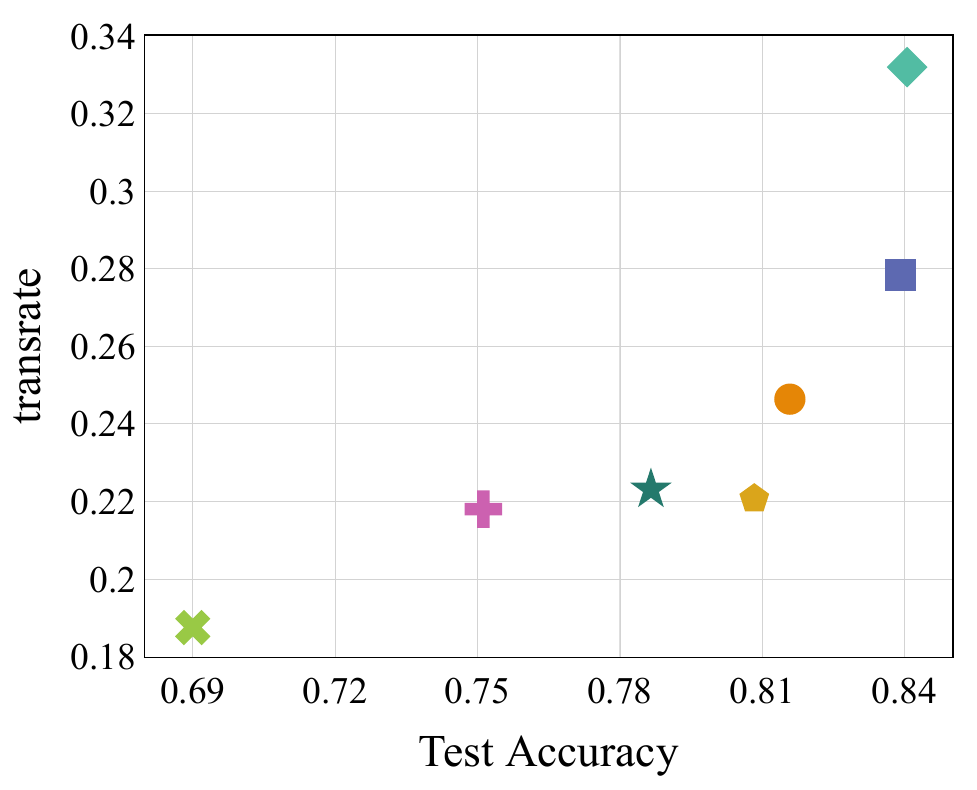} }
    \\
    \subfigure{\includegraphics[width=0.8\textwidth]{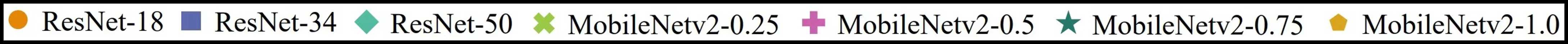}}
    \vspace{-1mm}
    \caption{\small Result on transferring models with different architectures from ImageNet to CIFAR-100.}\vspace{-4mm}
\label{fig:model_selection}
\end{figure*}

\label{appsec:b7}
\begin{figure*}[t!]
\centering
    \subfigure[][\tiny \makecell{ H-Score on BBBP\\ $R_p\!=\!-0.1572$, \\ $\tau_K\!=\!-0.3333$, \\ $\tau_\omega\!=\!-0.2933$}]{\label{fig:d21}\includegraphics[width=0.15\textwidth]{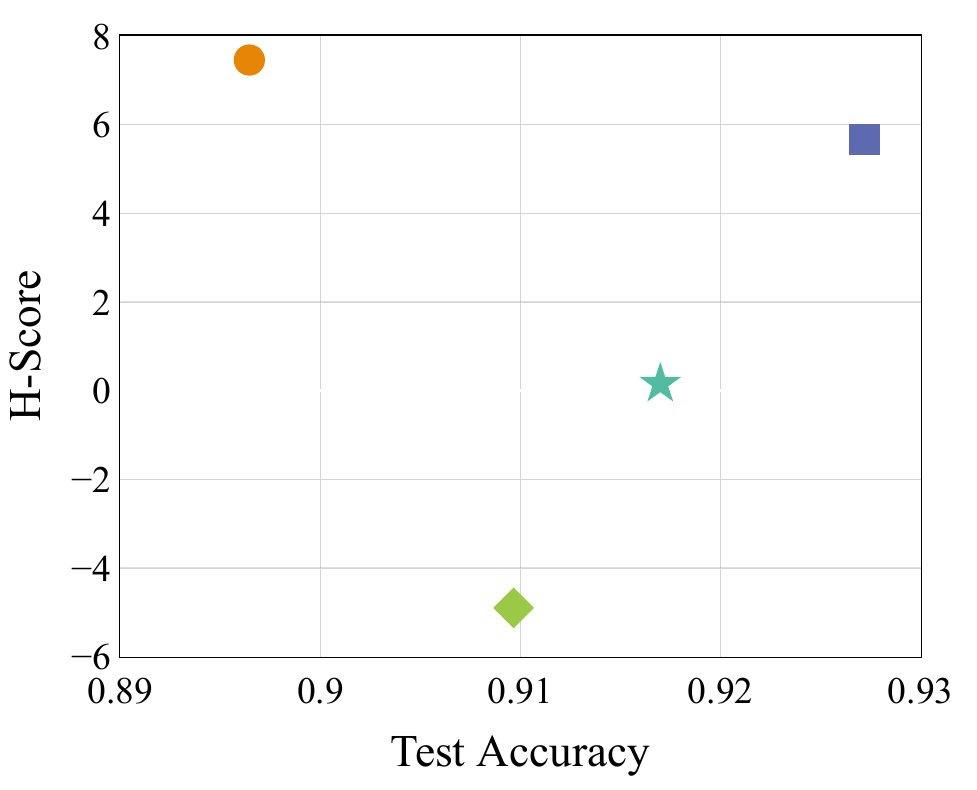} }
    \subfigure[][\tiny \makecell{ LFC on BBBP\\ $R_p\!=\!-0.1034$, \\ $\tau_K\!=\!-0.0$, \\ $\tau_\omega\!=\!-0.0667$}]{\label{fig:d22}\includegraphics[width=0.15\textwidth]{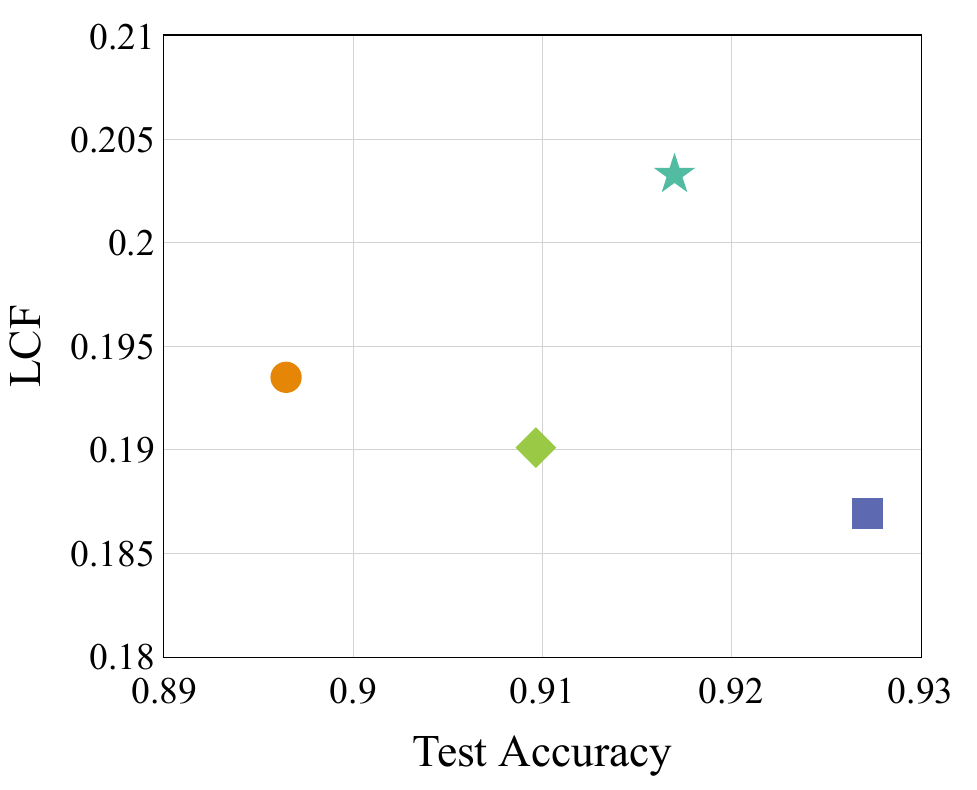} }
    \subfigure[][\tiny \makecell{ LogME on BBBP\\ $R_p\!=\!-0.1838$, \\ $\tau_K\!=\!0.0$, \\ $\tau_\omega\!=\!-0.0667$}]{\label{fig:d23}\includegraphics[width=0.15\textwidth]{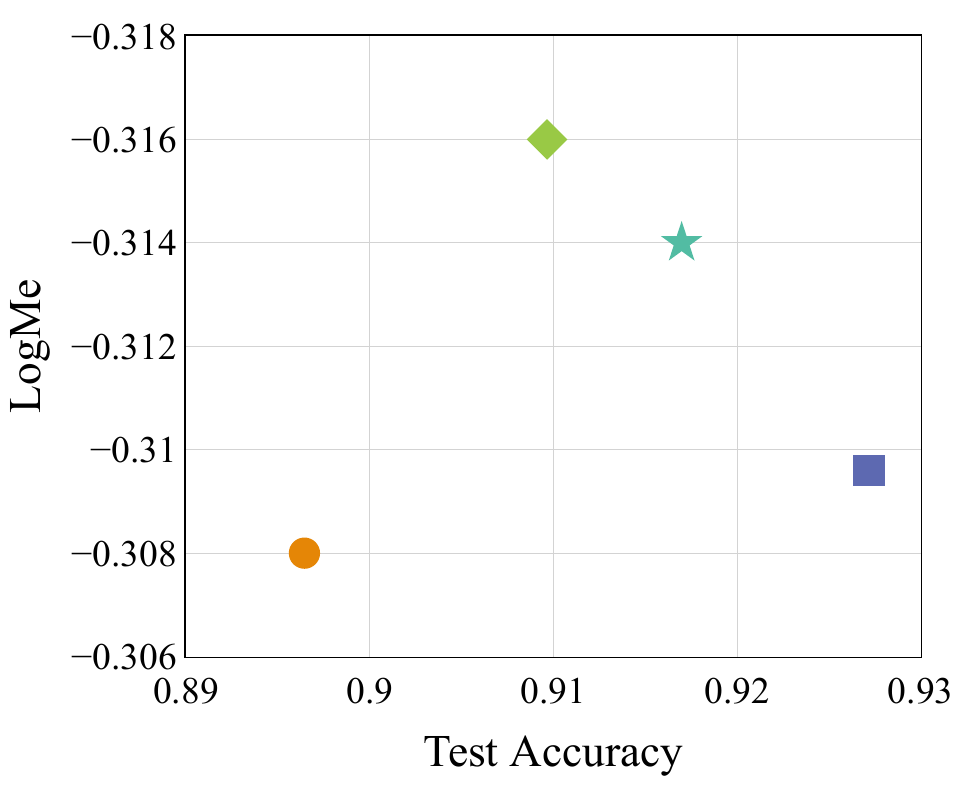} }
    \subfigure[][\tiny \makecell{ TrR on BBBP\\ $R_p\!=\!{\bf0.6129}$, \\ $\tau_K\!=\!{\bf 0.6667}$, \\ $\tau_\omega\!=\!{\bf0.7333}$}]{\label{fig:d24}\includegraphics[width=0.15\textwidth]{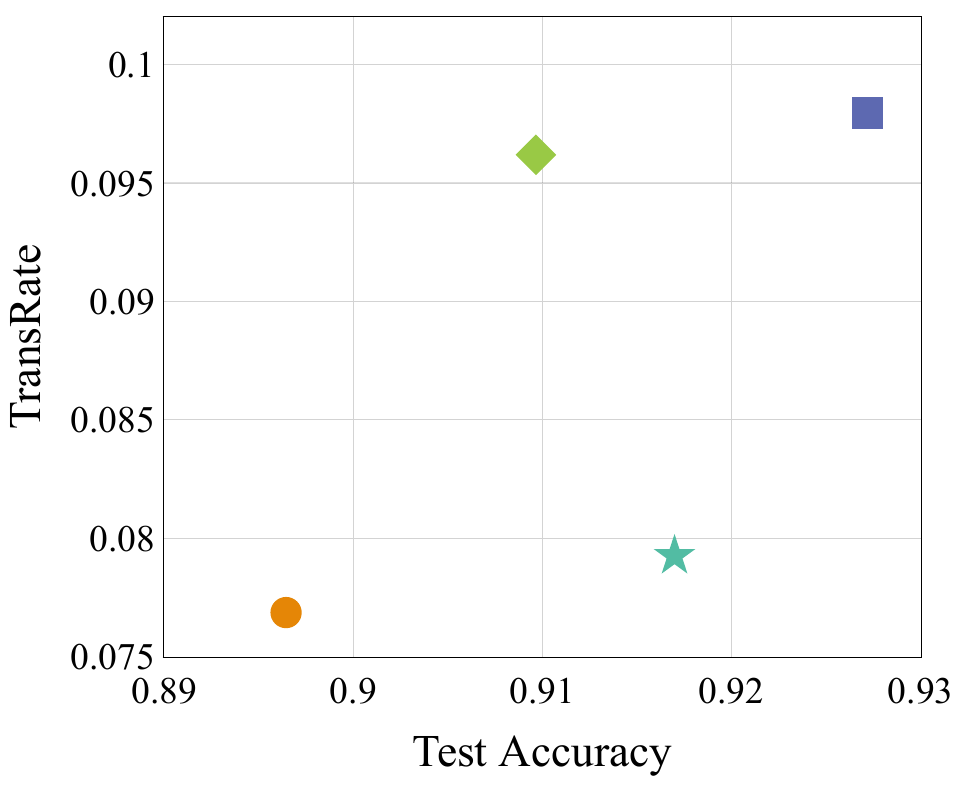} }
    \subfigure[][\tiny \makecell{ LogME on FreeSolv \\ $R_p\!=\!-0.5952$, \\ $\tau_K\!=\!-0.3333$, \\ $\tau_\omega\!=\!-0.3333$}]{\label{fig:d41}\includegraphics[width=0.15\textwidth]{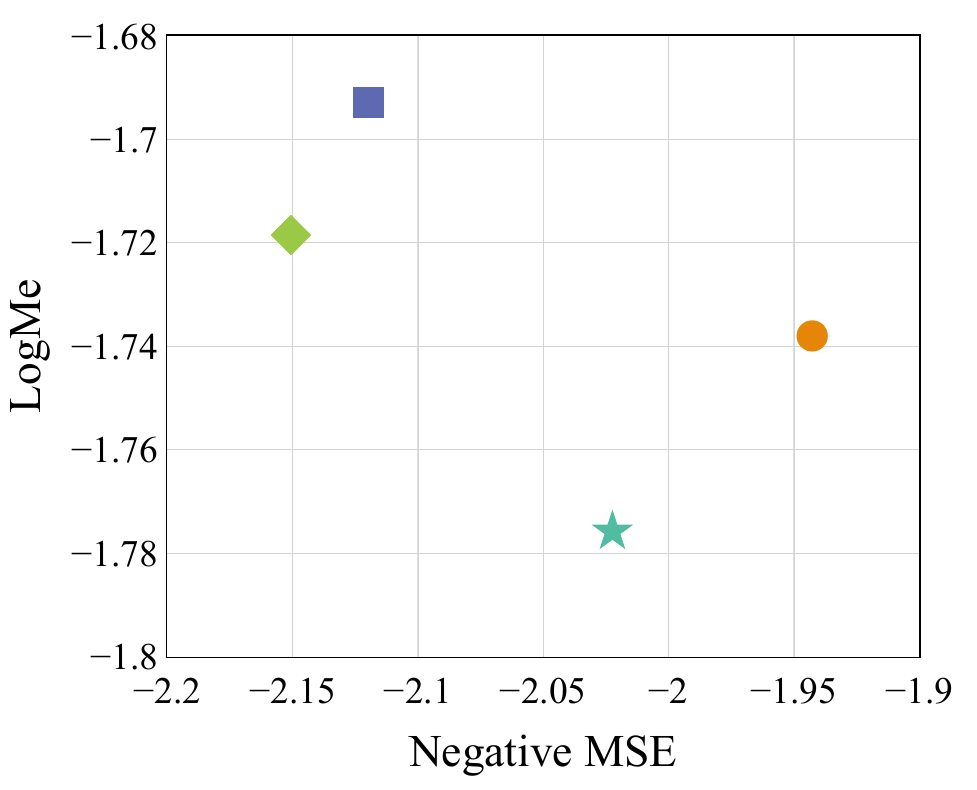} }
    \subfigure[][\tiny \makecell{ TrR on FreeSolv\\ $R_p\!=\!{\bf0.9582}$, \\ $\tau_K\!=\!{\bf1.0}$, \\ $\tau_\omega\!=\!{\bf1.0}$}]{\label{fig:d42}\includegraphics[width=0.15\textwidth]{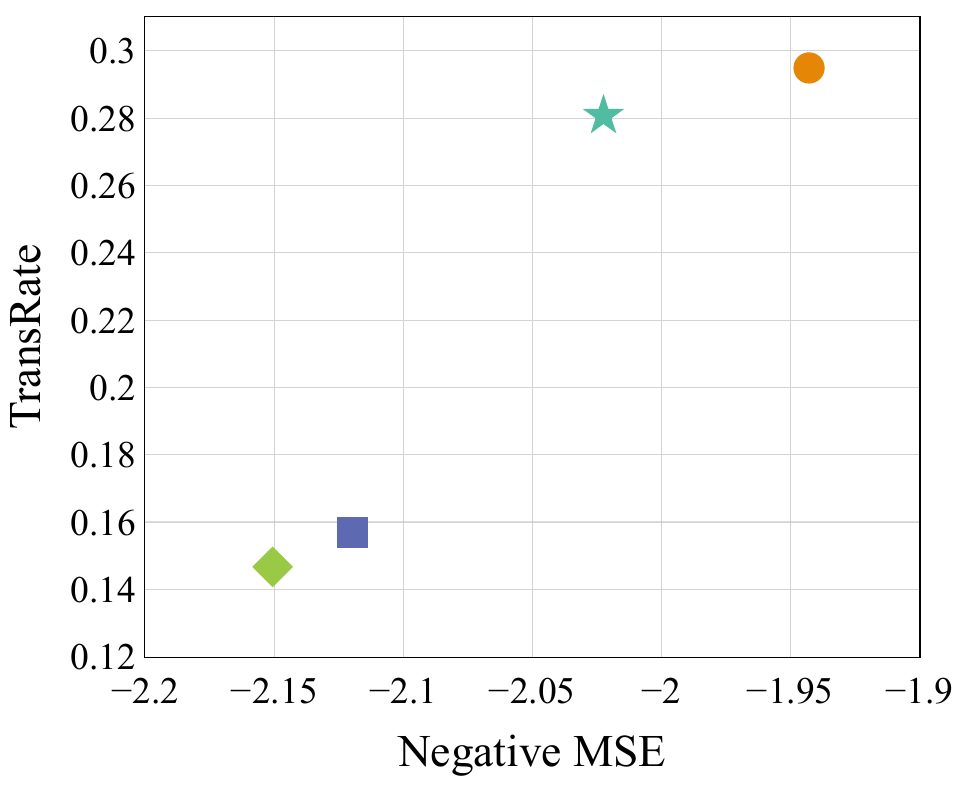} }
    \\
    \vspace{-3mm}
    \subfigure[][\tiny \makecell{ H-Score on BACE\\ $R_p\!=\!-0.7514$, \\ $\tau_K\!=\!-0.5477$, \\ $\tau_\omega\!=\!-0.5095$}]{\label{fig:d11}\includegraphics[width=0.15\textwidth]{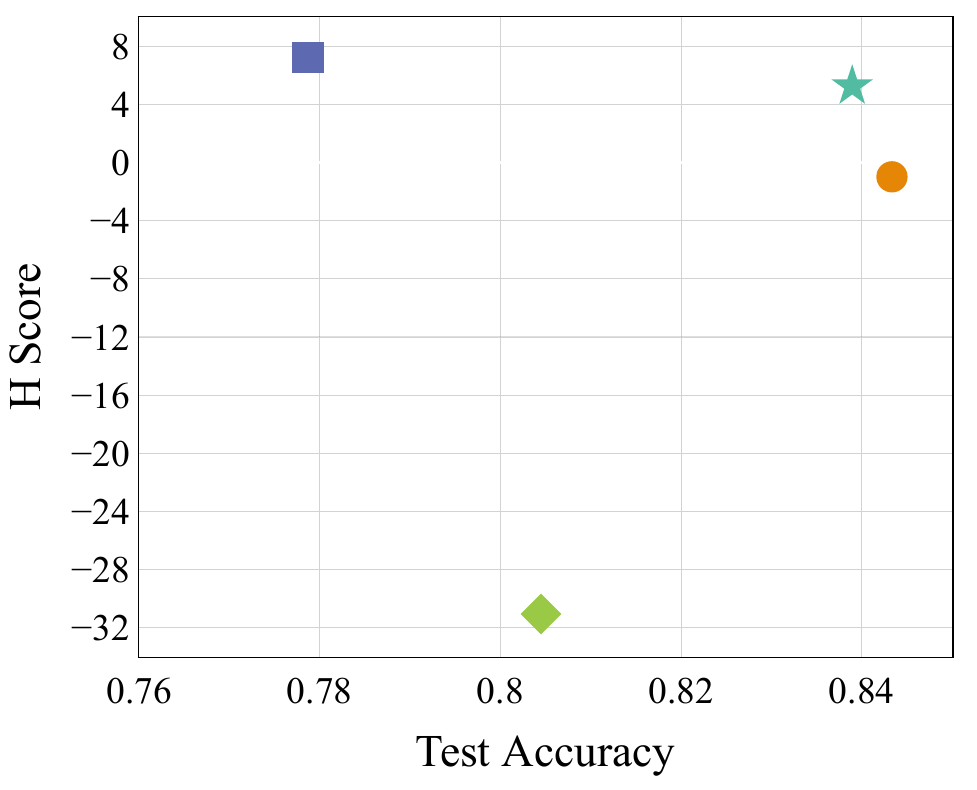} }
    \subfigure[][\tiny \makecell{ LFC on BACE\\ $R_p\!=\!0.1160$, \\ $\tau_K\!=\!-0.3333$, \\ $\tau_\omega\!=\!-0.4400$}]{\label{fig:d12}\includegraphics[width=0.15\textwidth]{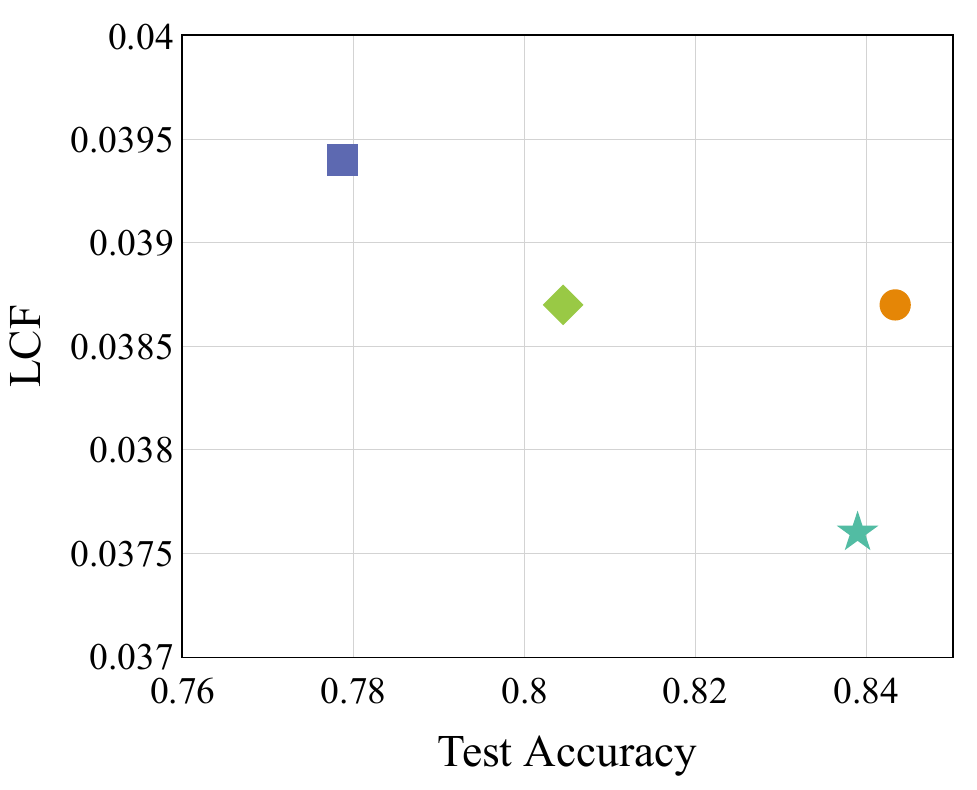} }
    \subfigure[][\tiny \makecell{ LogME on BACE\\ $R_p\!=\!-0.8625$, \\ $\tau_K\!=\!-1.0$, \\ $\tau_\omega\!=\!-1.0$}]{\label{fig:d13}\includegraphics[width=0.15\textwidth]{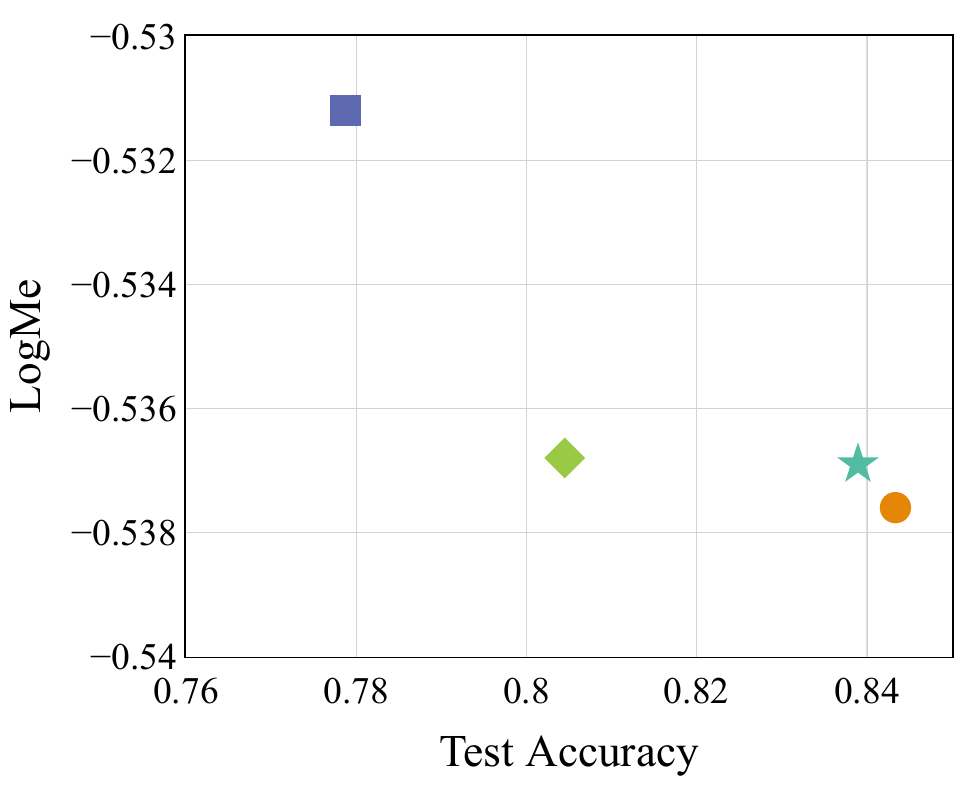} }
    \subfigure[][\tiny \makecell{ TrR on BACE\\ $R_p\!=\!{\bf0.9424}$, \\ $\tau_K\!=\!{\bf1.0}$, \\ $\tau_\omega\!=\!{\bf1.0}$}]{\label{fig:d14}\includegraphics[width=0.15\textwidth]{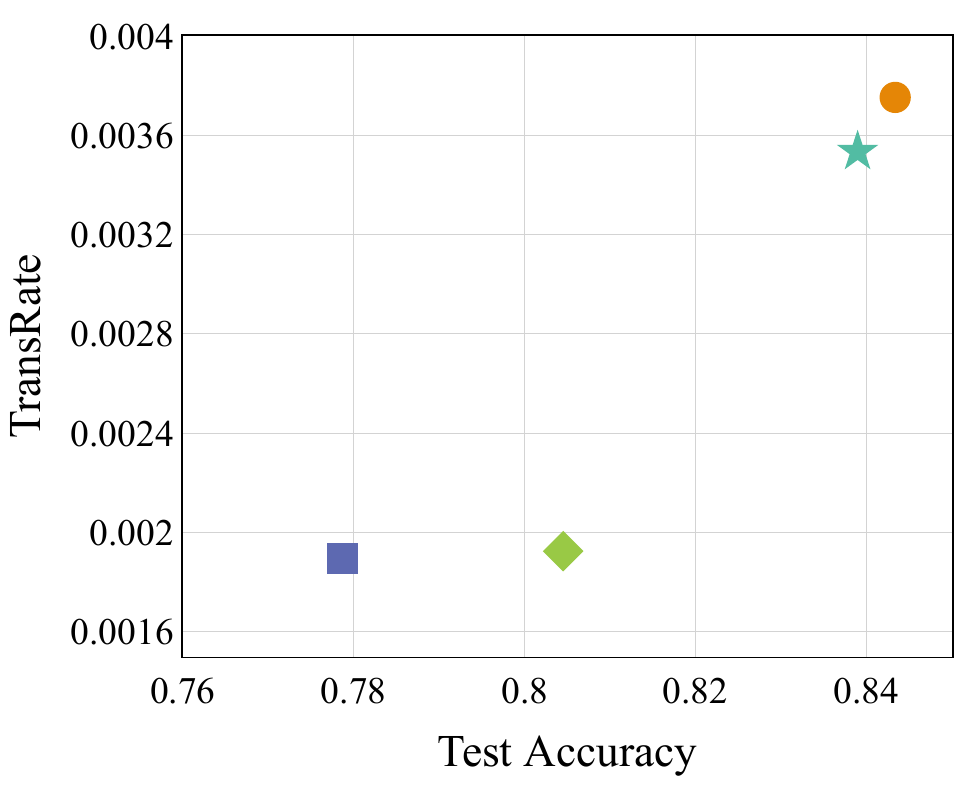} }
    \subfigure[][\tiny \makecell{ H-LogME on ESOL\\ $R_p\!=\!-0.3825$, \\ $\tau_K\!=\!-0.3333$, \\ $\tau_\omega\!=\!-0.4400$}]{\label{fig:d31}\includegraphics[width=0.15\textwidth]{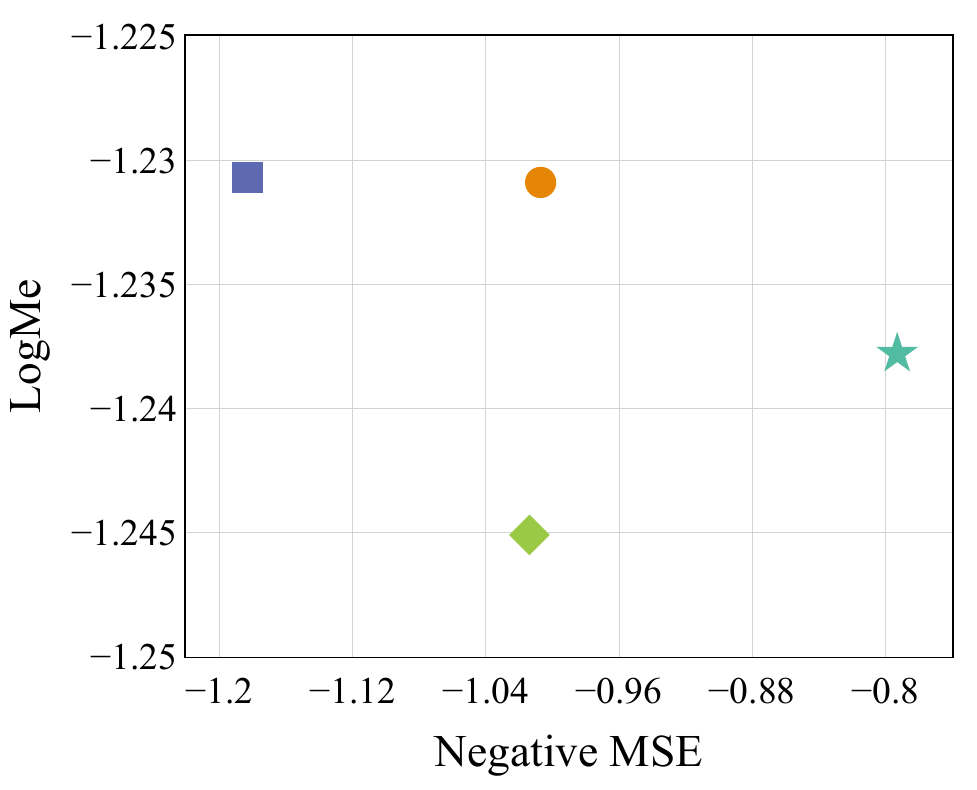} }
    \subfigure[][\tiny \makecell{ TrR on ESOL\\ $R_p\!=\!{\bf0.7422}$, \\ $\tau_K\!=\!{\bf1.0}$, \\ $\tau_\omega\!=\!{\bf1.0}$}]{\label{fig:d32}\includegraphics[width=0.15\textwidth]{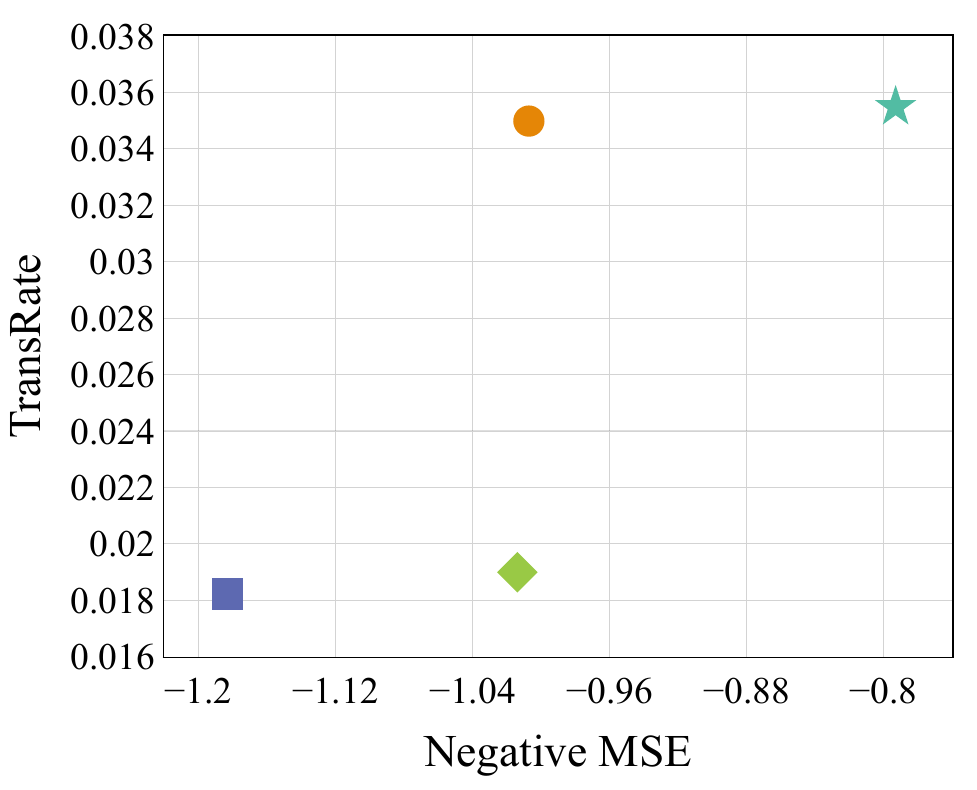} }
    \\
    \subfigure{\includegraphics[width=0.35\textwidth]{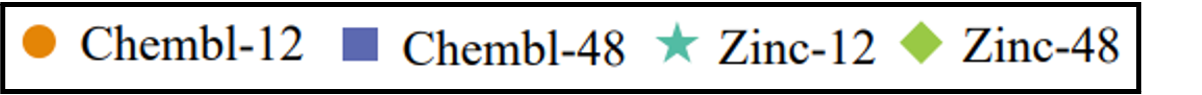}}
    \vspace{-2mm}
    \caption{\small Result on transfering GNNs pre-trained on molecules sampled from different datasets to molecule property prediction tasks.}\vspace{-4mm}
\label{fig:drug}
\end{figure*}

\textbf{TransRate as a Criterion for Layer Selection.}
%\label{sec:exp_trans_vs_layer}
As introduced in Section \ref{sec:intro}, transferring different layers of the pre-trained model results in different accuracies; that is, the optimal layers to transfer are task-specific. 
To study the correlation between all transferability measures and the performance of transferring different layers, we conduct experiments on transferring the first layer to the $K$-th layer only. 
% In this experiment, we consider transferring a ResNet-20 model pre-trained on CIFAR-10 or SVHN to CIFAR-100. The candidate values of $K$ for ResNet-20 are $\{9, 11, 13, 15, 17, 19\}$. 
\kai{In this experiment, we consider transferring a ResNet-20 model pre-trained on SVHN or a ResNet-18 model pre-trained on Birdsnap to CIFAR-100. The candidate values of $K$ for ResNet-20 and ResNet-18 are $\{9, 11, 13, 15, 17, 19\}$ and $\{11, 13, 15, 17\}$.}
%After determining 
\yingicml{The selected} %the 
layers to transfer %, these selected layers
are initialized by the pre-trained model and the remaining %layers
\yingicml{ones}
are trained from scratch. 
NCE and LEEP are excluded in this experiment as they are not applicable to layer selection. For \ours and other baselines, the transferability is estimated %by 
\yingicml{using} the features extracted by
the first $K$-th layer. Note that when $K$ is not the last layer, we will apply the average pooling function on the features, which is used by the original ResNet in the last layer. 
More details about the experimental settings are available in Appendix~\ref{appsec:a}. %A.

From %the results in 
Figure \ref{fig:layer_selection} %, 
we observe that \ours is the only method that correctly predicts the layer with the highest performance in both experiments. In the experiment transferring different layers of the pre-trained model from SVHN, \ours achieves the highest correlation coefficients. The baselines even have negative coefficients, which means that their predictions are inverse to the correct ranking. 
\kai{In the experiment transferring from Birdsnap, \ours correctly predicts the rank of the top 2 \yingicml{layers with the highest} transfer performance 
%after different layers are transferred 
and also achieves the highest correlation coefficients.}
Both experiments demonstrate the superiority of \ours in selecting the best layer for transfer. More experiments of layer selection with different source datasets, models, and target datasets are available in Appendix~\ref{appsec:b2}. % B.2.

\begin{figure*}[t!]
\centering
    \subfigure[][\tiny \makecell{ LFC on CIFAR\\ $R_p\!=\!0.4261$, \\ $\tau_K\!=\!0.6667$, \\ $\tau_\omega\!=\!0.5200$}]{\label{fig:ss13}\includegraphics[width=0.18\textwidth]{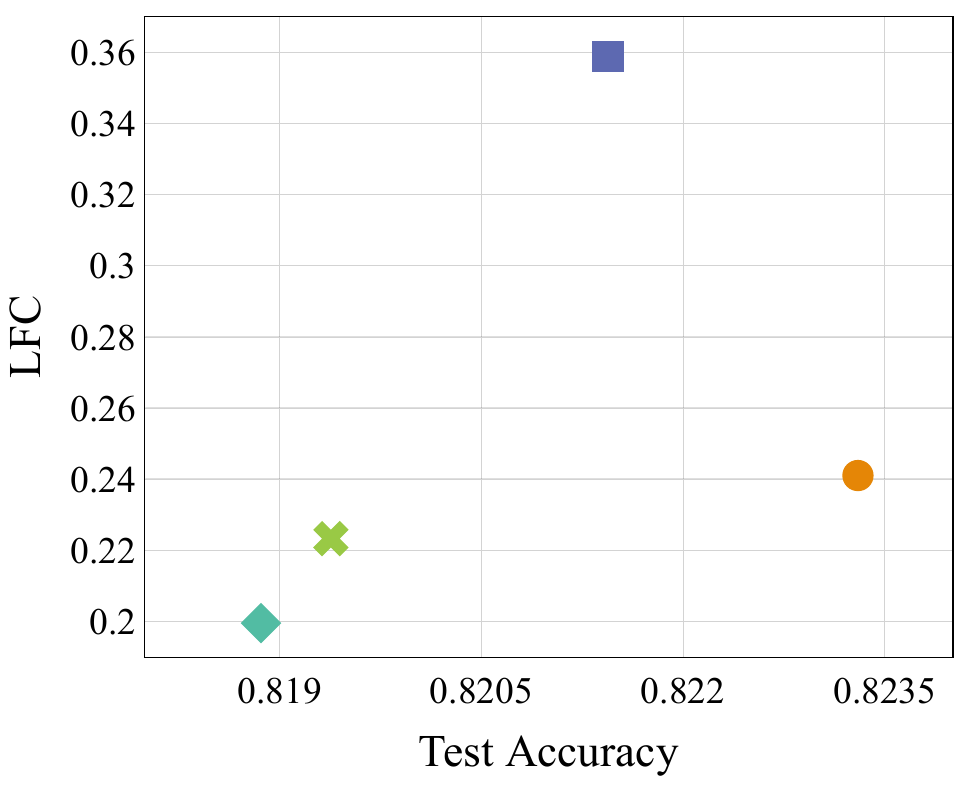} }
    \subfigure[][\tiny \makecell{ H-Score on CIFAR\\ $R_p\!=\!-0.9006$, \\ $\tau_K\!=\!{ -0.6667}$, \\ $\tau_\omega\!=\!-0.6667$}]{\label{fig:ss14}\includegraphics[width=0.18\textwidth]{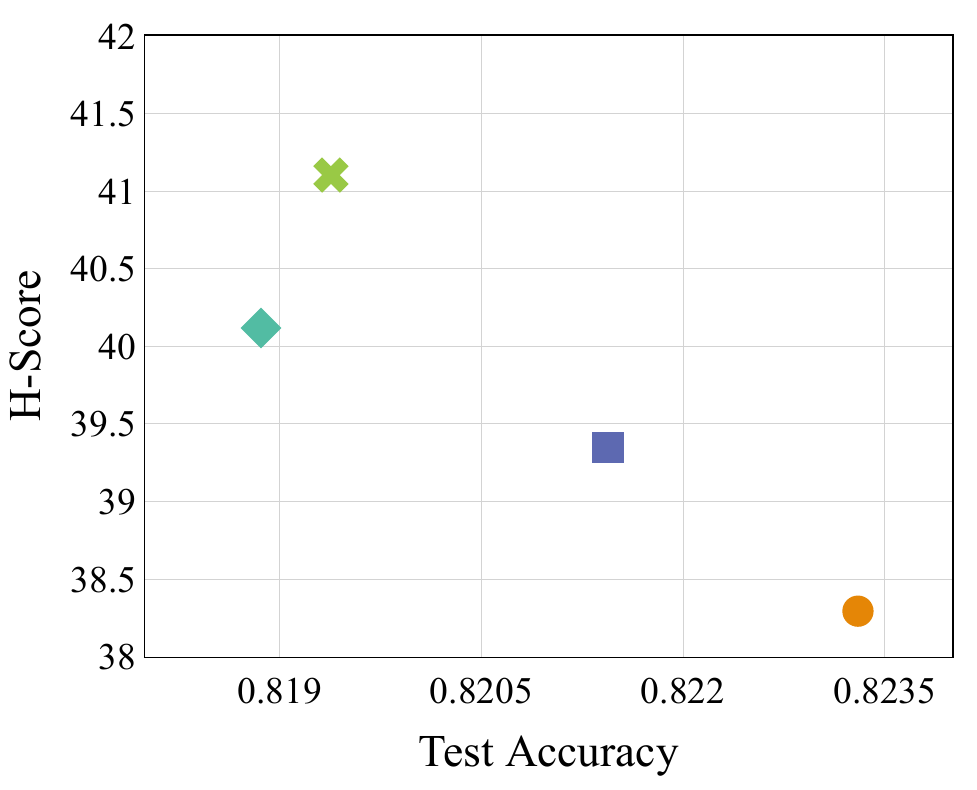} }
    \subfigure[][\tiny \makecell{ LogME on CIFAR \\ $R_p\!=\!-0.8595$, \\ $\tau_K\!=\!{ -0.6667}$, \\ $\tau_\omega\!=\! -.6667$}]{\label{fig:ss15}\includegraphics[width=0.18\textwidth]{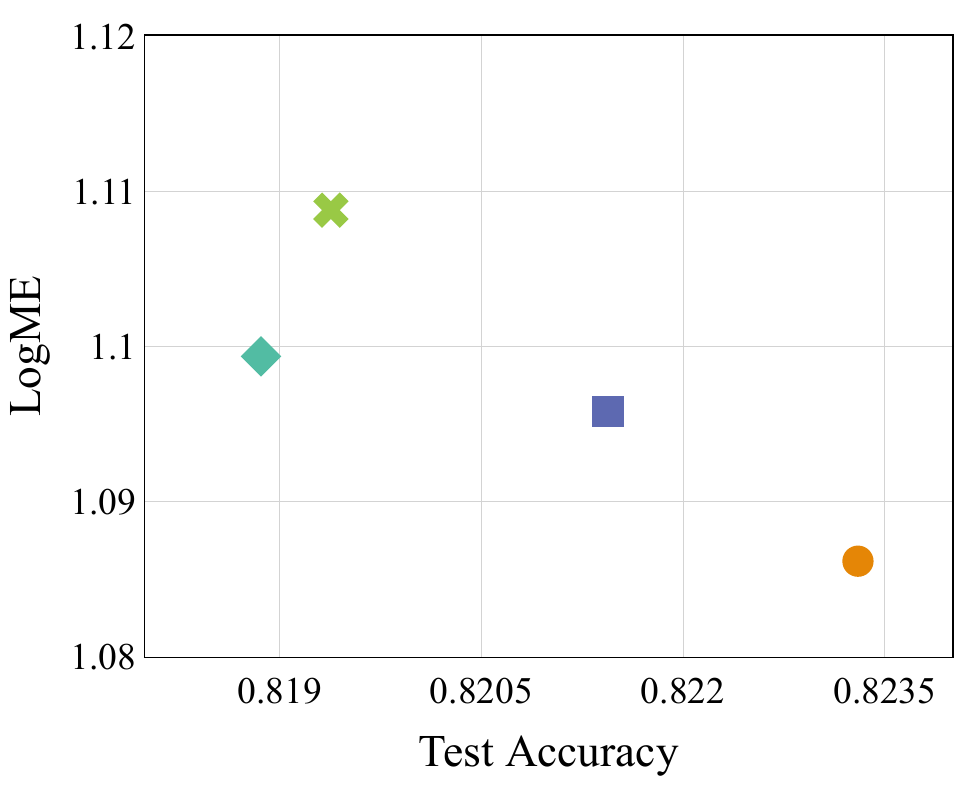} }
    \subfigure[][\tiny \makecell{ TrR on CIFAR\\ $R_p\!=\!{\bf 0.8550}$, \\ $\tau_K\!=\!{\bf 0.6667}$, \\ $\tau_\omega\!=\!{\bf0.8133}$}]{\label{fig:ss16}\includegraphics[width=0.18\textwidth]{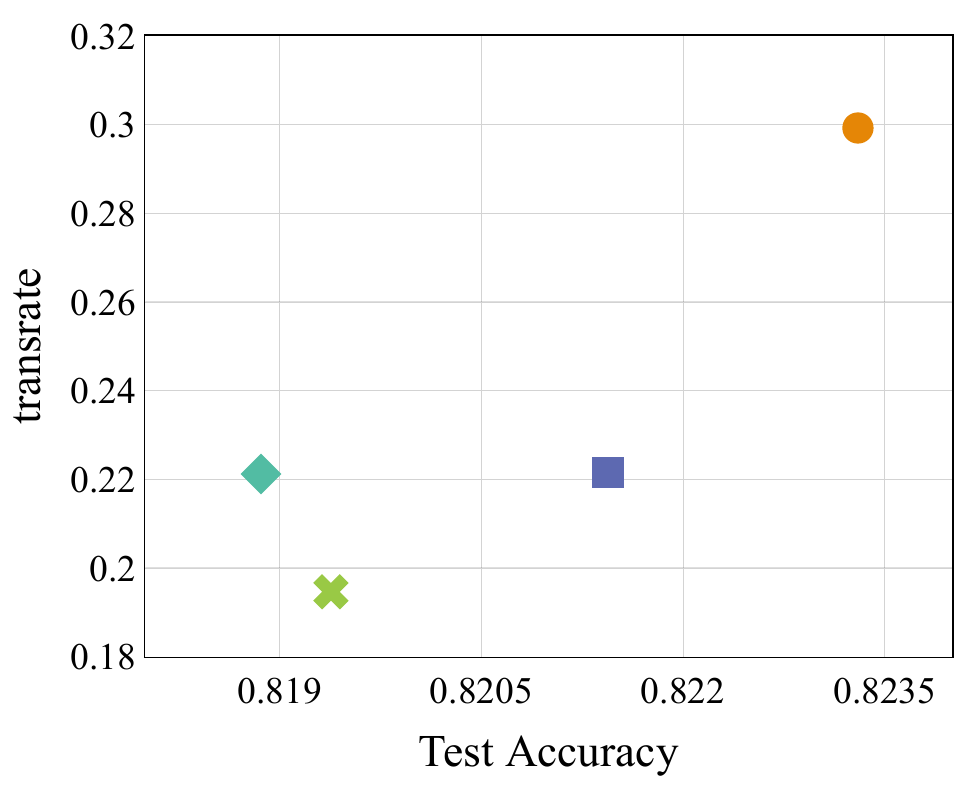} }
    \\
    \subfigure{\includegraphics[width=0.24\textwidth]{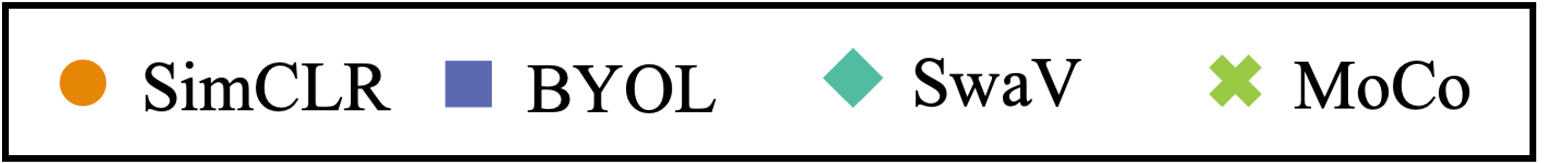}}
    \vspace{-1mm}
    \caption{\small Result on transferring ResNet50 pre-trained with different self-supervised algorithm\yingicml{s} from ImageNet to CIFAR-100.}\vspace{-4mm}
\label{fig:sslselection}
\end{figure*}

\textbf{TransRate as a Criterion for Pre-trained Model Selection.}
%\label{sec:exp_trans_vs_model}
Another important application of transferability measures is the selection of pre-trained models in different architectures. 
In practice, various pre-trained models on public large datasets are available. For example, PyTorch provides more than 20 pre-trained neural networks for ImageNet.
To maximize the transfer performance from such a dataset, it is necessary to estimate the transferability of various model
candidates and select the one with the maximal score to transfer. 
In this experiment, we %study the correlation between each transferability measure and the transfer accuracy after transferring 
\yingicml{consider}
seven kinds of pre-trained models on ImageNet to CIFAR-100. The seven types % of pre-trained models 
include ResNet18~\cite{he2016deep}, ResNet34, ResNet50,  MobileNet0.25,  MobileNet0.5,  MobileNet0.75, MobileNet1.0~\cite{sandler2018mobilenetv2}.
\kai{%From the results in 
Figure \ref{fig:model_selection} %, we observe 
\yingicml{tells}
that \ours in general has a significant linear correlation %to
\yingicml{with}
the transfer accuracy, though it slightly underestimates MobileNet1.0.}
% Also, \ours achieves the best ranking prediction as evidenced by the larget $\tau_K$ and $\tau_\omega$. 
Though the predictions of LEEP and NCE achieve the best $R_p$, they
\yingicml{rank ResNet-18 incorrectly with underestimation.}
%underestimate ResNet-18, ranking
%it incorrectly. 
This also explains why they obtain lower $\tau_K$ and $\tau_\omega$ than TransRate. %As for 
\yingicml{The performances of}
LFC, H-Score and LogME %, their performances
are not as competitive as NCE, LEEP and TransRate\yingicml{, though.} 
\kai{More experiments of model selection with more networks and target datasets are available in Appendix~\ref{appsec:model_selection}.}

\textbf{Estimation of Unsupervised
Pre-trained Models to Classification and Regression Tasks.}
%In this part, we
We \yingicml{also} evaluate the effectiveness of TransRate and the baselines on estimating tranferability from different %unsuperivsed
\yingicml{unsupervised}
pre-trained models. 
\yingicml{The first type of self-supervised models we consider is}
%To do this, we adopt the unsupervised GNN model, 
GROVER~\cite{rong2020self} \yingicml{for graph neural networks (GNN). We evaluate the transferability of four candidate models by varying two types of architectures and two types of pre-trained datasets, which are denoted by ChemBL-12, ChemBL-48, Zinc-12, Zinc-48.}
We consider four target tasks that predict the molecular ADMET properties, including BBBP~\cite{martins2012bayesian}, BACE~\cite{subramanian2016computational}, Esol~\cite{delaney2004esol}, FreeSolv~\cite{mobley2014freesolv}. 
The BBBP and BACE are classification tasks, while Esol and FreeSolv are regression tasks. More details about the settings of the pre-trained GNN models and the datasets are available in Appendix~\ref{appsec:a}. %A.
%From the results displayed in 
Figure \ref{fig:drug} %, we observe
\yingicml{show}
that in all four experiments, TransRate achieves the best performance regarding all $3$ coefficients. To be specific, TransRate correctly predicts 
%the best model in all experiments except the one on BACE, 
\kai{\yingicml{the} ranking of all models in all experiment\yingicml{s} expcept \yingicml{the} Zinc-48 model on BBBP. 
%As a comparison, 
\yingicml{while the baselines}}
%the competitors 
all fail to predict the best model.

\kai{We also evaluate the performance in selecting \yingicml{4} model\yingicml{s} pre-trained on ImageNet by 4 self-supervised algorithms, including SimCLR~\cite{chen2020simple}, BYOL~\cite{grill2020bootstrap}, SwaV~\cite{caron2020unsupervised}, MoCo~\cite{he2020momentum}. %The results in
Figure \ref{fig:sslselection} show that TransRate is the only method that correctly predicts the %highest
\yingicml{best-performing} %performance
model. \yingicml{Though} 
%Tought 
it overestimates the performance of SwaV, it still achieves the best correlation coefficients $R_p$, $\tau_K$ and $\tau_\omega$, outperforming the baseline methods by a large margin. \kai{Results on more target datasets are available in Appendix \ref{appsec:ssl_model_selection}.}
These results demonstrate the wide applicability as well as the effectiveness of TransRate 
\yingicml{in predicting the best unsupervised pre-trained model %and to 
\kai{for} regression or classification target tasks.}
%predicting the best self-supervised pre-trained models for a target dataset.
}

\begin{figure}[b]
\centering
    \vspace{-3mm}
    \subfigure{\includegraphics[width=0.15\textwidth]{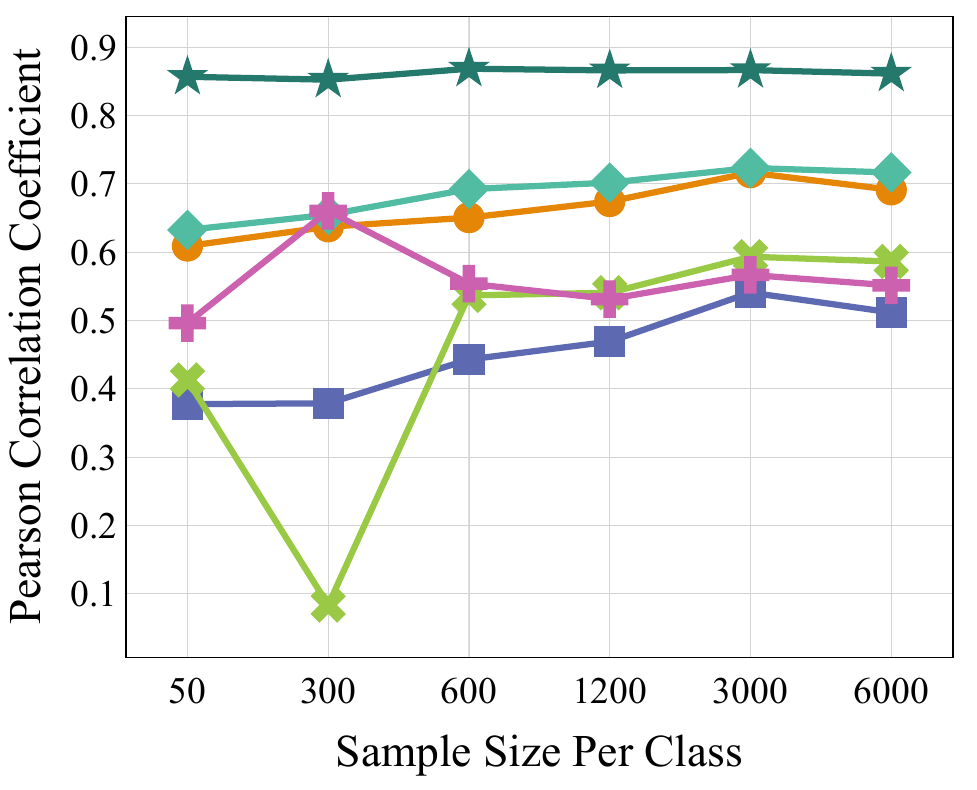} }
    \subfigure{\includegraphics[width=0.15\textwidth]{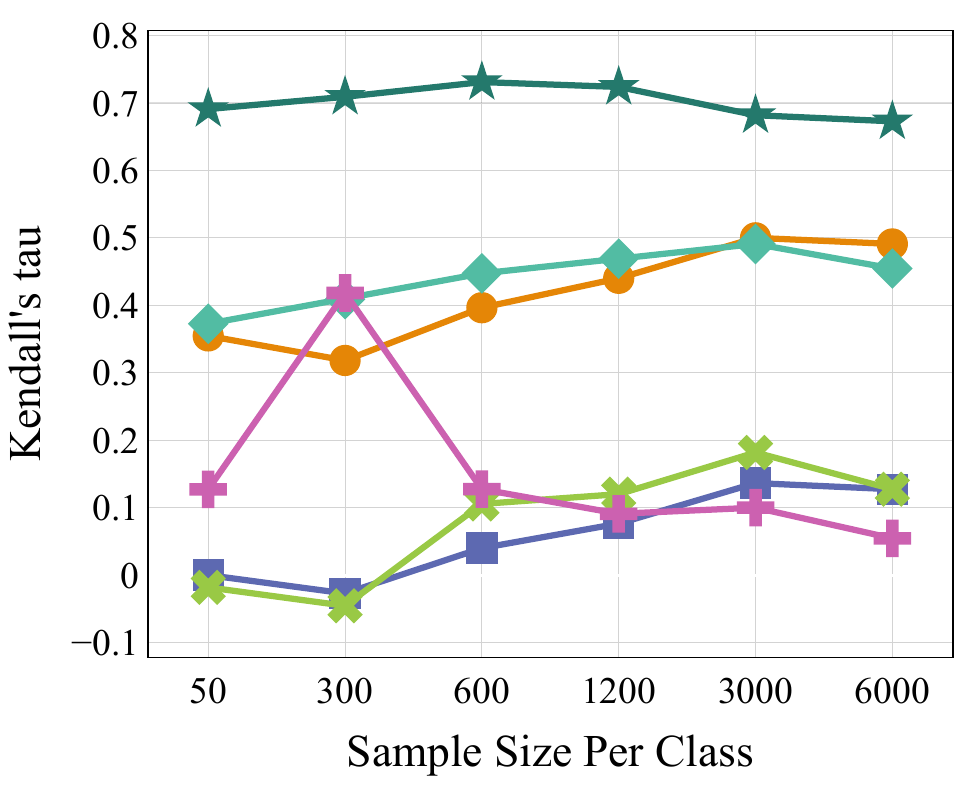}}
    \subfigure{\includegraphics[width=0.15\textwidth]{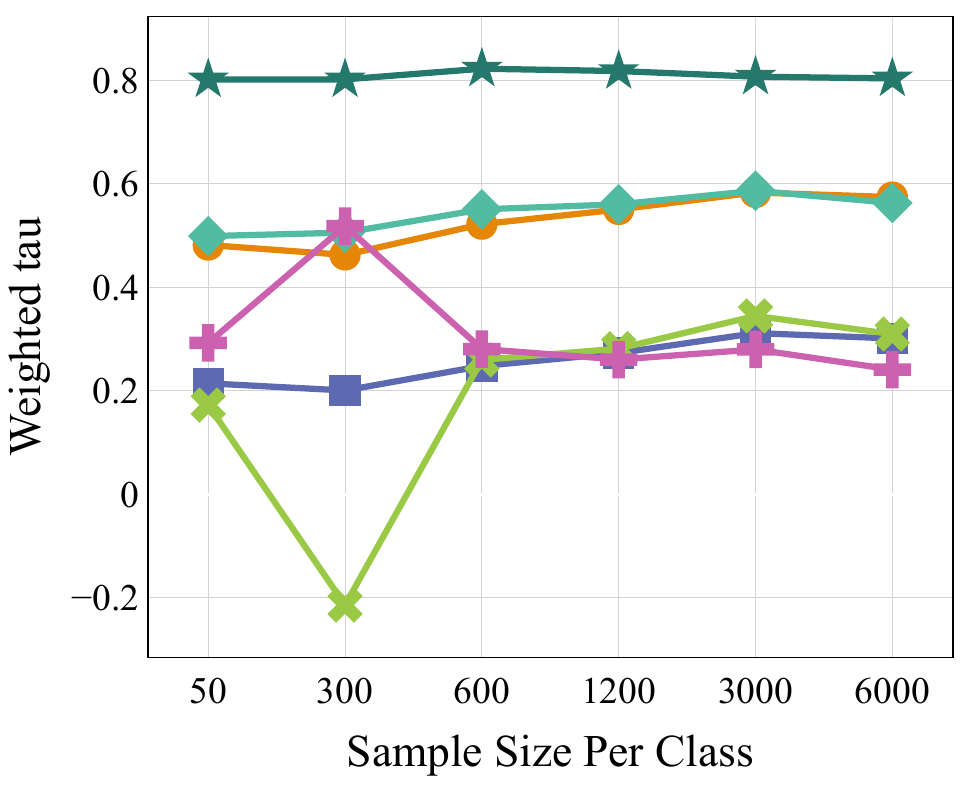} }
    \\
    \subfigure{\includegraphics[width=0.3\textwidth]{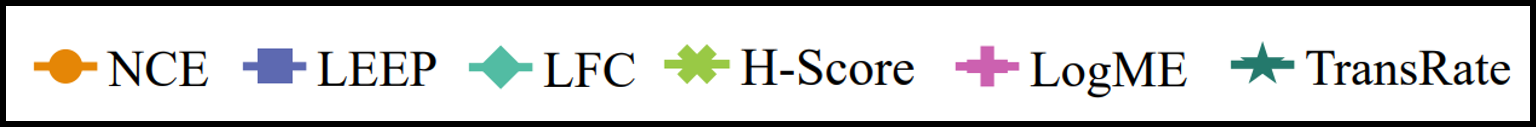}}
    \caption{ \small 
    Influence of the sample size of target datasets to the performance of a transferability measure,
    %The performance is measured by the correlation coefficient
    when fine-tuning the pre-trained ResNet-18 from $11$ different source datasets to FMNIST.}
    %\vspace{-4mm}
\label{fig:sampel_si}
\end{figure}

\subsection{\yingicml{Discussion on Sensitivity to $\epsilon$ and Sample Size}}
%Sample Size Sensitivity Analysis}
\yingicml{As discussed in Figure~\ref{fig:measure_relationship}, the approximation error in \ours depends on 1) $\epsilon$ and 2) the sample size. By default, we have $\epsilon=$1E-4, while we would investigate}
the sensitivity of \ours to the value of \yingicml{$\epsilon$}. %$\epsilon^2$ 
%is studied in 
Appendix~\ref{appsec:b5} and Appendix~\ref{appsec:d4} \yingicml{ demonstrate that as long as $\epsilon$ is less than a threshold, the performance of \ours for estimating transferability and even the values of \ours barely change.}

\yingicml{It is inevitable for both \ours and the baselines to have the estimation error caused by a very limited number of samples that are insufficient to represent the true distribution. We would study}
\kai{
%In this experiment, we estimate 
the sensitivity of \ours and the baseline methods with regard to the number of training samples in %the 
\yingicml{a}
target task. We adopt the same experiment settings as in \yingicml{source selection}, %Section~\ref{sec:source_selection}, 
except that the number of samples available for each class varies from $50$ to $6000$. %and visu\yingicml{a}lize t
The trend\yingicml{s}
of the three types of correlation coefficients in Figure \ref{fig:sampel_si} %. 
%From Figure \ref{fig:sampel_si}, we observe
\yingicml{speak}
that the performances of all algorithms generally drop %s
when the sample size per class decreases. % from $6,000$ to $50$. 
%The deterioration of the performance is caused by the inaccurate estimation given a small number of samples. 
Unlike the baselines, the Kendall's $\tau$ and weighted $\tau$ of TransRate drop only by a minor percentage.
This shows its superiority in predicting the correct ranking of the models, even when only a small number of samples are available for estimation.
}

\section{Conclusion %and %Limitations
% \kai{Discussion}
}\label{sec:conclusion}
In this paper, we propose a
\yingicml{frustratingly easy}
%computation-efficient 
transferability measure named TransRate that flexibly supports estimation for transferring both holistic and partial layers of a pretrained model. 
TransRate estimates the mutual information between %the 
features extracted by a pre-trained model and labels with %approximates them by 
the coding rate. 
%The t
Both theoretic and empirical studies \kai{demostrate that} %both demonstrate that
TransRate %has a strong correlation to 
\yingicml{strongly correlates with}
the transfer learning performance, making it a qualified transferability measure for source dataset, model, and layer selection. 
%So far we only consider the transfer strategy of fine-tuning, while in the future we would improve it to also take into account different transfer strategies, e.g., knowledge distillation.

\section*{Acknowledgements}
We would like to thank Yaodong Yu for the helpful discussion regarding the properties of coding rate. Ying Wei acknowledge the support of Project 9229073 by RMGS of Research Grants Council (RGC), Hong Kong.

\bibliography{transrate_ref}
\bibliographystyle{icml2022}

\appendix

\onecolumn

%%!TEX root = transrate.tex

\section{Omitted Experiment Details in  Section 4}
\label{appsec:a}
\subsection{Image Datasets Description}
\label{appsec:a1}
\noindent  \textbf{Aircraft}~\cite{maji2013aircraft} \quad  The dataset consists of 10,000 aircraft images in 100 classes, with 66/34 or 67/33 training/testing images per class.

\noindent \textbf{Birdsnap}~\cite{berg2014birdsnap} \quad The dataset has 49,829 images of 500 species of North American Birds. It is divided into a training set with 32,677 images and a testing set with 8,171 images.

\noindent \textbf{Caltech-101}~\cite{fei2004caltech101} \quad The dataset contains 9,146 images from 101 object categories. The number of images in each category is between 40 and 800. Following \cite{salman2020adversarially}, we sample $30$ images per category as a training set and use the rest of images as a testing test.

\noindent  \textbf{Caltech-256}~\cite{griffin2007caltech256} \quad The dataset is an extension of Caltech 101. It contains 30,607 images spanning 257 categories. Each category has 80 to 800 images. We sample $60$ images per category for training and use the rest of images for testing. 

\noindent  \textbf{Cars}~\cite{krause2013cars} \quad The dataset consists of 16,185 images of 196 classes of cars. It is divided into a training set with 8,144 images and a testing set with 8,041 images. 

\noindent \textbf{CIFAR-10}~\cite{krizhevsky2009learning} \quad The dataset contains 60,000 color images in 10 classes, with each image in the size of 32$\times$32. Each class has 5,000 training samples and 1,000 testing samples.

\noindent \textbf{CIFAR-100}~\cite{krizhevsky2009learning} \quad The dataset is the same as CIFAR-10 except that it has 100 classes each of which contains 500 training images and 100 testing images.

\noindent \textbf{DTD}~\cite{cimpoi2014dtd} \quad The dataset consists of 5,640 textural images with sizes ranging between 300$\times$300 and 600$\times$600. There are a total of 47 categories, with 80 training and 40 testing images in each category.

\noindent \textbf{Fashion MNIST}~\cite{xiao2017fashion} \quad The dataset involves 70,000  grayscale images from 10 classes with the size of each image as 28 $\times$ 28. Each class has 6,000 training samples and 1,000 testing samples. \kai{Note that we limit the number of examples per class to be $300$ when using Fashion MNIST as a target dataset.}

\noindent \textbf{Flowers}~\cite{nilsback2008flowers} \quad The dataset consists of 102 categories of flowers that are common in the United Kindom. Each category contains between 40 and 258 images. We sample 20 images per category to construct the training set and use the rest of 6,149 images as the testing set.

\noindent \textbf{Food}~\cite{bossard2014food} \quad The dataset contains 101,000 images organized by 101 types of food. Each type of food contains 750 training images and 250 testing images. 

\noindent \textbf{Pets}~\cite{parkhi2012cats} \quad The dataset contains 7,049 images of pets belonging to 47 species. The training set contains 3,680 images and the testing set has 3,669 images.

\noindent \textbf{SUN397}~\cite{xiao2010sun} \quad  This dataset has 397 classes, each having 100 scenery pictures. For each class, there are 50 training samples and 50 testing samples.

\subsection{Molecule Datasets Description}
\label{appsec:a2}

\noindent \textbf{BBBP}~\cite{martins2012bayesian} \quad The dataset contains 2,039 compounds with binary labels of blood-brain barriers penetration. In the dataset, 1560 compounds are positive and 479 compounds are negative.

\noindent \textbf{BACE}~\cite{subramanian2016computational} \quad The dataset involves 1,513 recording compounds which could act as the inhibitors of human $\beta$-secretase 1 (BACE-1). 691 samples are positive and the rest 824 samples are negative.

\noindent \textbf{Esol}~\cite{delaney2004esol} \quad The dataset documents the water solubility (log solubility in mols per litre) for 1,128 small molecules.

\noindent \textbf{Freesolv}~\cite{mobley2014freesolv} \quad The dataset contains 642 records of the hydration free energy of small molecules in water from both experiments and alchemical free energy calculations.

Note that on both BBBP and BACE we perform binary classification by training with a binary cross-entropy loss.
Yet the regression tasks on Esol and Freesolv datasets are trained by an MSE loss. For all four datasets, we random split 80\% samples for training and 20\% samples for testing.

\subsection{Pre-trained Models}
\label{appsec:a3}

In the experiments, all models except GROVER used in Sec. 4.5 follow the standard architectures in PyTorch's torchvision. 
For those models pre-trained on ImageNet, we directly download them from PyTorch's torchvision. 
For those models pre-trained on other source datasets, we pre-train them with the hyperparameters obtained via grid search to guarantee the best performance. For GROVER~\cite{rong2020self}, we run the released codes provided by the authors on two large-scale unlabelled molecule datasets.
%, and obtain a total of 4 pre-trained models as mentioned in Sec 4.5. 
\yingicml{In total we have 4 pre-trained GROVER models, by considering two model architectures and pre-training on two datasets.
The first %smaller
model contains about 12 million parameters while the %larger one
second
has about 48 million parameters. Each of the two models is pre-trained on two unlabeled molecules datasets. The first one has 2 million molecules collected from ChemBL~\cite{gaulton2012chembl} and MoleculeNet~\cite{wu2018moleculenet}; the second one contains 11 million molecules sampled from ZINC15~\cite{sterling2015zinc} and ChemBL. In total, we have 4 different pre-trained models, denoted by ChemBL-12, ChemBL-48, Zinc-12, Zinc-48.}

\kai{
\subsection{Performance measure}
\label{appsec:a4}

We measure the performance of TransRaet and the baseline methods by Pearson correlation coefficient $R_p$, Kendall's $\tau$ and weighted $\tau$. The Pearson correlation coefficient measures the linear correlation between the predicted scores and  transfer accuracies of the transfer tasks. Kendall's $\tau$, also known as Kendall rank correlation coefficient, measures the rank correlation, i.e. the similarity between two rankings ordered by transfer accuracy and transferability scores. Weighted $\tau$ is a variant of Kendall's $\tau$ which focuses more on the top rankings. Since transferability scoring usually aims at selecting the best pre-trained model, we would highlight that weighted $\tau$ is the best one among these three measures for the transferability scoring method.
}

\subsection{Details about the Layer Selection Experiments in Section 4.3}
\label{appsec:a5}

In the layer selection experiments, we transfer the first $K$ layers from the pre-trained model and train the remaining layers from scratch. Notice that an average pooling function is applied to aggregate the outputs from different channels in the last layer of ResNet, which reduces the dimension of features. 
Inspired by this, we also apply an average pooling on the output of the $K$-th layer to reduce the feature dimension when we compute %the score of TransRate,
\kai{\ours and other baseline transferability measures,}
even if $K$ is not the last layer in layer selection. Besides the experiments on ResNet-20 \kai{and ResNet-18}, we also conduct layer selection on %ResNet-18 and 
ResNet-34 and present the results in Appendix \ref{appsec:b2}. \kai{We consider only the last layer of a block in ResNet as a condidate layer}. The details of candidate layers and the feature dimensions are summarized in Table \ref{tab:layer}.

% \vspace{-0.4cm}

\begin{table*}[h]\caption{The configurations of layer selection for three model architectures.}
\small
% \resizebox{.99\textwidth}{!}{
\centering
\begin{tabular}{c  c c c c c c  c c c c c c}
\toprule
  \multirow{2}{*}{ResNet-20} & Candidate Layers: & 19 & 17 & 15 & 13 & 11 & 9 & & 
\\
    & Feature Dimension: & 64 & 64 & 64 & 32 & 32 & 32 & &
\\
\midrule
  \multirow{2}{*}{ResNet-18} & Candidate Layers: & 17 & 15 & 13 & 11 & & & &
\\
    & Feature Dimension: & 512 & 512 & 256 & 256 & & & &
\\
\midrule
  \multirow{2}{*}{ResNet-34} & Candidate Layers: & 33 & 30 & 27 & 25 & 23 & 21 & 19 & 17
\\
    & Feature Dimension: & 512 & 512 & 512 & 256 & 256 & 256 & 256 & 256 
\\
\bottomrule
\end{tabular}
% }
\label{tab:layer}
\vspace{-0.1in}
\end{table*}

\subsection{Details about Applying TransRate on Regression Tasks}
\label{appsec:a6}

When estimating TransRate on regression target tasks, the label (or target value)
$y_i$ is not discrete so that we cannot directly compute $R(\hat{Z}, \epsilon | Y)$ in TransRate. 
The key insight behind TransRate for classification the overlap between the features of samples from different classes. Similarly, the transferability in a regression task can be estimated by the extent of the overlap between the
features of samples with different target values. 
To realize this, we rank all $n$ target values $\{y_i\}_{i=1}^n$ and divide them evenly into $C=10$ ranges. 
For samples in each range, we compute $R(\hat{Z}^c, \epsilon)$ where $\hat{Z}^c$ is the feature matrix of samples
in the $c$-th range. 
Finally, we calculate $R(\hat{Z}, \epsilon | Y) = \sum_{c=1}^C R(\hat{Z}^c, \epsilon)$ and subtract $R(\hat{Z}, \epsilon | Y)$ from $R(\hat{Z}, \epsilon)$ to obtain the resulting TransRate.

\subsection{Source Codes of TransRate}
\label{appsec:a7}

We implement the TransRate by Python. \kai{The codes are as follows:}

\begin{lstlisting}[language=Python]

import numpy as np

def coding_rate(Z, eps=1E-4):
    n, d = Z.shape
    (_, rate) = np.linalg.slogdet((np.eye(d) + 1 / (n * eps) * Z.transpose() @ Z))
    return 0.5 * rate
    
def transrate(Z, y, eps=1E-4):
    Z = Z - np.mean(Z, axis=0, keepdims=True)
    RZ = coding_rate(Z, eps)
    RZY = 0.
    K = int(y.max() + 1)
    for i in range(K):
        RZY += coding_rate(Z[(y == i).flatten()], eps)
    return RZ -  RZY / K

\end{lstlisting}

Here, we observe that TransRate can be implemented by \kai{10} lines of codes, which demonstrates its simplicity. It takes the features $\hat{Z}$ (``Z'' in the codes) and the labels $Y$ (``y'' in the codes), and \kai{distortion rate $\epsilon$ (``eps'' in the codes)} as input, calls the function ``coding\_rate'' to calculate the coding rate of $R(\hat{Z}, \epsilon)$ and $R(\hat{Z}, \epsilon | Y)$, and finally returns the TransRate.

%%!TEX root = transrate.tex

\section{Extra Experiments}
\label{appsec:b}

%We first summarize the results of the Pearson correlation coefficient, Kendall's $\tau$ and weighted $\tau$ for all the experiments in Section 5 and this section. 

\begin{table*}[t!]\caption{\small Transferability estimation on transferring ResNet-18 pre-trained on $11$ different source datasets to different target datasets. The best performance among all transferability measures is highlighted in bold%, and the second best is underlined
.}
\small
\center
\resizebox{0.75\textwidth}{!}{
\begin{tabular}{c  c c c c c c  c c c c c c}
\toprule
  Target Datasets
  &
  Measures & NCE & LEEP & LFC & H-Score & LogME & TransRate 
\\
\midrule
\multirow{3}{*}{CIFAR-100}
    & $R_p$ & 0.3803 & 0.2883 & 0.5330 & 0.5078 & 0.4947 & {\bf 0.7262} 
\\
    & $\tau_K$ & 0.3091 & 0.0909 & 0.6364 & 0.7091 & 0.7091 & {\bf 0.8182} 
\\
    & $\tau_\omega$ & 0.5680 & 0.3692 & 0.8141 & 0.8134 & 0.8134 & {\bf 0.9055}
\\
\midrule
\multirow{3}{*}{FMNIST}
    & $R_p$ & 0.6995 & 0.5200 & 0.7248 & 0.5945 & 0.5595 & {\bf 0.8614}
\\
    & $\tau_K$& 0.4909 & 0.1273 & 0.4545 & 0.1273 & 0.0545 & {\bf 0.6727}
\\
    & $\tau_\omega$ & 0.6114 & 0.3383 & 0.6001 & 0.3468 & 0.2781 & {\bf 0.8031}
\\
\midrule
\multirow{3}{*}{Aircraft}
    & $R_p$ & -0.8247 & -0.5721 & -0.6217 & {\bf 0.6758} & 0.6657 &  0.4722
\\
    & $\tau_K$& -0.5111 & -0.3333 & -0.6000 & {\bf 0.5556} & {\bf 0.5556} & 0.5111
\\
    & $\tau_\omega$ & -0.6180 & -0.4054 & -0.6854 & 0.5493 & 0.5493 & {\bf 0.5950}
\\
\midrule
\multirow{3}{*}{Birdsnap}
    & $R_p$ & 0.3595 & 0.5460 & 0.7361 & {\bf 0.8682} & 0.8502 &  0.8677
\\
    & $\tau_K$& 0.2000 & 0.4667 &  0.2889 & 0.6444 & 0.6444 & {\bf0.7333}
\\
    & $\tau_\omega$ & 0.3995 & 0.5579 & 0.1115 & 0.5530 & 0.5530 & {\bf 0.6354}
\\
\midrule
\multirow{3}{*}{Caltech-101}
    & $R_p$ & 0.2671 & 0.5255 & 0.5827 & {\bf 0.9058} & 0.8524 & 0.8962
\\
    & $\tau_K$& 0.1111 & 0.2444 & 0.4667 & {\bf 0.8667} & {\bf 0.8667} & {\bf 0.8667}
\\
    & $\tau_\omega$ & 0.3578 & 0.4457 &  0.4801 & {\bf 0.8381} & {\bf 0.8381} & {\bf 0.8381}
\\
\midrule
\multirow{3}{*}{Caltech-256}
    & $R_p$ & 0.3734 & 0.5655 & 0.5549 & {\bf 0.9080} & 0.8812 & 0.8600
\\
    & $\tau_K$&  0.2000 & 0.3333 & 0.3333 & {\bf 0.8667} & {\bf 0.8667} & 0.8222
\\
    & $\tau_\omega$  & 0.4953 & 0.5956 &  0.3415 & 0.8424 & 0.8424 & {\bf 0.9174}
\\
\midrule
\multirow{3}{*}{Cars}
    & $R_p$ & -0.6298 & -0.1296 & -0.0897 &  {\bf 0.8385} & 0.8302  & 0.7356
\\
    & $\tau_K$& -0.2444 & -0.2889 & -0.0667 & {\bf 0.7333} & {\bf 0.7333} &  0.6444
\\
    & $\tau_\omega$ & -0.1193 & 0.0408 &  0.0375 & {\bf 0.8019} & 0.7273 & 0.7597
\\
\midrule
\multirow{3}{*}{DTD}
    & $R_p$ & 0.0218 & 0.1662 & 0.5243 & 0.9208 & {\bf 0.9293} & 0.9131
\\
    & $\tau_K$& 0.1556  & 0.2444 & 0.4222 & 0.6000 & 0.7333 & {\bf 0.7778}
\\
    & $\tau_\omega$ & 0.1366 & 0.3519 & 0.3699 & 0.4409 & 0.7079 & {\bf 0.7755}
\\
\midrule
\multirow{3}{*}{Flowers}
    & $R_p$ & -0.3360 & -0.2790 & 0.2631 & 0.8385 & 0.7967 & {\bf0.9509}
\\
    & $\tau_K$& -0.2889 & -0.2444 & -0.0222 & 0.6000 & 0.6444 & {\bf 0.7778}
\\
    & $\tau_\omega$ & -0.1258 & 0.0865 & -0.1650 & 0.5176 & {0.6035} &  {\bf 0.7868}
\\
\midrule
\multirow{3}{*}{Food}
    & $R_p$ & 0.2485 & 0.4300  & 0.5656 &  {\bf 0.9243} & 0.9169  &  0.9065
\\
    & $\tau_K$& 0.1556 & 0.3333 & 0.2444 & {\bf 0.6000} & {\bf 0.6000} & {\bf 0.6000}
\\
    & $\tau_\omega$  & 0.3214 &  0.4878 & 0.0860 & 0.4927 & 0.4927 & {\bf 0.5335}
\\
\midrule
\multirow{3}{*}{Pets}
    & $R_p$ & 0.3512 & 0.4672 & 0.8306 & {\bf 0.9368} & 0.9019 & 0.8805
\\
    & $\tau_K$& 0.2444 & 0.5111 & {\bf 0.8667} & 0.8222 & 0.8222 & 0.7778
\\
    & $\tau_\omega$ & 0.3987 & 0.5679 & {\bf 0.8927} & 0.7351 & 0.7351 & 0.7148
\\
\midrule
\multirow{3}{*}{SUN397}
    & $R_p$ & 0.1535 & 0.4424 & 0.3693 & {\bf 0.9169} & 0.9058 & 0.7219
\\
    & $\tau_K$& 0.0222 & 0.2889 & 0.2000 & {\bf 0.7333} & {\bf 0.7333} & 0.5111
\\
    & $\tau_\omega$ & 0.3315 & 0.5159 & 0.1194 & 0.5928 & 0.5928 & {\bf 0.6424}
\\
\bottomrule
\end{tabular}
}
\label{tab:source_si}
\end{table*}

\begin{table*}[hp]\caption{\small Transferability estimation on transferring different layers of the pre-trained model to the CIFAR-100 dataset. The best performance among all transferability measures is highlighted in bold. }
\small
% \resizebox{.99\textwidth}{!}{
% \begin{center}
\center
\begin{tabular}{c  c c c c c c  c c c c c c}
\toprule
   &
  Measures  & LFC & H-Score & LogME & TransRate 
\\
\midrule
\multirow{3}{*}{\makecell{Source: SVHN \\ Model: ResNet-20}}
    & $R_p$ & -0.1895 & -0.5320 & -0.3352 & {\bf 0.9769}
\\
    & $\tau_K$ & -0.4667 & -0.2000 & -0.0667 & {\bf 0.8667}
\\
    & $\tau_\omega$ & -0.5497 & -0.2993 & -0.2340 & {\bf 0.9265}
\\
\midrule
\multirow{3}{*}{\makecell{Source: CIFAR-10 \\ Model: ResNet-20}}
    & $R_p$ & 0.5755 & 0.6476 & {\bf 0.6551} & 0.6347
\\
    & $\tau_K$ & {\bf 0.4667} & 0.2000 & 0.2000 & {\bf 0.4667}
\\
    & $\tau_\omega$ & 0.4041 & 0.3673 & 0.3673 & {\bf 0.5224}
\\
\midrule
\multirow{3}{*}{\makecell{Source: ImageNet \\ Model: ResNet-18}}
    & $R_p$ & 0.2595 & 0.9876 & {\bf 0.9898} & 0.9866
\\
    & $\tau_K$ & 0.0 & {\bf1.0} & {\bf1.0} & {\bf1.0}
\\
    & $\tau_\omega$ & 0.0 & {\bf1.0} & {\bf1.0} & {\bf1.0}
\\
\midrule
\multirow{3}{*}{\makecell{Source: ImageNet \\ Model: ResNet-34}}
    & $R_p$ & 0.6997 & 0.9357 & 0.9370 & {\bf 0.9550}
\\
    & $\tau_K$ & 0.3333 & {\bf 0.9444} & {\bf 0.9444} & {\bf 0.9444}
\\
    & $\tau_\omega$ & 0.4834 & {\bf 0.8674} & {\bf 0.8674} & {\bf 0.8674}
\\
\midrule
\multirow{3}{*}{\makecell{Source: Aircraft \\ Model: ResNet-18}}
    & $R_p$ & 0.6299 & 0.7983 & 0.0929 & {\bf0.9560}
\\
    & $\tau_K$ & 0.3333 & {\bf 0.6667} & 0.0000 & {\bf 0.6667}
\\
    & $\tau_\omega$ & 0.3333 & {\bf 0.8133} & 0.3067 & {\bf 0.8133}
\\
\midrule
\multirow{3}{*}{\makecell{Source: Birdsnap \\ Model: ResNet-18}}
    & $R_p$ & 0.7003 & 0.3166 & -0.5207 & {\bf 0.9871}
\\
    & $\tau_K$ & {\bf 0.6667} & {\bf 0.6667} & -0.3333 & {\bf 0.6667}
\\
    & $\tau_\omega$ & 0.5200 & 0.3067 & -0.2933 & {\bf0.8133}
\\
\midrule
\multirow{3}{*}{\makecell{Source: Caltech-101 \\ Model: ResNet-18}}
    & $R_p$ & 0.9310 & 0.9015 & 0.8561 & {\bf0.9871}
\\
    & $\tau_K$ & {\bf1.0} & 0.6667 & 0.6667 & {\bf1.0}
\\
    & $\tau_\omega$ & {\bf1.0} & 0.5200 & 0.5200 & {\bf1.0}
\\
\midrule
\multirow{3}{*}{\makecell{Source: Caltech-256 \\ Model: ResNet-18}}
    & $R_p$ & 0.8395 & 0.1649 & -0.4235 & {\bf0.9763}
\\
    & $\tau_K$ & 0.6667 & -0.3333 & -0.3333 & {\bf1.0}
\\
    & $\tau_\omega$ & 0.8133 & -0.2933 & -0.2933 & {\bf1.0}
\\
\midrule
\multirow{3}{*}{\makecell{Source: Cars \\ Model: ResNet-18}}
    & $R_p$ & -0.2438 & 0.3188 & -0.4489 & {\bf0.9790}
\\
    & $\tau_K$ & -0.3333 & 0.0000 & -0.3333 &{\bf 0.6667}
\\
    & $\tau_\omega$ & -0.4400 & 0.3067 & -0.2933 & {\bf0.8133}
\\
\midrule
\multirow{3}{*}{\makecell{Source: DTD \\ Model: ResNet-18}}
    & $R_p$ & 0.9542 & 0.8818 & 0.7200 &{\bf 0.9860}
\\
    & $\tau_K$ & {\bf1.0}& 0.6667 & 0.6667 & {\bf1.0}
\\
    & $\tau_\omega$ & {\bf1.0} & 0.5200 & 0.5200 & {\bf1.0}
\\
\midrule
\multirow{3}{*}{\makecell{Source: Flowers \\ Model: ResNet-18}}
    & $R_p$ & 0.8365 & 0.6054 & 0.0392 & {\bf0.9925}
\\
    & $\tau_K$ & 0.3333 & 0.6667 & 0.0 & {\bf1.0}
\\
    & $\tau_\omega$ & 0.3333 & 0.5200 & -0.0667 & {\bf1.0}
\\
\midrule
\multirow{3}{*}{\makecell{Source: Food \\ Model: ResNet-18}}
    & $R_p$ & 0.7963 & 0.5642 & -0.4487 & {\bf0.8002}
\\
    & $\tau_K$ & {\bf1.0} & 0.3333 & -0.3333 & {\bf1.0}
\\
    & $\tau_\omega$ & {\bf1.0} & 0.3333 & -0.2933 & {\bf1.0}
\\
\midrule
\multirow{3}{*}{\makecell{Source: Pets \\ Model: ResNet-18}}
    & $R_p$ & 0.7127 & 0.8349 & 0.6301 & {\bf0.9608}
\\
    & $\tau_K$ & 0.6667 & 0.6667 & 0.3333 & {\bf1.0}
\\
    & $\tau_\omega$ & 0.8133 & 0.5200 & 0.2000 & {\bf1.0}
\\
\midrule
\multirow{3}{*}{\makecell{Source: SUN397 \\ Model: ResNet-18}}
    & $R_p$ & 0.7647 & 0.6109 & 0.0409 & {\bf0.9761}
\\
    & $\tau_K$ & 0.6667 & 0.6667 & 0.0 & {\bf1.0}
\\
    & $\tau_\omega$ & 0.8133 & 0.5200 & -0.0667 & {\bf1.0}
\\
\midrule
\multirow{3}{*}{\makecell{Source: ImageNet \\ Model: ResNet-18 \\ Targe: FMNIST}}
    & $R_p$ & -0.0361 & 0.1775 & 0.1808 & {\bf0.9169}
\\
    & $\tau_K$ & -0.3333 & 0.0 & 0.0 & {\bf1.0}
\\
    & $\tau_\omega$ & -0.4400 & -0.1733 & -0.1733 & {\bf1.0}
\\

\bottomrule
\end{tabular}
% \end{center}
% }
\label{tab:layer_si}
\end{table*}

\subsection{Extra Experiments on Source Selection}
\label{appsec:b1}

In Table~\ref{tab:source_si}, we summarize the results of source selection that we have
presented in Section \ref{sec:exp_transfer_results} and meanwhile include the results of source selection for 9 more target datasets. 
Since the 11 source datasets include the 9 new target datasets, we remove the model that is pre-trained on the target dataset and consider only the 10 models that are pre-trained on the remaining source datasets. %For all target datasets, we use the whole training set for fine-tuning and for transferability estimation.

From Table \ref{tab:source_si}, we observe that TransRate achieves 20/36 best performance and 7/36 second best performances, which indicates the superiority of TransRate across the spectrum of different target datasets.
%Even for 
The most competitive baseline H-Score %, it only achieves 53\% of TransRate with 11/33 best performances.
\kai{achieves 16/36 best performance. TransRate performs better in predicting the top transferability models, achieving 10/12 highest weighted $\tau$ and 7/12 highest Kendall's $\tau$, while H-Score correlates better linearly with the transfer accuracies, achieving 8/12 highest Pearson correlation coefficient.}
LogME has a similar performance to H-Score, but a little bit worse. As for NCE, LEEP, and LFC, their performances are not that satisfactory.
These results indicate that TransRate serves as an excellent transferability predictor for selection of a pre-trained model from various source datasets.

\subsection{Extra Results of Layer Selection}\label{appsec:b2}

In this subsection, we provide more experiments on the selection of a layer for 15 models pre-trained on different source datasets, including the two experiments presented in Section \ref{sec:exp_transfer_results}. The settings are the same as those in Section \ref{sec:exp_transfer_results}, except that the pre-trained models are trained on different source datasets or with different architectures. Table \ref{tab:layer_si} presents the results of the 15 experiments. 

As shown in Table \ref{tab:layer_si}, TransRate correctly predicts the performance ranking of transferring different layers in 9 of 15 experiments, where its $\tau_K$ and $\tau_\omega$ even equal to $1$. In contrast, the best competitor, LFC, achieves all-correct prediction in only 3 experiments. This shows the superiority of TransRate over the three baseline methods in serving as a criterion for layer selection. 
In rare cases, H-Score and LogME achieve competitive or even better performances than TransRate, while they 
fail in most of the experiments. This hit-and-miss behavior can be explained by the assumption behind H-Score and LogME as mentioned in Section \ref{sec:related} -- they consider the penultimate layer to be transferred only so that prediction of the transferability at other layers than the penultimate layer is not accurate.

\subsection{Extra Results on Model Selection}\label{appsec:model_selection}

In this subsection, we summarize experiment results on model selection for 8 target datasets, including the experiment for CIFAR-100 presented in Figure \ref{fig:model_selection} and new experiments for 7 new target datasets. 

As shown in Table \ref{tab:model_si}, TransRate achieves 16/24 best performance and correctly predicts the transferability ranking of all models in 3 experiments (i.e., Caltech-101, Caltech-256 and Pets). NCE and LEEP both achieve 3/8 best Pearson correlation coefficient and LogME achieves 3/8 best weighted $\tau$. However, in most experiments, their performances are not as competitive as TransRate. 

We also conduct model selection experiments with 6 more candidate architectures, including DenseNet121, DenseNet169, DenseNet201, InceptionV3, NASNet0.5 and NASNet1.0  and present the results in Table \ref{tab:model_si_extra}.
As more models are considered in the ranking, the model selection becomes more difficult. Compared to the result in Table \ref{tab:model_si}, the performance in most experiments drops. Even so, TransRate still achieves 18/24 best performance, significantly outperforming the baseline measures.

\begin{table*}[t!]\caption{\small Transferability estimation on transferring models with different architectures (ResNet18, ResNet34, ResNet50, MobileNet0.25, MobileNet0.5, MobileNet0.75 and MobileNet1.0) pre-trained on ImageNet to different target datasets. The best performance among all transferability measures is highlighted in bold%, and the second best is underlined
.}
\small
\center
\resizebox{0.7\textwidth}{!}{
\begin{tabular}{c  c c c c c c  c c c c c c}
\toprule
  Target Datasets
  &
  Measures & NCE & LEEP & LFC & H-Score & LogME & TransRate 
\\
\midrule
\multirow{3}{*}{CIFAR-100}
    & $R_p$ & 0.9654 & {\bf 0.9696} & 0.0664 & 0.3802 & 0.5672 & 0.8055 
\\
    & $\tau_K$ & 0.8095 & 0.8095 & -0.0476 & 0.3333 & 0.5238 & {\bf 0.9048} 
\\
    & $\tau_\omega$ & 0.7322 & 0.8650 & -0.0680 & 0.5041 & 0.6186 & {\bf 0.9421}
\\
\midrule
\multirow{3}{*}{FMNIST}
    & $R_p$ & -0.5561 & -0.4857 & -0.4234 & 0.2182 & 0.1140 & {\bf 0.3649}
\\
    & $\tau_K$& -0.3333 & -0.2381 & -0.3333 & 0.3333 & 0.1429 & {\bf 0.4286}
\\
    & $\tau_\omega$ & -0.3088 & -0.3751 & -0.3581 & 0.2978 & 0.1515 & {\bf 0.4870}
\\
\midrule
\multirow{3}{*}{Aircraft}
    & $R_p$ & -0.4664 & 0.7383 & 0.6676 & 0.1350 & 0.6925 & {\bf 0.7952}
\\
    & $\tau_K$& -0.2000 & 0.4667 & 0.6000 & 0.3333 & 0.6000 & {\bf 0.7333}
\\
    & $\tau_\omega$ & -0.2639 & 0.4136 & 0.5184 & 0.5374 & {\bf 0.6857} & 0.6599
\\
\midrule
\multirow{3}{*}{Caltech-101}
    & $R_p$ & {\bf 0.9779} & 0.9748 & 0.5583 & 0.1241 & 0.7894 & 0.9648
\\
    & $\tau_K$& 0.8095 & 0.8095 & 0.3333 & 0.2381 & 0.8095 & {\bf 1.0}
\\
    & $\tau_\omega$ & 0.7322 & 0.7322 &  0.2158 & 0.4345 & 0.8939 & {\bf 1.0}
\\
\midrule
\multirow{3}{*}{Caltech-256}
    & $R_p$ & {\bf 0.9861} & 0.9851 & 0.5476 & 0.4262 & 0.6998 & {0.9626}
\\
    & $\tau_K$&  0.8095 & 0.8095 & 0.4286 & 0.4286 & 0.7143 & {\bf 1.0}
\\
    & $\tau_\omega$  & 0.7322 & 0.7322 &  0.3133 & 0.5937 & 0.7868 & {\bf 1.0}
\\
\midrule
\multirow{3}{*}{Cars}
    & $R_p$ & 0.7317 & {\bf 0.9771} & 0.6308 & 0.1161 & 0.8319  & 0.7627
\\
    & $\tau_K$& 0.2381 & {\bf 0.8095} & 0.4286 & 0.3333 & 0.7143 & {\bf 0.8095}
\\
    & $\tau_\omega$ & 0.2308 & 0.6786 &  0.3186 & 0.5416 & {\bf 0.8168} & 0.8125
\\
\midrule
\multirow{3}{*}{Pets}
    & $R_p$ & {\bf 0.9881} & 0.9867 & 0.9001 & 0.5085 & 0.8531  & 0.8643
\\
    & $\tau_K$& 0.8095 & 0.9048 & 0.7143 & 0.4286 & {\bf 1.0} & {\bf 1.0}
\\
    & $\tau_\omega$ & 0.7322 & 0.9250 &  0.5286 & 0.5937 & {\bf 1.0} & {\bf 1.0}
\\
\midrule
\multirow{3}{*}{SUN397}
    & $R_p$ & 0.9612 & {\bf 0.9638} & 0.5854 & 0.3117 & 0.7723  & 0.9609
\\
    & $\tau_K$& 0.7143 & 0.8095 & 0.5238 & 0.4286 & 0.7143 & {\bf 0.9048}
\\
    & $\tau_\omega$ & 0.5983 & 0.6786 & 0.4633 & 0.6616 & 0.8168 & {\bf 0.8929}
\\
\bottomrule
\end{tabular}
}
\label{tab:model_si}
\end{table*}

\begin{table*}[t!]\caption{\small Transferability estimation on transferring models with different architectures (ResNet18, ResNet34, ResNet50, MobileNet0.25, MobileNet0.5, MobileNet0.75, MobileNet1.0, \emph{DenseNet121, DenseNet169, DenseNet201, InceptionV3 NASNet0.5 and NASNet1.0}) pre-trained on ImageNet to different target datasets. 
The best performance among all transferability measures is highlighted in bold%, and the second best is underlined
. }
\small
\center
\resizebox{0.7\textwidth}{!}{
\begin{tabular}{c  c c c c c c  c c c c c c}
\toprule
  Target Datasets
  &
  Measures & NCE & LEEP & LFC & H-Score & LogME & TransRate 
\\
\midrule
\multirow{3}{*}{CIFAR-100}
    & $R_p$ & 0.7937 & 0.8506 & -0.2159 & 0.5016 & 0.4965 & {\bf 0.8780}
\\
    & $\tau_K$ & 0.7436 & 0.7179 & -0.0256 & 0.4872 & 0.4103 & {\bf 0.9231} 
\\
    & $\tau_\omega$ & 0.8315 & 0.8485 & -0.0126 & 0.6058 & 0.5130 & {\bf 0.8498}
\\
\midrule
\multirow{3}{*}{FMNIST}
    & $R_p$ & 0.2708 & 0.3522 & 0.3085 & 0.6226 & 0.1521 & {\bf 0.7086}
\\
    & $\tau_K$& 0.0769 & 0.1795 & 0.1795 & 0.5897 & 0.0 & {\bf 0.7179}
\\
    & $\tau_\omega$ & 0.2091 & 0.4351 & 0.2230 & 0.6670 & -0.1171 & {\bf 0.8592}
\\
\midrule
\multirow{3}{*}{Aircraft}
    & $R_p$ & -0.4969 & 0.7260 & -0.3718 & 0.3196 & 0.4179 & {\bf 0.6320}
\\
    & $\tau_K$& -0.3939 & 0.2727 & 0.2121 & 0.3939 & 0.4848 & {\bf 0.5758}
\\
    & $\tau_\omega$ & -0.3585 & 0.1134 & 0.0290 & 0.4971 & 0.6519 & {\bf 0.6838}
\\
\midrule
\multirow{3}{*}{Caltech-101}
    & $R_p$ & 0.7955 & 0.8445 & -0.1485 & 0.3112 & 0.5336 & 0.6392
\\
    & $\tau_K$& 0.6410 & 0.6154 & -0.1026 & 0.5385 & 0.6923 & {\bf 0.7692}
\\
    & $\tau_\omega$ & 0.6358 & 0.5380 & -0.3313 & 0.7665 & 0.8214 & {\bf 0.8511}
\\
\midrule
\multirow{3}{*}{Caltech-256}
    & $R_p$ & {\bf 0.9339} & 0.9199 & 0.3244 & 0.6125 & 0.6220 & {0.8100}
\\
    & $\tau_K$&  0.7949 & 0.7179 & 0.1538 & 0.7179 & 0.7949 & {\bf 0.8974}
\\
    & $\tau_\omega$  & 0.6922 & 0.6253 & -0.0587 & 0.8668 & 0.8673 & {\bf 0.8971}
\\
\midrule
\multirow{3}{*}{Cars}
    & $R_p$ & 0.4317 & {\bf 0.8114} & -0.1935 & 0.2936 & 0.5289  & 0.7309
\\
    & $\tau_K$& 0.3590 & 0.7692 & 0.2821 & 0.4103 & 0.6154 & {\bf 0.7949}
\\
    & $\tau_\omega$ & 0.5391 & {\bf 0.7974} & 0.2155 & 0.5620 & 0.6950 & 0.6894
\\
\midrule
\multirow{3}{*}{Pets}
    & $R_p$ & 0.9681 & {\bf 0.9787} & 0.6892 & 0.6333 & 0.7098  & 0.8143
\\
    & $\tau_K$& 0.7692 & 0.8205 & 0.5128 & 0.6154 & 0.8462 & {\bf 0.8718}
\\
    & $\tau_\omega$ & 0.7178 & 0.7741 & 0.6214 & 0.6901 & 0.8439 & {\bf 0.8530}
\\
\midrule
\multirow{3}{*}{SUN397}
    & $R_p$ & {\bf 0.9513} & 0.9166 & 0.3982 & 0.5834 & 0.7053 & 0.8380
\\
    & $\tau_K$& 0.7692 & 0.7692 & 0.3077 & 0.6667 & 0.7692 & {\bf 0.8462}
\\
    & $\tau_\omega$ & 0.7332 & 0.7368 & 0.1686 & 0.7627 & 0.7577 & {\bf 0.7998}
\\
\bottomrule
\end{tabular}
}
\label{tab:model_si_extra}
\end{table*}

\subsection{Extra Results on Self-supervised Model Selection}\label{appsec:ssl_model_selection}

In this subsection, we conduct extra experiments on model selection among 4 self-supervised algorithms for 8 new target datasets and summarize their results and the experiment result presented in Figure \ref{fig:sslselection} in Section \ref{sec:exp_transfer_results}. 

The results are presented in Table \ref{tab:ssl_model_si}. TransRate significantly outperforms the baseline methods. It achieves the best performance in all experiments except the experiment for Caltech-101. In the experiments for FMNIST, Caltech-256, Flowers and SUN397, TransRate corrrectly predict the ranking of all models, obtaining $\tau_K=1$ and $\tau_\omega=1$. The baseline methods all achieve only 3/27 best performance, but underperfom in most experiments.

\begin{table*}[t!]\caption{\small Transferability estimation on transferring models pre-trained with different self-supervised learning algorithms on ImageNet to different target datasets. The best performance among all transferability measures is highlighted in bold%, and the second best is underlined
. }
\small
\center
\resizebox{0.55\textwidth}{!}{
\begin{tabular}{c  c c c c c c  c c c c}
\toprule
  Target Datasets
  &
  Measures & LFC & H-Score & LogME & TransRate 
\\
\midrule
\multirow{3}{*}{CIFAR-100}
    & $R_p$ & 0.4261 & -0.9006 & -0.8595 & {\bf 0.8550}
\\
    & $\tau_K$ & 0.6667 & -0.6667 & -0.6667 & {\bf 0.6667}
\\
    & $\tau_\omega$ & 0.5200 & -0.6667 & -0.6667 & {\bf0.8133}
\\
\midrule
\multirow{3}{*}{FMNIST}
    & $R_p$ & 0.6259 & 0.2483 & 0.3307 & {\bf 0.9955}
\\
    & $\tau_K$ & {\bf1.0} & 0.6667 & 0.6667 & {\bf1.0}
\\
    & $\tau_\omega$ & {\bf1.0} & 0.8133 & 0.8133 & {\bf1.0}
\\
\midrule
\multirow{3}{*}{Aircraft}
    & $R_p$ & -0.3821 & 0.2395 & 0.0404 & {\bf 0.9688}
\\
    & $\tau_K$ & -0.3333 & 0.3333 & 0.0 & {\bf 0.6667}
\\
    & $\tau_\omega$ & -0.4400 & 0.2000 & -0.1733 & {\bf 0.7333}
\\
\midrule
\multirow{3}{*}{Birdsnap}
    & $R_p$ & 0.3656 & 0.5404 & 0.1623 & {\bf 0.6397}
\\
    & $\tau_K$ & {\bf 0.6667} & 0.3333 & 0.0 & {\bf 0.6667}
\\
    & $\tau_\omega$ & {\bf 0.5200} & 0.3333 & 0.0 & {\bf 0.5200}
\\
\midrule
\multirow{3}{*}{Caltech-101}
    & $R_p$ & 0.6620 & 0.7456 & {\bf 0.8963} & -0.0979
\\
    & $\tau_K$ & 0.3333 & {\bf 0.6667} & {\bf 0.6667} & 0.3333
\\
    & $\tau_\omega$ & 0.5333 & {\bf 0.8133} & 0.5200 & 0.5333
\\
\midrule
\multirow{3}{*}{Caltech-256}
    & $R_p$ & 0.2690 & -0.5860 & -0.4753 & {\bf 0.9239}
\\
    & $\tau_K$ & 0.3333 & -0.3333 & -0.3333 & {\bf1.0}
\\
    & $\tau_\omega$ & 0.2000 & -0.2933 & -0.2933 & {\bf1.0}
\\
\midrule
\multirow{3}{*}{Cars}
    & $R_p$ & -0.4311 & 0.3379 & 0.1503 & {\bf 0.7498}
\\
    & $\tau_K$ & -0.3333 & {\bf0.3333}& {\bf0.3333} & {\bf0.3333}
\\
    & $\tau_\omega$ & -0.4400 & 0.2000 & 0.2000 & {\bf0.5333}
\\
\midrule
\multirow{3}{*}{Flowers}
    & $R_p$ & 0.3755 & 0.7177 & 0.3576 & {\bf 0.8077}
\\
    & $\tau_K$ & -0.3333 & 0.0 & 0.0 & {\bf1.0}
\\
    & $\tau_\omega$ & -0.4400 & -0.1733 & -0.0667 & {\bf1.0}
\\
\midrule
\multirow{3}{*}{SUN397}
    & $R_p$ & -0.4770 & 0.4464 & 0.2504 & {\bf 0.8180}
\\
    & $\tau_K$ & -0.3333 & 0.0 & 0.0 & {\bf1.0}
\\
    & $\tau_\omega$ & -0.4400  & 0.0 & 0.0 & {\bf1.0}
\\
\bottomrule
\end{tabular}
}
\label{tab:ssl_model_si}
\end{table*}

\subsection{Extra Experiments on Sample Size Sensitivity Study}
\label{appsec:b3}

\begin{figure*}[h]
\centering
    \vspace{-3mm}
    \subfigure{\includegraphics[width=0.22\textwidth]{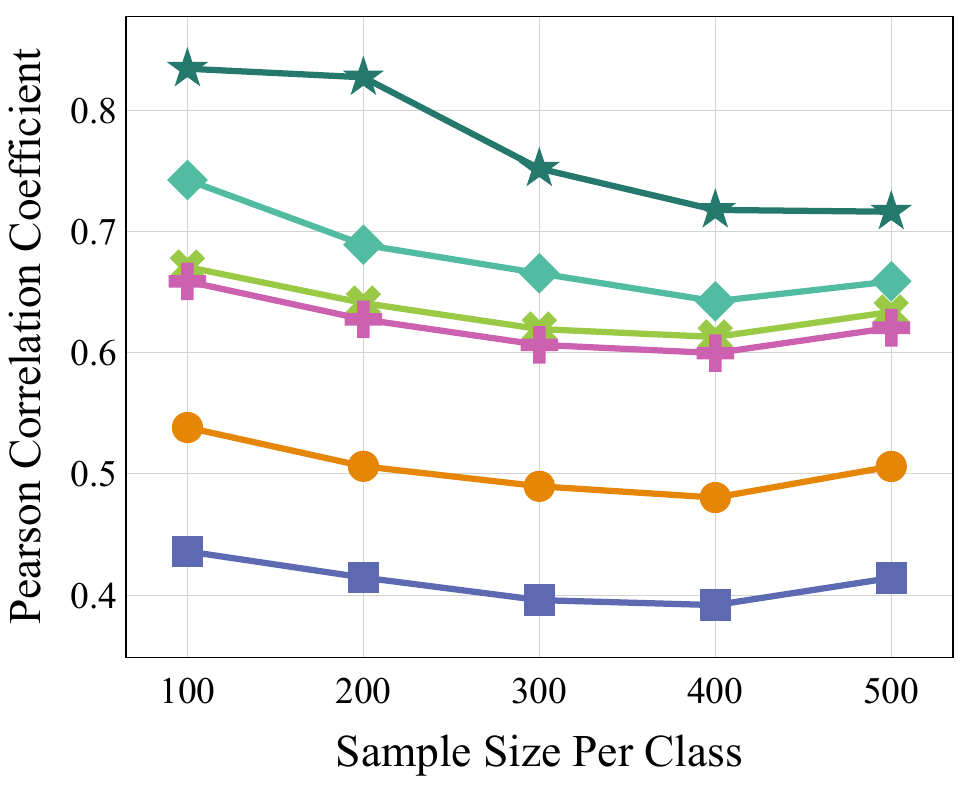} }
    \subfigure{\includegraphics[width=0.22\textwidth]{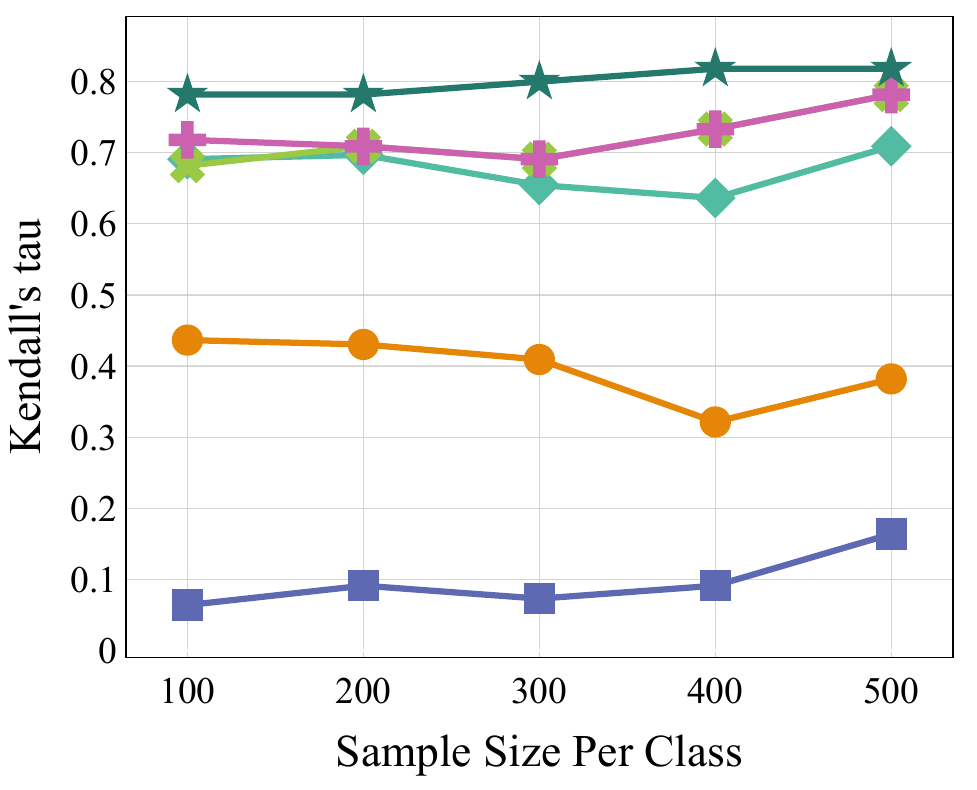}}
    \subfigure{\includegraphics[width=0.22\textwidth]{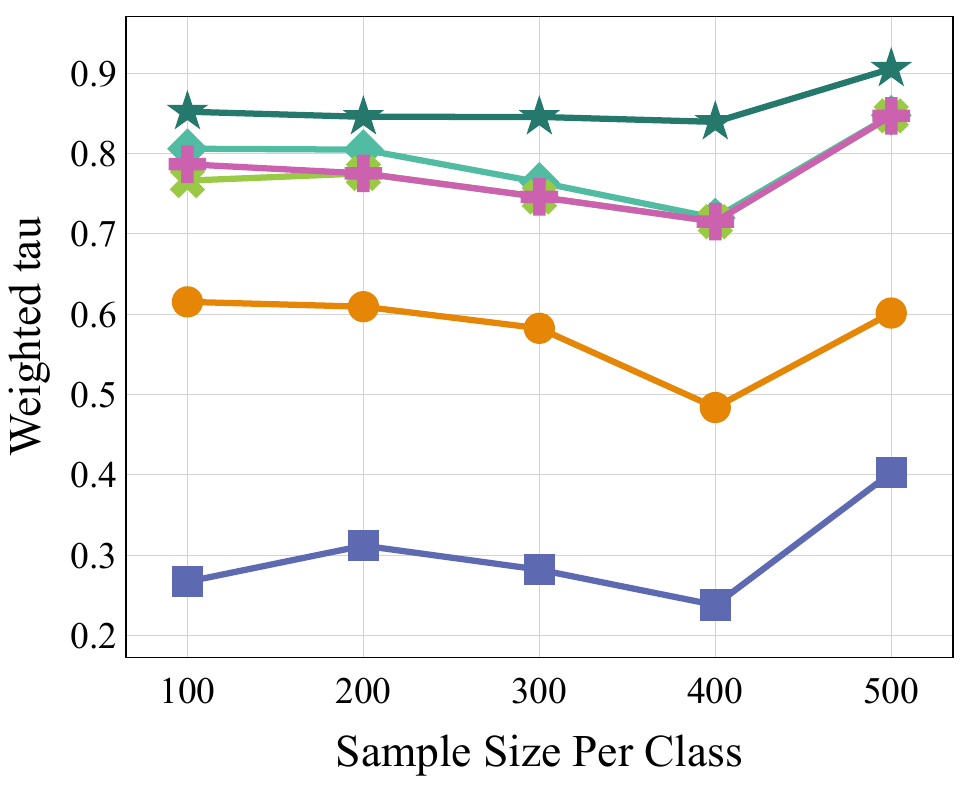} }
    \\
    \subfigure{\includegraphics[width=0.35\textwidth]{legend/sample_legend.png}}
    \caption{ \small 
    The three types of correlation coefficients between estimated transferability and test accuracy when varying the size of the target dataset. The correlation coefficient is measured by a series of transfer tasks that fine-tune
    the pre-trained ResNet-18 from $11$ different source datasets to CIFAR-100.}
    %\vspace{-4mm}
\label{fig:sampel_si_cifar}
\end{figure*}

\kai{
We provide a supplementary experiment to further investigate the influence of sample size on the performance of transferability estimation algorithms. %The performance of an algorithm is evaluated by the Pearson correlation coefficient, Kenddal's $\tau$ and weighted $\tau$ on a series of transfer tasks that fine-tune $11$ ResNet-18 models pre-trained on different source datsets to Fashion-MNIST. 
% We vary the number of samples per classs in FMNIST from $50$ to $6,000$, and visulize the trend
% of the three types of correlation coefficients in Figure \ref{fig:sampel_si}. 
We vary the number of samples per classs in CIFAR-100 from $100$ to $500$, and visulize the trend
of the three types of correlation coefficients in Figure \ref{fig:sampel_si_cifar}. 

From Figure \ref{fig:sampel_si_cifar}, we observe that the ranking prediction coefficients (i.e. $\tau_K$ and $\tau_\omega$) of all algorithms generally drops when the sample size per class decreases from $500$ to $50$. The deterioration of the performance is caused by the inaccurate estimation given a small number of samples. % Unlike the baselines, the Kendall's $\tau$ and weighted $\tau$ of TransRate drop only by a minor percentage.
Even with slight deterioration given fewer samples, TransRate still outperforms the baselines.
This shows its superiority in predicting the correct ranking of the models, even when only a small number of samples are available for estimation.
}

\subsection{Time Complexity}
\label{appsec:b4}

\begin{table*}[t]\caption{\small Comparison of the computational cost of different measures.}
\small
\resizebox{.99\textwidth}{!}{
% \begin{center}
\centering
\begin{tabular}{c   c c  c c  c c}
\toprule
   & \multicolumn{2}{c}{ResNet-18, Full Data} & \multicolumn{2}{c}{ResNet-18, Small Data} 
   & \multicolumn{2}{c}{ResNet-50, Full Data}  
\\
 \cmidrule(lr){2-3} \cmidrule(lr){4-5} \cmidrule(lr){6-7}
   & Wall-clock time (second) & Speedup & Wall-clock time (second) & Speedup & Wall-clock time (second) & Speedup
\\
\midrule
    Fine-tune & 8399.65 & 1$\times$ & 882.33 & 1$\times$ & $2.3\times10^4$ & 1$\times$
\\
\midrule
    Extract feature & 30.1416 &  & 3.2986 &  & 72.787 & 
\\
\midrule
    NCE & 0.9126 & 9,204$\times$ & 0.2119 & 4,164$\times$ & 2.1220 & 10,839$\times$
\\
    LEEP & 0.7771 & 10,808$\times$ & 0.1211 & 7,286$\times$ & 1.9152 & 12,009$\times$
\\
\midrule
    LFC & 30.1416 & 279$\times$ & 0.7987 & 1,106$\times$ & 149.3040 & 154$\times$
\\
    H-Score & 1.6285 & 5,158$\times$ & 0.3998 & 2,207$\times$ & 13.07 & 1,760$\times$
\\
    LogME & 9.2737 & 906$\times$ & 2.0224 & 436$\times$ & 50.1797 & 458$\times$
\\
    TransRate & \textbf{1.3410} & \textbf{6,264$\times$} & \textbf{0.2697} & \textbf{3,272$\times$} & \textbf{10.6498} & \textbf{2,160$\times$}
\\
\bottomrule
\end{tabular}
% \end{center}
}
\label{tab:time}
\end{table*}

In this subsection, we compare the running time of TransRate as well as the baselines. We run the experiments on a server with 12 Intel Xeon Platinum 8255C 2.50GHz CPU and a single P40 GPU. We consider three transfer tasks: 1) transferring ResNet-18 pre-trained on ImageNet to CIFAR-100 with full data; 2) transferring ResNet-18 pre-trained on ImageNet to CIFAR-100 with $1/10$ data (50 samples per class); 3) transferring ResNet-50 pre-trained on ImageNet to CIFAR-100 with full data.
For task 1), $n=50,000$, $d=512$; for task 2), $n=5,000$, $d=512$; for task 3), $n=50,000$, $d=2048$. The batch size in all experiments is $50$.

We present the results in Table \ref{tab:time}. First of all, we can observe that
the time for fine-tuning a model is about $300$ times of the time for transferability estimation (including the time for feature extraction and the time for computing a transferability measure). Besides, it often requires more than 10 times of fine-tuning to search for the best-performing hyper-parameters. 
Therefore, running a transferability estimation algorithm can achieve $3000\times$ speedup when selecting a pre-trained model and the layer of it to transfer.
This highlights the necessity and importance of developing transferability estimation algorithms. 
Second, though LEEP and NCE computing the similarity between labels only show the highest efficiency, they suffer from the unsatisfactory performance in source selection and the incapability of accommodating unsupervised pre-trained models and layer selection.
Third, amongst all the feature-based transferability measures, TransRate takes the shortest wall-clock time. This indicates its computational efficiency. 
The time costs of both H-Score and LogME are higher than TransRate, which recognizes the necessity of developing an optimization-free estimation algorithm.

\subsection{Sensitivity to Value of $\epsilon$}
\label{appsec:b5}

\kai{
To evaluate the influence of $\epsilon$ on TransRate, we conduct experiments on the toy case presented in Section 3.2 and on the layer selection with the same settings in Section 4.3. We vary the value of $\epsilon$ from $0.01$ to 1E-15 and report the TransRate score and the performance (evaluated by $R_p$, $\tau$, and $\tau_\omega$)  of TransRate under different values of $\epsilon$ in Figure \ref{fig:epsilon}.

We have the following three observations. First, the TransRate scores hardly change when $\epsilon \le$ 1E-3 in the toy case and when $\epsilon \le $ 1E-12 in the layer selection experiment. This verifies the analysis Appendix D.4 that t the value of TransRate does not change for a sufficiently small $\epsilon$. Second, though the value of TransRate scores is still changing, their ranking does change for all $\epsilon$ in Figure \ref{fig:epsilon_1} and for $\epsilon \le $ 1E-3 in Figure \ref{fig:epsilon_2}. Third, we see in Figure \ref{fig:epsilon_3} that the that performance of TransRate remains nearly the same when $\epsilon \le$ 1E-3. 
The second and third observations verify that the value of $\epsilon$ has limited influence to the performance of TransRate.

\begin{figure*}[t]
\centering
    \vspace{-3mm}
    \subfigure[The change trend of transrate score in the toy case.]{\includegraphics[width=0.22\textwidth]{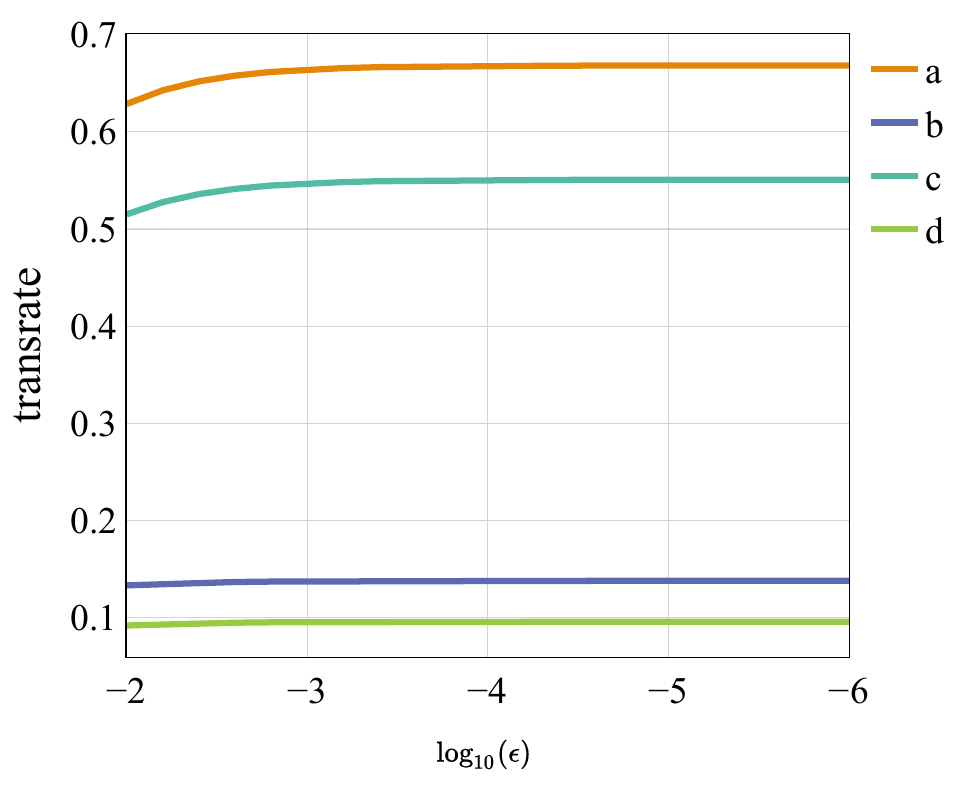} \label{fig:epsilon_1}}
    \subfigure[The change trend of transrate score in layer selection experiment.]{\includegraphics[width=0.22\textwidth]{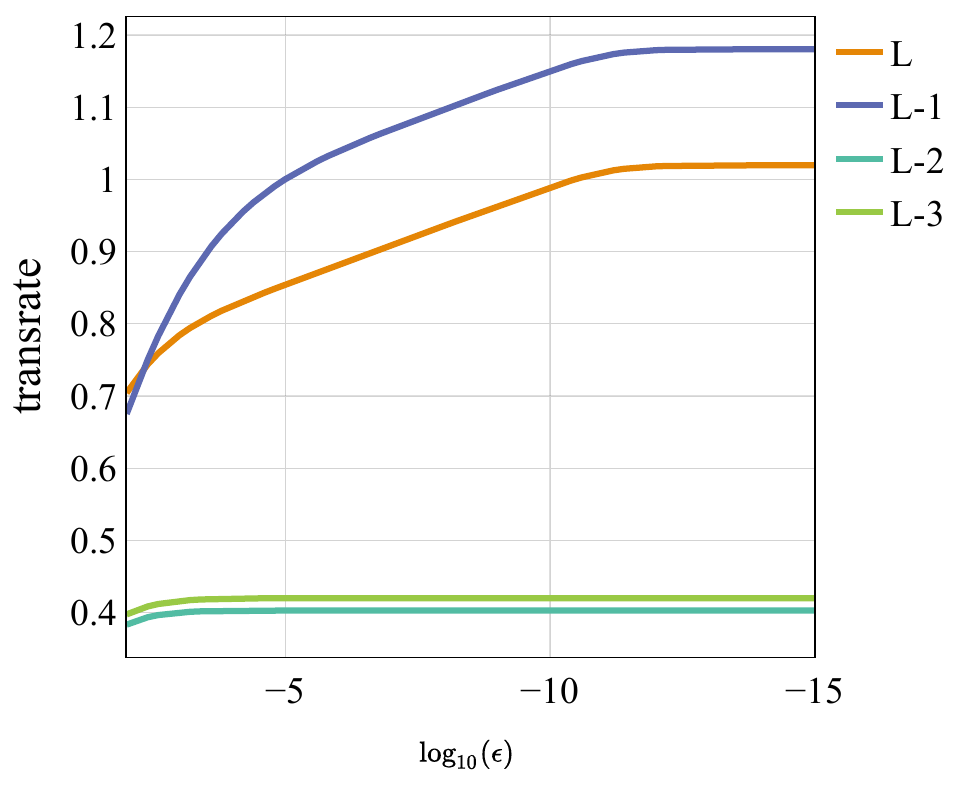}\label{fig:epsilon_2}}
    \subfigure[The change trend of 3 performance measures in layer selection experiment.]{\includegraphics[width=0.22\textwidth]{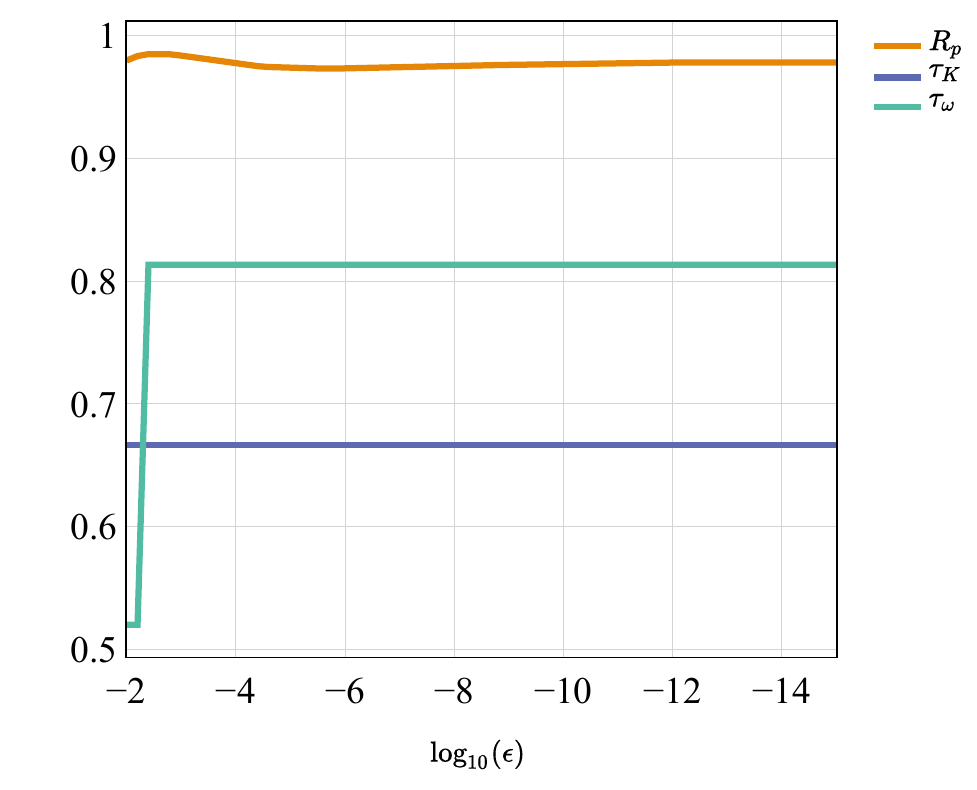} \label{fig:epsilon_3}}
    \caption{ \small 
    Sensitivity analysis of the value of $\epsilon$. The figure in the left shows results on toy example introduced in section 3.2. The figures in middle and right columns show results on a layer selection experiment from a ResNet-18 model pre-trained on Birdsnap to CIFAR-100.}
    %\vspace{-4mm}
\label{fig:epsilon}
\end{figure*}

}

\begin{table*}[t]\caption{\small Transferability estimation on transferring from a pre-trained model to different target tasks.}
\small
% \resizebox{.99\textwidth}{!}{
\centering
\begin{tabular}{c |c|c c c c c c}
\toprule
      Target Tasks
      &
  Measures & NCE & LEEP & LFC & H-Score & LogME & TransRate 
\\
\midrule
  \multirow{3}{*}{ \makecell{Source: ImageNet \\ Model: ResNet-18 \\ Target: CIFAR-100} } 
  & $R_p$ & 0.9810 & 0.9781 & 0.7599 & -0.9597 & -0.7940 & {\bf 0.9841}
\\
    & $\tau_K$ & 0.9467 & 0.9467 & 0.6000 & -0.9467 & -0.6267 & {\bf 0.9733}
\\
    & $\tau_\omega$ & 0.9537 & 0.9537 & 0.7662 & -0.9600 & -0.6104 & {\bf 0.9810}
\\
\midrule
  \multirow{3}{*}{ \makecell{Source: ImageNet \\ Model: ResNet-18 \\ Target: FMNIST} } 
  & $R_p$ & 0.9258 & 0.9246 & 0.8229 & -0.8674 & 0.7200 & {\bf 0.9410}
\\
    & $\tau_K$ & 0.7067 & 0.7067 & {\bf 0.8134} & -0.7067 & 0.6533 & 0.7333
\\
    & $\tau_\omega$ & 0.7578 & 0.7578 & {\bf 0.8893} & -0.7271 & 0.7782 & 0.8122 
\\
\midrule
    \multirow{3}{*}{ \makecell{Source: CIFAR-10 \\ Model: ResNet-20 \\ Target: CIFAR-100} } 
    & $R_p$ & 0.9815 & 0.9819 & 0.5714 & -0.9192 & -0.9188 & {\bf 0.9896}
\\
    & $\tau_K$ & {\bf 1.0} & {\bf 1.0} & 0.3600 & -0.9733 & -0.8133 & {\bf 1.0}     
\\
    & $\tau_\omega$ & {\bf 1.0} & {\bf 1.0} & 0.5320 & -0.9695 & -0.7878 & {\bf 1.0} 
\\
\midrule
  \multirow{3}{*}{ \makecell{Source: CIFAR-10 \\ Model: ResNet-20\\ Target: FMNIST} } 
    & $R_p$ & 0.9622 & {\bf 0.9639} & 0.6410 & -0.8871 & 0.6687 & 0.9449
\\
    & $\tau_K$ & \textbf{0.8400} & \textbf{0.8400} & 0.4667 & -0.8133 & 0.3333 & \textbf{0.8400}
\\
    & $\tau_\omega$ & 0.8579 & 0.8579 & 0.5262 & -0.8267 & 0.5654 & \textbf{0.8797}
\\
\bottomrule
\end{tabular}
% }
\label{tab:class}
\end{table*}

\subsection{Target Selection} 
\label{appsec:b6}

We follow \cite{nguyen2020leep} to also evaluate the correlation of the proposed TransRate and other baselines to the accuracy of transferring a pre-trained model to different target tasks. The target tasks are constructed by sampling different subsets of classes from a target dataset. We consider two target datasets: CIFAR-100 and FMNIST. For CIFAR-100, we construct the target tasks by sampling 2, 5, 10, 25, 50, and 100 classes. For FMNIST, we construct the target tasks by sampling 2, 4, 5, 6, 8, 10 classes. \kai{As shown in Proposition 1, the optimal log-likelihood is linear proportional to TransRate score minusing the entropy of $Y$. In target selection, the entropy of $Y$ can be different for different targets. Hence, we subtract $H(Y)$ from the TransRate in this experiment.} 

The results on transferring a pre-trained ResNet-18 on Imagenet to two target datasets and transferring a pre-trained ResNet-20 on CIFAR-10 to two target datasets are summarized in Table \ref{tab:class}. 

The results in Table \ref{tab:class} show that TransRate outshines the baselines in all 4 experiments. 
\kai{ In both the first and third experiments, TransRate obtains the best performance in all three metrics. In the second experiment, it gets the best $R_p$, although its $\tau_K$ and $\tau_\omega$ are a little bit lower than LFC. In the fourth experimet, TransRate achieves the best $\tau_K$ and $\tau_\omega$ and the third best $R_p$ (0.9449), which is competitive with the best $R_p$ (0.9639).
}
Generally speaking, TransRate is the best among all 6 transferability estimation algorithms. Yet we suggest that the competitors, NCE and LEEP, can be good substitutes if TransRate is not considered.

%%!TEX root = transrate.tex

\section{Theoretical studies of TransRate}
\label{appsec:c}

\kai{

\subsection{Coding Rate and Shannon Entropy of a Quantized Continuous Random Variable} 
\label{appsec:c1}

Rate-distortion function is known as $\epsilon$-Entropy~\cite{binia1974epsilon}, which closely related to informaiton entropy. As introduced in \cite{ma2007segmentation}, coding rate is a regularizer version of rate-distortion function for the Gaussian source $\mathcal{N}(0, ZZ^\top)$. 
So coding rate is also closely related to informaiton entropy. 
Even for non-Gaussian distribution, as presented in Appendix A in \cite{ma2007segmentation}, the coding rate estimates a tight upper bound of the total number of nats needed to encoder the region spanned by the vectors in $Z$ subject to the mean squared error $\epsilon$ in each dimension. So it naturally related to the information in the quantized $Z$.
We would note that the $\epsilon$ in \cite{ma2007segmentation} considers the overall distortion rate across all dimension while in our paper, the $\epsilon$ considers only one dimension. So the $\epsilon$ in our paper equals to $\epsilon^2 /d$ in \cite{ma2007segmentation}.

Notice that the code rate $R(Z, \epsilon)$ would be infinite if $\epsilon \rightarrow 0$. This property matches the property Shannon entropy of a quantized random variable that $H(Z^\Delta) = H(Z) - \log(\Delta)$ (Theorem 8.3.1~\cite{cover1999elements}) would be infinite when $\Delta \rightarrow 0$. Such a property does not hurt the estimation of mutual information as 
\begin{equation*}
    MI(Z; Y) = H(Z) - H(Z|Y) 
    = (H(Z) - \log(\Delta)) - (H(Z|Y) - \log(\Delta)) 
    \approx H(Z^\Delta) - H(Z^\Delta|Y)
    \approx R(Z, \epsilon) - R(Z|Y, \epsilon),
\end{equation*}
where $\Delta = \sqrt{2\pi e \epsilon}$.

\subsection{TransRate Score and Transfer Performance} 
\label{appsec:c2}

In transfer learning, the pre-trained model is optimized by maximizing the log-likelihood, and the accuracy is closely related to the log-likelihood. 
As presented in Proposition 1, the ideal TransRate score highly aligns with the optimal log-likelihood of a task. This indicates that the ideal TransRate is closely related to the transfer performance. 

Notice that the practical TransRate is not an exact estimation of the ideal TransRate. But generally, it is linearly proportional to the ideal TransRate. Together with Proposition 1 and the definition of transferability (in Definition 1), we get that the pratical TransRate is larger than the transferability up to a multiplicative and/or an additive constant and smaller than the transferability up to another multiplicative and/or another additive constant. This means that:
\begin{equation*}
    \text{TrR}_{T_s \rightarrow T_t}(g, \epsilon) \propto \text{Trf}_{T_s \rightarrow T_t}(g). 
\end{equation*}

We also show in Lemma \ref{lem:bound} that the value of TransRate is related to the separability of the data from different classes.
On one hand, TransRate achieves minimal value when the data covariance matrices of all classes are the same. In this case, it is impossible to separate the data from different classes and no classifier can perform better than random guesses. On the other hand, TransRate achieves its maximal value when the data from different classes are independent. In this case, there exists an optimal classifier that can correctly predict the labels of the data from different classes. The upper and lower bound of TransRate show that TransRate is related to the separability of the data, and thus, related to the performance of the optimal classifier.
}

\section{Theoretical Details Omitted in Section 3}
\label{appsec:d}

\subsection{Proof of Proposition 1.}
\label{appsec:d1}

% \begin{prop}\label{prop:2}
\textbf{Proposition 1.} \emph{Assume the target task has a uniform label distribution, i.e. $p(y=y^c) = \frac{1}{C}$ holds for all $c=1,2, ...,C$. We then have: }
% $\trR_{T_s \rightarrow T_t}(g, 0, \epsilon) \gtrapprox - \mathcal{L}(w^*).$
\begin{equation*}
% \small
	\mathrm{TrR}_{T_s \rightarrow T_t}(g) %, 0, \epsilon) 
	- H(Y) \gtrapprox  \mathcal{L}(g, w^*) \gtrapprox \mathrm{TrR}_{T_s \rightarrow T_t}(g)%, 0, \epsilon)
	- H(Y) -H(Z^\Delta).
\end{equation*}
% \end{prop}

\begin{proof}
Firstly, we note that 
\begin{equation}\label{eq:1}
\begin{split}
	\trR_{T_s \rightarrow T_t}(g) \kai{=}H(Z) - H(Z|Y) = \text{MI}(Y;Z) \approx H(Z^\Delta) - H(Z^\Delta|Y)
\end{split} 
\end{equation}
As presented in \cite{agakov2004algorithm}, the mutual information has a variational lower bound as $MI(Y;Z) \ge \mathbbm{E}_{Z,Y} \log \frac{Q(z,y)}{P(z) P(y)}$ for a variational distribution $Q$.
Following \cite{qin2019rethinking}, we choose $Q$ as 
\begin{equation}
	Q(z,y) = P(z)P(y)\frac{y \exp(w^*(z))}{\mathbbm{E}_{y'} {y'}\exp(w^*(z))},
\end{equation}
where $w^*(z)$ is the output of optimal classifier before softmax, $y$ is the one-hot label of $z$, $y'$ is any possible one-hot label. 
If $p(y' =y^c) = \frac{1}{C}$ holds for all $c=1,2, ...,C$, we have
\begin{equation}\label{eq:2}
\begin{split}
	\text{MI}(Y;Z) & \ge \mathbbm{E}_{Z,Y} \log \frac{y \exp(w^*(z))}{\mathbbm{E}_{y'} y' \exp(w^*(z))}
	\\
	& = \mathbbm{E}_{Z,Y} \log \frac{y \exp(w^*(z))}{\frac{1}{C} \sum_{c=1}^C y^c \exp(w^*(z))}
	\\
	& = \mathbbm{E}_{Z,Y} \log \frac{y \exp(w^*(z))}{\sum_{c=1}^C y^c \exp(w^*(z))} - \log(\frac{1}{C})
	\\
	& \approx \mathcal{L}(g, w^*) + \log(C)
	\\
	& =  \mathcal{L}(g, w^*) + H(Y)
\end{split}
\end{equation}
The last equality comes from the definition of the negative log-likelihood loss $\mathcal{L}(g,w^*)$, which is an empirical estimation of $\mathbbm{E}_{Z,Y} \log \frac{y \exp(w^*(z))}{\sum_{c=1}^C y^c \exp(w^*(z))}$. 
Combining Eqs.\eqref{eq:1} and \eqref{eq:2}, we have the first inequality in Proposition \ref{prop:1}.

For proving the second inequality, we consider a classifier $\bar{w}$ that predicts the label $y$ for any data by $p(y) = \int_{z\in Z} p(y|z) p(z) dz$. The the empirical loss computed on this classifier is
\begin{equation}
\begin{split}
    \mathcal{L}(g, \bar{w}) = \frac{1}{n} \sum_{i=1}^n \log p(y_i) & = \frac{1}{n} \sum_{i=1}^n \log \left( \int_{z\in Z} p(y_i|z) p(z) \right) dz \\
    & \ge \frac{1}{n} 
    \sum_{i=1}^n
    \log p(y_i|z_i)
    + \frac{1}{n} \sum_{i=1}^n \log p(z_i) %dz
\end{split}
\end{equation}
The inequality comes from replacing the integral by one of its elements.
It is easy to verify that the first term is an empirical estimation of $-H(Y|Z^\Delta)$ and the second term is an empirical estimation of $-H(Z^\Delta)$.
By the definition of $\mathcal{L}(g, w^*)$, we have 
\begin{equation}\label{eq:prop2:RHS}
    \mathcal{L}(g, w^*) \ge \mathcal{L}(g, \bar{w}) \gtrapprox -H(Y|Z^\Delta) - H(Z^\Delta) = \mathrm{TrR}_{T_s \rightarrow T_t}(g ) - H(Y) - H(Z^\Delta).
\end{equation}
Combining Eqns. \eqref{eq:2} and \eqref{eq:prop2:RHS}, we complete the proof.

\end{proof}

\kai{
\textbf{Remark: } Since the maximal log-likelihood is a variational form of mutual information between inputs and labels, the gap between the $MI(Y; Z)$ and the optimal log-likelihood is small. That is Eq. \eqref{eq:2} is a tight upper bound of the optimal log-likelihood. The lower bound is proved by constructing a classifier $\bar{w}$ without considering the feature. Such a classifer generally does not performs well in practice. The performance gap between the optimal classifier $w^*$ and $\bar{w}$ is larger. So the lower bound is loose. The lower bounds provided in NCE~\cite{tran2019transferability} and LEEP~\cite{nguyen2020leep} are also proved through a similar technique. This means the lower bounds in our paper and in NCE, LEEP are all loose.
But only \ours is proved to be a tight upper bound of the maximal log-likelihood. 
}

\subsection{Properties of Coding Rate and TransRate Score}
\label{appsec:d2}

In this part, we discuss the properties of \kai{coding rate} and TransRate Score.

\begin{lemma}\label{lem:commutative}
    For any $\hat{Z} \in \mathbbm{R}^{d \times n}$, we have
    \begin{equation*}
        R(\hat{Z}, \epsilon) = \frac{1}{2} \logdet (I_d + \frac{1}{n \epsilon} \hat{Z} \hat{Z}^\top )
        = \frac{1}{2} \logdet (I_n + \frac{1}{n \epsilon} \hat{Z}^\top \hat{Z})
    \end{equation*}
\end{lemma}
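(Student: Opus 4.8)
The plan is to recognize that this is an instance of the classical Sylvester determinant identity (also known as the Weinstein--Aronszajn identity): for any matrices $A \in \mathbb{R}^{m \times k}$ and $B \in \mathbb{R}^{k \times m}$, one has $\det(I_m + AB) = \det(I_k + BA)$. Since the two displayed expressions for $R(\hat{Z},\epsilon)$ differ only by swapping $\hat{Z}\hat{Z}^\top$ (a $d\times d$ matrix) with $\hat{Z}^\top\hat{Z}$ (an $n\times n$ matrix), establishing this identity immediately yields the lemma after taking $\tfrac12\log$ of both sides.

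Concretely, I would set $A = \tfrac{1}{\sqrt{n\epsilon}}\,\hat{Z} \in \mathbb{R}^{d\times n}$ and $B = \tfrac{1}{\sqrt{n\epsilon}}\,\hat{Z}^\top \in \mathbb{R}^{n\times d}$, so that $AB = \tfrac{1}{n\epsilon}\hat{Z}\hat{Z}^\top$ and $BA = \tfrac{1}{n\epsilon}\hat{Z}^\top\hat{Z}$. Sylvester's identity then gives
\begin{equation*}
\det\!\Big(I_d + \tfrac{1}{n\epsilon}\hat{Z}\hat{Z}^\top\Big) = \det(I_d + AB) = \det(I_n + BA) = \det\!\Big(I_n + \tfrac{1}{n\epsilon}\hat{Z}^\top\hat{Z}\Big),
\end{equation*}
and applying $\tfrac12\log(\cdot)$ (valid since both determinants are positive, being determinants of symmetric positive definite matrices) completes the argument.

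As an alternative route that avoids citing Sylvester's identity, I could argue via the singular value decomposition $\hat{Z} = U\Sigma V^\top$. The nonzero eigenvalues of $\hat{Z}\hat{Z}^\top$ and $\hat{Z}^\top\hat{Z}$ coincide and equal the squared singular values $\sigma_i^2$, while the remaining eigenvalues of each matrix are zero. Hence both determinants equal $\prod_i \big(1 + \tfrac{\sigma_i^2}{n\epsilon}\big)$, the extra unit eigenvalues contributing factors of $1$; the two forms are therefore identical.

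I do not anticipate a genuine obstacle here, as the statement is a standard linear-algebra identity; the only point requiring minor care is ensuring the matrix dimensions in Sylvester's identity are matched correctly (so that the $d\times d$ and $n\times n$ identity matrices appear on the appropriate sides), and noting positive definiteness so that the logarithm is well defined. The SVD argument is entirely self-contained and would be my fallback if a from-scratch justification is preferred.
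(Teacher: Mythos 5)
Your proposal is correct and matches the paper's own justification: the paper proves this lemma (and Lemma~\ref{lem:invariant}) by appealing to Sylvester's determinant theorem, exactly the identity $\det(I_m + AB) = \det(I_k + BA)$ you invoke, merely without writing out the substitution $A = \tfrac{1}{\sqrt{n\epsilon}}\hat{Z}$, $B = \tfrac{1}{\sqrt{n\epsilon}}\hat{Z}^\top$ that you make explicit. Your SVD fallback is a fine self-contained alternative, but your primary argument is essentially the paper's proof spelled out in full detail.
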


Lemma \ref{lem:commutative} presents the commutative property of the coding rate, which is known in \cite{ma2007segmentation}. Based on this lemma, we can reduce the complexity of $\logdet(\cdot)$ computation in $R(\hat{Z}, \epsilon)$ from $\mathcal{O}(d^{2.373})$ to $\mathcal{O}(n^{2.373})$ if $n < d$.

\begin{lemma}\label{lem:invariant}
    For any $\hat{Z} \in \mathbbm{R}^{d \times n}$ having %$d$ 
    $r$ singular values, denoted by $\sigma_1$, $\sigma_2$, .., $\sigma_r$, we have
    \begin{equation*}
        R(\hat{Z}, \epsilon) = \frac{1}{2} \sum_{i=1}^r
        \log(1 + \frac{1}{n \epsilon} \sigma_i^2) 
    \end{equation*}
\end{lemma}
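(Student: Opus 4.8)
The plan is to diagonalize $\hat{Z}\hat{Z}^\top$ via the singular value decomposition and then reduce $\logdet$ to a sum of logarithms of eigenvalues. Concretely, I would first write $\hat{Z} = U \Sigma V^\top$ with $U \in \mathbb{R}^{d \times d}$ and $V \in \mathbb{R}^{n \times n}$ orthogonal and $\Sigma \in \mathbb{R}^{d \times n}$ carrying the singular values $\sigma_1, \dots, \sigma_r$ on its diagonal (the remaining diagonal entries being zero). Then $\hat{Z}\hat{Z}^\top = U \Sigma \Sigma^\top U^\top$, where $\Sigma\Sigma^\top \in \mathbb{R}^{d \times d}$ is diagonal with entries $\sigma_1^2, \dots, \sigma_r^2$ followed by $d-r$ zeros. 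Hence the eigenvalues of $\hat{Z}\hat{Z}^\top$ are exactly $\sigma_1^2, \dots, \sigma_r^2$ together with $d-r$ zeros.

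Next I would substitute into the definition of the coding rate. Since $U$ is orthogonal,
\begin{equation*}
    I_d + \frac{1}{n\epsilon}\hat{Z}\hat{Z}^\top = U\Bigl(I_d + \frac{1}{n\epsilon}\Sigma\Sigma^\top\Bigr)U^\top,
\end{equation*}
so the eigenvalues of $I_d + \tfrac{1}{n\epsilon}\hat{Z}\hat{Z}^\top$ are $1 + \tfrac{1}{n\epsilon}\sigma_i^2$ for $i=1,\dots,r$ and $1$ for the remaining $d-r$ directions. Using that the determinant of a symmetric matrix equals the product of its eigenvalues (equivalently, that $\det$ is invariant under the orthogonal conjugation by $U$), I get
\begin{equation*}
    \det\Bigl(I_d + \frac{1}{n\epsilon}\hat{Z}\hat{Z}^\top\Bigr) = \prod_{i=1}^r\Bigl(1 + \frac{1}{n\epsilon}\sigma_i^2\Bigr)\cdot 1^{\,d-r}.
\end{equation*}
Taking $\tfrac12\log$ of both sides yields the claimed identity, with the $d-r$ zero singular values contributing $\log 1 = 0$ and therefore dropping out of the sum.

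There is essentially no hard step here; the result is a routine consequence of the SVD and the multiplicativity of the determinant over eigenvalues. The only point that requires a little care is the rectangular shape of $\hat{Z}$: one must track that $\Sigma\Sigma^\top$ is $d\times d$ while the nonzero eigenvalues number only $r = \operatorname{rank}(\hat{Z}) \le \min(d,n)$, and confirm that the spurious zero eigenvalues genuinely vanish from the final sum. As a cross-check I would note consistency with Lemma~\ref{lem:commutative}: applying the same argument to $\hat{Z}^\top\hat{Z} = V\Sigma^\top\Sigma V^\top$ gives the identical nonzero eigenvalues $\sigma_1^2,\dots,\sigma_r^2$, so both expressions for $R(\hat{Z},\epsilon)$ reduce to the same formula, as they must.
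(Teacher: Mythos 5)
Your proof is correct and is essentially the argument the paper itself invokes: the paper merely cites the ``invariant property'' of the coding rate from \cite{ma2007segmentation} (orthogonal invariance of the determinant, i.e.\ Sylvester-type reasoning) without spelling it out, and your SVD computation $\hat{Z}\hat{Z}^\top = U\Sigma\Sigma^\top U^\top$ followed by taking the product of eigenvalues is exactly that argument made explicit. Your care with the $d-r$ zero eigenvalues and the cross-check against Lemma~\ref{lem:commutative} are both sound.
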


Lemma \ref{lem:invariant} is an inference from the invariant property of the coding rate presented in \cite{ma2007segmentation}. Both lemmas are inferred from Sylvester's determinant theorem.

\begin{lemma}\label{lem:bound} (Upper and lower bounds of TransRate)
    For any $\hat{Z}^c \in \mathbbm{R}^{d \times n_c}$ for $c = 1, 2, ..., C$, let \kai{$\hat{Z} = [\hat{Z}^1, \hat{Z}^2, ..., \hat{Z}^C]$ which is the concatenation of all $\hat{Z}^c $.} We then have
    \begin{equation*}
        \mathrm{TrR}_{T_s \rightarrow T_t}(g,\epsilon) = R(\hat{Z}, \epsilon) - \sum_{c=1}^C \frac{n_c}{n} R(\hat{Z}^c, \epsilon) \ge 0 
    \end{equation*}
    The equality holds when $\frac{\hat{Z} \hat{Z} ^\top}{n} = \frac{\hat{Z}^c ({\hat{Z}^c})^\top}{n_c}$ for all $c$.
    \begin{equation*}
    \begin{split}
        \mathrm{TrR}_{T_s \rightarrow T_t}(g,\epsilon) & = R(\hat{Z}, \epsilon) - \sum_{c=1}^C \frac{n_c}{n} R(\hat{Z}^c, \epsilon) \\
        & \le \frac{1}{2} \sum_{c=1}^{C}   \left( \logdet (I_d + \frac{1}{n \epsilon} \hat{Z}^c ({\hat{Z}^c})^\top ) - \frac{n_c}{n} \logdet (I_d + \frac{1}{n_c \epsilon} \hat{Z}^c ({\hat{Z}^c})^\top ) \right)
    \end{split}
    \end{equation*}
    The equality holds when $ \hat{Z}^{c_1} ({\hat{Z}^{c_2}})^\top = 0$ for all $1 \le c_1 < c_2  \le C$.
\end{lemma}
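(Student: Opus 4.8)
The plan is to recast both the pooled coding rate and the class-conditional coding rates through a single concave functional, so that the lower bound becomes a Jensen inequality and the upper bound a subadditivity inequality for $\logdet$. Writing $\lambda_c = n_c/n$ and $\Sigma_c = \frac{1}{n_c}\hat{Z}^c(\hat{Z}^c)^\top$, the block structure $\hat{Z}\hat{Z}^\top = \sum_{c=1}^C \hat{Z}^c(\hat{Z}^c)^\top$ gives $\frac{1}{n}\hat{Z}\hat{Z}^\top = \sum_{c=1}^C \lambda_c \Sigma_c$, a convex combination. Setting $f(\Sigma) := \frac{1}{2}\logdet(I_d + \frac{1}{\epsilon}\Sigma)$, one has $R(\hat{Z},\epsilon) = f(\sum_c \lambda_c \Sigma_c)$ and $R(\hat{Z}^c,\epsilon) = f(\Sigma_c)$, so that $\mathrm{TrR}_{T_s \rightarrow T_t}(g,\epsilon) = f(\sum_c \lambda_c \Sigma_c) - \sum_c \lambda_c f(\Sigma_c)$ is exactly the Jensen gap of $f$. (I do not expect to need Lemmas~\ref{lem:commutative}--\ref{lem:invariant} for this.)

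For the lower bound I would invoke the strict concavity of $X\mapsto \logdet X$ on the positive-definite cone; since $\Sigma\mapsto I_d + \frac{1}{\epsilon}\Sigma$ is affine, $f$ is strictly concave, and Jensen's inequality yields $f(\sum_c\lambda_c\Sigma_c) \ge \sum_c \lambda_c f(\Sigma_c)$, i.e. $\mathrm{TrR}_{T_s \rightarrow T_t}(g,\epsilon)\ge 0$. Because the weights $\lambda_c$ are strictly positive and $f$ is strictly concave, equality holds iff all $\Sigma_c$ coincide, which matches the stated condition $\frac{1}{n_c}\hat{Z}^c(\hat{Z}^c)^\top = \frac{1}{n}\hat{Z}\hat{Z}^\top$ for every $c$.

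For the upper bound, the second term $\sum_c\lambda_c R(\hat{Z}^c,\epsilon)$ appears verbatim on both sides (the factor $\frac{1}{2}\frac{n_c}{n}\logdet(I_d + \frac{1}{n_c\epsilon}\hat{Z}^c(\hat{Z}^c)^\top)$ equals $\lambda_c R(\hat{Z}^c,\epsilon)$) and cancels, so the claim reduces to the single subadditivity statement $\logdet(I_d + \sum_c A_c) \le \sum_c \logdet(I_d + A_c)$ with $A_c := \frac{1}{n\epsilon}\hat{Z}^c(\hat{Z}^c)^\top \succeq 0$. The tempting route, determinant monotonicity after the similarity rewrite $\logdet((I+A)(I+B)) = \logdet(I + A + (I+A)^{1/2}B(I+A)^{1/2})$, fails because $B \preceq (I+A)^{1/2}B(I+A)^{1/2}$ need not hold. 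Instead I would use the integral representation $\logdet(I_d + X) = \int_0^1 \mathrm{tr}[(I_d + tX)^{-1}X]\,\mathrm{d}t$. Applying it to $X = \sum_c A_c$ and splitting the trace gives $\logdet(I_d + \sum_c A_c) = \sum_c\int_0^1 \mathrm{tr}[(I_d + t\sum_{c'}A_{c'})^{-1}A_c]\,\mathrm{d}t$; since $I_d + t\sum_{c'}A_{c'} \succeq I_d + tA_c$, operator antitonicity of inversion gives $(I_d + t\sum_{c'}A_{c'})^{-1}\preceq (I_d + tA_c)^{-1}$, and because $A_c\succeq 0$ each integrand is bounded by $\mathrm{tr}[(I_d + tA_c)^{-1}A_c]$, whose integral is $\logdet(I_d + A_c)$. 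Summing over $c$ closes the inequality.

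Equality in the upper bound forces each integrand inequality to be tight for almost every $t$; as the gap equals $\mathrm{tr}\{[(I_d+tA_c)^{-1} - (I_d + t\sum_{c'}A_{c'})^{-1}]A_c\}$, a trace of a product of two positive-semidefinite matrices, it vanishes only when that product is zero, which unwinds to $A_{c_1}A_{c_2}=0$ for $c_1\ne c_2$, i.e. the orthogonality of distinct class feature blocks asserted in the lemma. The main obstacle is precisely this subadditivity step: the Loewner-order argument is invalid, and the integral-representation device together with operator antitonicity of the matrix inverse is what makes both the bound and its equality case go through; the lower bound, by contrast, is an immediate consequence of concavity.
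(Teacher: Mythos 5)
Your proof is correct, but it takes a genuinely different route from the paper's: the paper offers no self-contained argument at all, deferring entirely to Lemma A.4 of \cite{yu2020learning}, where the lower bound comes from concavity of $\logdet$ (essentially your Jensen-gap formulation) and the upper bound from the determinant subadditivity $\det(I+A+B)\le\det(I+A)\det(I+B)$ for positive semidefinite $A,B$. Your repackaging of both coding rates through the single concave functional $f(\Sigma)=\frac{1}{2}\logdet(I_d+\frac{1}{\epsilon}\Sigma)$, using $\frac{1}{n}\hat{Z}\hat{Z}^\top=\sum_c \lambda_c\Sigma_c$, is exactly the right normalization for the lower bound, and strict concavity of $\logdet$ on the positive-definite cone gives the equality characterization. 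For the upper bound, your cancellation-based reduction to $\logdet(I_d+\sum_c A_c)\le\sum_c\logdet(I_d+A_c)$ matches what the cited lemma needs, but your proof of that inequality --- the representation $\logdet(I_d+X)=\int_0^1\mathrm{tr}[(I_d+tX)^{-1}X]\,\mathrm{d}t$ combined with operator antitonicity of inversion --- replaces the external citation with an elementary, self-contained argument that also hands you the equality analysis. Your diagnosis that the naive Loewner-order route fails is accurate; for completeness, that route can be repaired without your integral device, but only via eigenvalue majorization rather than the Loewner order: congruence by the contraction $(I+A)^{-1/2}$ shrinks every eigenvalue of $B$ (Ostrowski), so $\det\bigl(I+(I+A)^{-1/2}B(I+A)^{-1/2}\bigr)\le\det(I+B)$ even though the two inner matrices are not Loewner-comparable. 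That is in effect the standard proof behind the subadditivity the paper imports.

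Two points to tighten. First, the lemma only asserts \emph{sufficiency} of the orthogonality condition (``equality holds when''), while your sketch argues necessity; sufficiency should be checked too, and it is immediate in your framework: if $A_{c_1}A_{c_2}=0$ for $c_1\ne c_2$, then with $S=\sum_{c'}A_{c'}$ one has $(S-A_c)A_c=0$ and $(I+tA_c)^{-1}A_c=A_c(I+tA_c)^{-1}$, hence $(I+tS)(I+tA_c)^{-1}A_c=A_c$, so the integrands coincide for every $t$. Your necessity direction also needs one more step than ``unwinds to'': from $[(I+tA_c)^{-1}-(I+tS)^{-1}]A_c=0$ for a.e.\ $t$, deduce $(S-A_c)(I+tA_c)^{-1}A_c=0$, let $t\to 0$ to get $\bigl(\sum_{c'\ne c}A_{c'}\bigr)A_c=0$, and then split this into the individual conditions $A_{c'}A_c=0$ by taking traces and using $\mathrm{tr}(A_{c'}A_c)\ge 0$. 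Second, note that the lemma's stated condition $\hat{Z}^{c_1}(\hat{Z}^{c_2})^\top=0$ is dimensionally inconsistent unless $n_{c_1}=n_{c_2}$; the intended condition is $(\hat{Z}^{c_1})^\top\hat{Z}^{c_2}=0$ (mutually orthogonal class features), which is exactly equivalent to your $A_{c_1}A_{c_2}=0$, since the kernel of $A_{c_1}$ is the orthogonal complement of the column space of $\hat{Z}^{c_1}$.
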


The proof follows the upper and lower bound of $R(\hat{Z}, \epsilon)$ in Lemma A.4 of \cite{yu2020learning}.

\subsection{Proof of the toy case in Section 3.2}
\label{appsec:d3}

In the end of Section 3.2, we present a toy case of a binary classification problem with $\hat{Z} = [\hat{Z}^1,\hat{Z}^2]$ where $\hat{Z}^1 \in \mathbbm{R}^{d \times n/2}$ and $\hat{Z}^2 \in \mathbbm{R}^{d \times n/2}$. The lower and upper bound of the TransRate in this toy case can be derived by Lemma \ref{lem:bound}. But in Sec. 3.2, we derived the bounds in another way. Here, we provide details of the derivation.

By Lemma \ref{lem:commutative}, we have
\begin{align}\label{eq:toy}
% \begin{split}
    R(\hat{Z}, \epsilon)  = & \frac{1}{2}\logdet(I_{2n} + \alpha \hat{Z} ^\top \hat{Z})
    \nonumber\\
     = & \frac{1}{2}\logdet\left(I_{n} + \alpha 
    \begin{bmatrix} (\hat{Z}^1)^\top \hat{Z}^1 & (\hat{Z}^1)^\top \hat{Z}^2 \\ (\hat{Z}^2)^\top \hat{Z}^1 & (\hat{Z}^2)^\top \hat{Z}^2 \end{bmatrix}
    \right)
    \nonumber\\
      =  & \frac{1}{2}\log\det \Big\{(I_{n/2} +\alpha (\hat{Z}^1)^\top \hat{Z}^1 +\alpha (\hat{Z}^2)^\top \hat{Z}^2) \nonumber \\
     &+\alpha \left[(\hat{Z}^1)^\top \hat{Z}^1 (\hat{Z}^2)^\top \hat{Z}^2 - (\hat{Z}^1)^\top \hat{Z}^2 (\hat{Z}^2)^\top \hat{Z}^1\right] \Big\}
    \nonumber\\
     \ge &  \frac{1}{2}\log\det \left \{(I_{n/2} +\alpha (\hat{Z}^1)^\top \hat{Z}^1 +\alpha (\hat{Z}^2)^\top \hat{Z}^2) \right \}  \nonumber \\
     &+ \frac{\alpha}{2} \logdet \left\{ (\hat{Z}^1)^\top \hat{Z}^1 (\hat{Z}^2)^\top \hat{Z}^2 - (\hat{Z}^1)^\top \hat{Z}^2 (\hat{Z}^2)^\top \hat{Z}^1 \right \}
% \end{split}
\end{align}
The first equality comes from Lemma \ref{lem:commutative}; the third equality follows the property of matrix determinant that for any square matrices $A, B, C, D$ with the same size, i.e.,
\begin{equation*}
    \det \begin{pmatrix} A & B \\ C & D \end{pmatrix} = \det(AD - BC).
\end{equation*}
The inequality follows that for any positive definite symmetric matrices $A$ and $B$, $\det(A + B) \ge \det(A) + \det(B)$. 

From Eqn. \eqref{eq:toy}, we know that when $(\hat{Z}^1)^\top \hat{Z}^1$ and $(\hat{Z}^2)^\top \hat{Z}^2$ are fixed, the lower bound of TransRate is related to $\logdet \left\{ \left[(\hat{Z}^1)^\top \hat{Z}^1 (\hat{Z}^2)^\top \hat{Z}^2 - (\hat{Z}^1)^\top \hat{Z}^2 (\hat{Z}^2)^\top \hat{Z}^1\right] \right \}$. The value of this term is determined by $(\hat{Z}^1)^\top \hat{Z}^2$. So the value of TransRate is also determined by $(\hat{Z}^1)^\top \hat{Z}^2$, i.e., the overlap between the two classes. When $\hat{Z}^1$ and $\hat{Z}^2$ are completely the same, this term becomes zero and TransRate achieves its minimal value as $\frac{1}{2}\log\det \left \{(I_{n/2} +\alpha (\hat{Z}^1)^\top \hat{Z}^1 +\alpha (\hat{Z}^2)^\top \hat{Z}^2) \right \}$.
When $\hat{Z}^1$ and $\hat{Z}^2$ are independent, this term achieves its maximal value as $\logdet \left ( (\hat{Z}^1)^\top \hat{Z}^1 (\hat{Z}^2)^\top \hat{Z}^2\right)$ and TransRate achieves its maximal value as well.

\subsection{The Influence of $\epsilon$}
\label{appsec:d4}

In Section 3.2, we mention that the value of $\epsilon$ has minimal influence on the performance of TransRate. Here we provide more details about the influence of the choice of $\epsilon$ on the value of TransRate.

Assume that we scale $\epsilon$ by a positive scaler $\alpha$. After scaling, the value of $R(\hat{Z}, \alpha \epsilon)$ and $R(\hat{Z}, \alpha \epsilon |Y)$ is different from $R(\hat{Z}, \epsilon)$ and $R(\hat{Z}, \epsilon |Y)$. 
By Lemma \ref{lem:invariant}, we have
\begin{equation*}
    R(\hat{Z}, \alpha \epsilon) = \frac{1}{2} \sum_{i=1}^r
    \log(1 + \frac{1}{n \alpha \epsilon} \sigma_i^2).
\end{equation*}
If $\frac{1}{n \alpha \epsilon} \sigma_i^2 \gg 1$, which holds for a sufficiently small $\epsilon$, we have $\log(1 + \frac{1}{n \alpha \epsilon} \sigma_i^2) \approx \log(\frac{1}{n \alpha \epsilon} \sigma_i^2) =  \log(\frac{1}{n \epsilon} \sigma_i^2) -2\log(\alpha)$. Then for sufficiently small $\epsilon$, $\alpha$ and sufficiently large $\sigma_i$, we have $R(\hat{Z}, \alpha 
\epsilon) \approx R(\hat{Z}, %\alpha 
\epsilon) - r \log(\alpha)$ and $R(\hat{Z}, \alpha \epsilon |Y ) \approx R(\hat{Z}, \epsilon | Y) - r \log(\alpha)$. Therefore, the influence of $\alpha$ is nearly canceled in calculating TransRate as we subtract $R(\hat{Z}, \alpha \epsilon)$ by $R(\hat{Z}, \alpha \epsilon |Y)$. 
Though this assumption may not hold in practice, we further verify the influence empirically in Appendix \ref{appsec:b5}.

% \subsection{About the Effect of Dimension}
% \label{appsec:d5}

% As we normalized the feature to have unit norm, it always holds that $\sum_{i=1}^r \sigma_i ^2 = 1$. By Lemma \ref{lem:invariant} and the concavity of the function $\log(1 + a)$ for nonnegative variables $a$, we have
% \begin{equation}
% \begin{split}
%     R(\hat{Z}, \epsilon) & = \frac{1}{2} \sum_{i=1}^r %d
%     \log(1 + \frac{1}{n \epsilon} \sigma_i^2) 
%     \\
%     & \le \frac{r}{2} \log(1 + \sum_{i=1}^r \frac{1}{ r n \epsilon} \sigma_i^2) 
%     \\
%     & = \frac{r}{2} \log(1 +  \frac{1}{ r n \epsilon} )
% \end{split}
% \end{equation}
% The equality holds when $\sigma_i = \sigma_j$ for $1\le i < j \le r$. From the above inequality, we know that the maximal value of $R(\hat{Z}, \epsilon)$ is linear to $r$ which amounts to the dimension $d$ if $d<n$; 
% the same holds for $R(\hat{Z}, \epsilon | Y)$. 
% \kai{The enable comparison of TransRate scores across models with different dimensions, we normalize the TransRate scores by the dimension of the feature of the models.}

\subsection{Time Complexity}
\label{appsec:d6}

When $d<n$ and $d<n_c$, the computational costs of $R(\hat{Z}, \epsilon)$ and $R(\hat{Z}, \epsilon | Y)$ are $\mathcal{O}(d^{2.373} + nd^2)$ and $\mathcal{O}(C d^{2.373} + C n_c d^2)$, respectively. Thus, the total computation cost of TransRate is $\mathcal{O}((C+1)d^{2.373} + 2nd^2)$. By Lemma \ref{lem:commutative}, when $n<d$ or $n_c <d$, we can further reduce the cost of computing $R(\hat{Z}, \epsilon)$ or $R(\hat{Z}, \epsilon | Y)$. Besides, we can implement the computation of $R(\hat{Z}, \epsilon)$ and $R(\hat{Z}^c, \epsilon)$ in parallel, so that we can reduce the computation cost of TransRate to $\mathcal{O}(\min\{d^{2.373} + nd^2, n^{2.373} + dn^2 \})$.

\end{document}